\documentclass[12pt]{article}
\usepackage{amsmath}
\usepackage{amsthm}
\usepackage{amssymb}
\usepackage{dsfont}
\usepackage[utf8]{inputenc}
\usepackage{graphicx}
\usepackage{subcaption}
\usepackage{float}
\usepackage{varioref}

\usepackage{color}
\usepackage{mathrsfs}
\usepackage{float}
\usepackage{soul}
\usepackage{url}   
\usepackage{setspace}

\usepackage[
  style=authoryear,
  natbib=true,
  maxcitenames=2,
  maxbibnames=99,
  backend=biber
]{biblatex}
\renewbibmacro{in:}{}

\usepackage{adjustbox} 
\usepackage{breqn}
\usepackage{comment}
\usepackage{hyperref}
\usepackage{nameref}
\usepackage{cleveref}
\usepackage{booktabs}
\usepackage{graphicx}
\usepackage{multirow}

\usepackage{amsmath}
\allowdisplaybreaks 

\newtheorem{theorem}{Theorem}[section]

\newtheorem{corollary}{Corollary}[section]

\newtheorem{definition}{Definition}[section]

\newtheorem{proposition}{Proposition}[section]
\newtheorem{remark}{Remark}[section]

\makeatother

\newcommand{\X}{\mathbf{X}}

\newcommand{\N}{\mathcal{N}}

\newcommand{\p}{\mathbb{P}}

\newcommand{\A}{A}

\newcommand{\z}{\mathbf{z}}

\newcommand{\y}{\mathbf{y}}
\newcommand{\s}{\mathcal{S}}

\newcommand{\indep}{\mathrel{\perp\!\!\!\perp}}

\crefname{appendix}{Appendix}{Appendices}

\hypersetup{colorlinks=true,
            citecolor=blue,
            filecolor=blue,
            linkcolor=blue,
            urlcolor=blue}

\numberwithin{equation}{section}

\usepackage[left=2.5cm,top=2.5cm,bottom=2.5cm,right=2.5cm,nohead,foot=2cm]{geometry}

\begin{document}

\title{Shapley meets Rawls: an integrated framework for measuring and explaining unfairness}

\author{ Fadoua Amri-Jouidel\thanks{ University Mohammed VI Polytechnic, Rabat, Salé. Corresponding author: Fadoua.JOUIDEL-AMRI@um6p.ma} \\
\textsc{UM6P} \\
\textsc{Airess}
\and Emmanuel Kemel\thanks{$^{{}}$\textsc{Cnrs}, HEC Paris } \\
\textsc{Greghec, Crns} \\ \textsc{HEC Paris}
\and St\'{e}phane
Mussard\thanks{$^{{}}$\textsc{Univ. N\^{i}mes Chrome}, Rue du Dr Georges Salan, 30000 N\^{i}mes ; Email: stephane.mussard@unimes.fr ; Research fellow \textsc{Liser} Luxembourg and University Mohammed VI Polytechnic.} \\
\textsc{Univ. N\^{i}mes Chrome}\\ \textsc{Um6p, Airess} } 

\date{}
\maketitle

\begin{abstract}
Explainability and fairness have mainly been considered separately, with recent exceptions trying the explain the sources of unfairness. 
This paper shows that the Shapley value can be used to both define and explain unfairness, under standard group fairness criteria. This offers an integrated framework to estimate and derive inference on unfairness as-well-as the features that contribute to it. 
Our framework can also be extended from Shapley values to the family of Efficient-Symmetric-Linear (ESL) values, some of which offer more robust definitions of fairness, and shorter computation times.
An illustration is run on  the Census Income dataset from the UCI Machine Learning Repository. Our approach shows that ``Age",  ``Number of hours" and ``Marital status" generate gender unfairness, using shorter computation time  than traditional Bootstrap tests.
\end{abstract}

{\bf Keywords:} Asymptotic test; ESL values ; Explainability ; Fair learning ; Shapley Value ; XAIOR. 

\bigskip

\textbf{Subject classification:} Computers/computer science (artificial intelligence) ; Games / group decisions (cooperative).

\bigskip

\textbf{Area of review:} Machine Learning and Data Science.

\maketitle

\break

\setlength{\baselineskip}{17pt}

\section{Introduction}\label{intro}


Artificial Intelligence (AI) is increasingly being integrated to assist or automate decision making across  various domains such as healthcare (\cite{obermeyer2019dissecting}), hiring (\cite{kochling2020discriminated}), credit (\cite{lee2021algorithmic}), justice (\cite{zavrvsnik2021algorithmic}) and public policy (\cite{pencheva2020big}). While these automated decisions are cost effective and more systematic than human decisions, they also raise ethical concerns. Notorious examples include the COMPAS algorithm (\cite{flores2016false}), accused of producing more false negatives among ethnic minorities, and the Amazon recruitment algorithm reported to generate lower acceptance rates for women (\cite{dastin2022amazon}).

These examples illustrate the need for tools to monitor  unfairness across groups. Monitoring unfair decisions also requires explainability. At the individual level, those who receive a negative outcome deserve an explanation about the reasons for this outcome. At the aggregate level, it is important to quantify how features (\textit{i.e.}, variables) contribute to the outcome of an algorithm in order to assess possible trade-offs between fairness and performance. 
In line with much of the literature, we focus on algorithms that produce categorical outcomes, which we hereafter refer to as classifiers. We study the fairness and explainability of classifiers, two notions that, although complementary, have essentially been considered separately. 

\textit{Group fairness} evaluates whether a classifier yields 
 different outcome distributions across groups defined by sensitive features (\textit{e.g.}, gender, ethnicity).  For instance, (\cite{klare2012face}) evaluate the outcomes of multiple face-recognition classifiers and report disparities between white and non-white women.
The algorithmic fairness literature proposes criteria to either audit  predictions (\cite{barocas2023fairness,charpentier2024insurance, hurlin2024fairness}) and detect unfairness, or to impose "fair learning", \textit{i.e.,} to incorporate fairness constraints during classifier training (\cite{Hardt2016}), consequently allowing for a trade-off between efficiency of classifiers and fairness as is done in \textit{operational research} \citep{DEBOCK2024}. In both cases, the criteria 
consist  of a group level equality condition on certain metrics. Specifically, a given statistical metric related to the outcome distribution should be equal across groups.

Three main fairness criteria are generally considered in the literature (see Section 2.1 for details). Independence \eqref{IND}, also known as statistical parity (\cite{dwork2012fairness}), requires that the expected outcome be equal across groups. In this case, the metric is the group expected outcome, and the fairness criterion is the equality of the metric between groups. This criterion can result in groups with different level of merit receiving the same expected outcome, which may be perceived as unfair. Separability \eqref{SEP}, also known as equalized odds \citep{Hardt2016}, requires that error distributions be equal across groups. In other words, it ensures that, when predicting a given outcome, the classifier yields equal true and false positive rates across groups.
Sufficiency \eqref{SUF}, also called conditional use accuracy (\cite{berk2021fairness}), equalizes reliability across groups. It ensures that for both positive and negative predicted outcomes, the reliability is equal across groups.
 Impossibility results show that one cannot generally satisfy these three criteria simultaneously (\cite{kleinberg2016inherent,chouldechova2017fair}). Choosing one of them is a matter of ethical values. 
 AI fairness thus places regulators and users in a Rawlsian situation where a commonly agreed rule of fairness must be adopted (\cite{rawls1971theory}) and then implemented by institutions.

\textit{Explainable AI (XAI)} is another field in AI ethics that enhances the transparency of algorithms, making their decision-making processes more interpretable and understandable. A particular attention is dedicated to the measurement of the importance of each feature to the decision making outcomes. A popular approach for measuring features contribution is the partial dependent plot (PDP) (\citealt{molnar2020interpretable}). Such plot shows how a model’s predicted outcome changes as one feature varies across its range, averaging over the observed values of all the other features. Another approach builds on the Shapley value \citep{shapley1953}, which attributes a score of importance to each feature. It is an agnostic \textit{attribution method}, meaning that it is independent of the type of algorithm used for classification tasks (classification of images, texts, etc., see \citealt{castro2009,sun2011,ancona2019, arenas2021,Condevaux2022} for applications). Although more computationally demanding, Shapley values offer two key advantages over PDP. First, by considering all possible combinations of variables, they account for the exhaustive list of possible interactions. Second, they allow for a linear decomposition of the feature contributions, which facilitates interpretation. To overcome the exponential computational cost,  \citet{Condevaux2022} proposed alternative attribution methods
such as Fair-Efficient-Symmetric-Perturbation (FESP) and the Equal Surplus value \citep{Driessen1991} which are characterized by  linear complexity, making them convenient alternatives to the Shapley value.

\textit{Group fairness and explainability} have been considered separately in the literature till recently \citet{begley2020explainability} and \citet{pelegrina2024explaining} propose to explain unfairness using Shapley values. The method proposed by \citet{pelegrina2024explaining} applies to different fairness criteria. For example, it could be applied to equality of opportunity \eqref{SEP} between genders. The difference in true positive rates (TPR) between men and women can be computed, then  the contribution of each feature to the gender gap can be measured in the form of Shapley value.

The present paper pushes the connection between fairness criteria and Shapley explainability yet further. We show that the fairness citerion itself can be measured as a difference between Shapley values (Theorem \ref{equiv1.3}). Specifically, we introduce the notion of ``group-Shapley" that measures the contribution of the observations corresponding to a given group (\textit{e.g.}, ``men") of a sensitive feature (\textit{e.g.}, gender) to the overall value of an outcome metric (e.g. accuracy,  acceptance rate, or false positive rate...). Thanks to this result, any unfairness criterion which is defined as group-differences in outcome metrics can also be defined as differences of group-Shapley values. Besides the conceptual elegance of connecting formally Shapley values and fairness criteria, this approach allows to build on the recursive nature of Shapley decompositions. 

Concretely, our method proceeds in two-stages: (i) in the first stage Shapley, we measure unfairness as differences in group-Shapley values across groups; (ii) in the second stage Shapley, we  recursively decompose each group-Shapley value over the features.
This approach first breaks inequality between groups, then breaks inequality within each group over all the features. It corresponds to a standard approach employed in the income inequality literature (\textit{e.g.}, \cite{chantreuil2011}). This can be particularly useful when the sensitive feature contains more than two levels. 
Combined with linearity, recursivity provides convenient properties for explaining the contributions of features to unfairness. 
In our framework, the contribution of each feature to the differences of group-Shapley values is measured as differences of Shapley contributions to each group. Indeed our Theorem \ref{equiv-2} shows that a fairness criterion is satisfied if the Shapley contribution of each feature contributions to each group is equal.

This approach of fairness has two noticeable advantages. The first relates to interpretation, the second relates to inference.  
In terms of interpretation, for each feature and each group, the Shapley contribution informs about the impact of the feature on the outcome of the group. Suppose that the outcome is measured by the acceptance rate, and that feature "income" contributes positively to it for both genders, but to a larger extent for men. Mitigating the role of income will reduce the gap by deteriorating the acceptance rate for both men and women: an inequality reducing, but Pareto deteriorating solution. Suppose instead that another variable, \textit{e.g.}, marital status, has a positive impact for men and a negative impact for women. Here, mitigating the role of this variable reduces the gap by decreasing the acceptance rate of men and increasing the acceptance rate of women. A social planner who is unfairness averse but does not want to deteriorate the outcome of the disadvantaged group may prefer the second solution. 
Another advantage of our approach is that the Shapley contribution of a feature to the gap is defined as a difference of group-Shapley values. This difference can be positive or negative.  In contrast, partial dependence plots, produce only non-negative weights. The second advantage regards with inference. Once the asymptotic properties of the fairness criteria have been defined, they can be applied to the second-stage Shapley. In particular, when the first-stage Shapley value is computed, a second-stage Shapley value is made along features
to explain their importance to each group-Shapley, in the same vein as \citet{pelegrina2024explaining}. However, in our framework, the Shapley value is also used for a second statistical test, \textit{i.e.}, to test whether the contribution of a particular feature is the same along groups (\textit{e.g.}, contribution of education for men \textit{vs}. women). Consequently, the two-stage Shapley provides a test to assess whether a feature contributes significantly to the difference in group-Shapley values and is precisely shown to be asymptotically normal.

Additional results generalize our approach and allow for fast and robust implementation. 
The main limitation of the Shapley value is its exponential time complexity. Also, it can be found to possibly attribute too much importance to irrelevant variables (\cite{marques2024explainability}). Faster and more robust attribution procedures can be needed. We extend our Theorem \ref{equiv-2} to the more general Efficient-Linear-Symmetric (ESL) family of values, which contains the Shapley value and the Equal Surplus value, among others. The Equal Surplus value has a linear time complexity, whereas most of the other members of the ESL family exhibit exponential time complexity.
Considering all ESL values enables a more robust attribution procedure based on majority voting \footnote{ Majority voting is well established in social choice theory \citep{young1988condorcet}, and was  latter  adopted in machine learning to combine multiple classifier decisions \citep{battiti1994democracy,lam1997application, mienye2022survey}.}. Taking five members of the ESL value, we compute feature contributions under each member separately, and classify a feature as unfair (or fair) when a majority of members indicate unfairness (of fairness).

We illustrate our framework on the Census Income dataset from the UCI Machine Learning Repository \citep{census_income_20}, where ``men" and ``women" constitute the two groups of the sensitive feature. We show that all members of the ESL family agree with group unfairness (first-stage ESL). In terms of explainability, most of ESL values show that women are penalized by age, and Marital status is the most significant feature that leads to unfairness (two-stage ESL). These results suggest that removing Marital status would lead to an increase in the unfairness gap. Regarding computation time, the empirical application illustrates the tractability of our framework. Inference derived from our asymptotic formula provides results that are consistent with bootstrapping, but with shorter computation time. 

The paper is organized as follows. Group fairness and Shapley explainability are presented in Section \ref{prelims}, which can be skipped by readers familiar with this literature. Section \ref{FER} describes the  equivalence between group fairness and explainability with the proposed two-stage Shapley. Section \ref{ULES} presents the generalization to all ESL values.  
In  Section \ref{EXAMPLE}, an application is performed on Census Income data. Section \ref{conclusion} concludes the paper. 

\section{Group fairness and explainability: separate literature streams}\label{prelims}

We briefly summarize two seemingly independent literature streams. The first focuses on group fairness  criteria for classifiers with respect to sensitive features (\textit{e.g.}, gender, ethnicity). The second examines attribution methods used for explainability. These methods assign to each feature a contribution to a specific metric (\textit{e.g.}, contribution to the classification probability or to the goodness of fit of the classifier).

\subsection{Group fairness criteria}\label{sect:group_fairness}
 
Group fairness criteria fall into three categories, namely independence \citep{dwork2012fairness}, separation \citep{Hardt2016} and sufficiency \citep{berk2021fairness}.\footnote{See  \citet{Charpentier2023} for a comprehensive literature review.} Each criterion addresses a particular aspect of the classifier outcome to ensure equal treatment across groups defined by the sensitive feature. We introduce these criteria for a binary classifier. Let  $(X,A,Y)$ be a random tuple with distribution $\mathbb{P}$, where 
$Y \in \{0.1\}$ is the actual observed outcome used for fairness assessment, $X \in \mathcal{X}\subseteq\mathbb{R}^N$ is a collection of features, and $\A=\{0.1\}$ is a binary sensitive feature. We consider a binary classification function  $f_t$ that maps  $\mathcal{X}\times \{0.1\}$ to $\{0.1\}$, based on a given threshold $t \in(0.1)$: 

\[
\hat{Y} = f_t(X, A) =
\begin{cases} 
1 & \text{if } \mathbb{P}[Y = 1 \mid X , A ] \geq t \\
0 & \text{if } \mathbb{P}[Y = 1 \mid X , A ] < t
\end{cases}
\]

In what follows, $\indep$ denotes independence or conditional independence between random variables. The three following definitions are given by \citep{barocas2023fairness}.

\begin{definition}\textbf{\emph{Independence:}}  For a binary classifier $f_t$, independence corresponds to $\hat{Y}\indep A$, or equivalently,
\begin{equation}\tag{IND}\label{IND}
\p[\hat{Y} = 1 \ | A = 0] = \p[\hat{Y} = 1 \ | A =1] 
\end{equation}
where $\p[\hat{Y} = 1 \ | A = g]$ correspond to the selection rate (SR) for group g.

\end{definition}
The independence criterion, also called statistical parity (\cite{dwork2012fairness}) or demographic parity (\cite{kusner2017counterfactual}), requires that the probability of receiving a favorable outcome (\textit{e.g.,}  loan approval, being hired)  be equal across groups. 
However, enforcing independence can yield situations
in which groups with different levels of ``merit" (\textit{e.g.,} qualifications) receive the same expected outcome. To address this issue, \citet{corbett2017algorithmic}  propose conditional statistical parity as a relaxation of demographic parity.

\begin{definition}\textbf{\emph{Separation:}} For a binary classifier $f_t$, separation corresponds to $\hat{Y} \indep A|Y$, or equivalently,
\begin{equation}\tag{SEP}\label{SEP}
\begin{aligned}
    &\p[\hat{Y} = 1 \ | \ Y = 1, A = 0] = \p[\hat{Y} = 1 \ | \ Y = 1, A = 1]   \\
    &\p[\hat{Y} = 1 \ | \ Y = 0. A = 0] = \p[\hat{Y} = 1 \ | \ Y = 0. A = 1] 
\end{aligned}
\end{equation}
where $\mathbb{P}(\hat Y = 1 \mid Y = 1, A=g)$ and 
$\mathbb{P}(\hat Y = 1 \mid Y = 0. A=g)$ correspond to the true positive rate (TPR) 
and false positive rate (FPR) for group $g$, respectively.

\end{definition}
The separation criterion, also referred  to as equalized odds (\cite{Hardt2016}), focuses on balancing error rates across groups. In other words, it ensures that the classifier is invariant to the sensitive feature when predicting a given class, with respect to both true positive and false positive rates. The  fair learning literature  also refers to the equality of opportunity criterion (\cite{Hardt2016}), a relaxation of  separation that requires equality of true positive rates across groups. This criterion formalizes that, conditionally on the true label (\textit{i.e.,} among those who qualify), individuals should have equal chances of a positive prediction. It aligns with rewarding merit and effort by evaluating parity among those who are  truly qualified for a positive outcome.

\begin{definition}\textbf{\emph{Sufficiency:}} For a binary classifier $f_t$, sufficiency  corresponds to $Y \indep A|\hat{Y}$, or equivalently,
\begin{equation}\tag{SUF}\label{SUF}
\begin{aligned}
\p[Y = 1 \ |\hat{Y}=1, A = 0] = \p[Y= 1 \ |\hat{Y}=1, A = 1]  \\
\p[Y = 1 \ |\hat{Y}=0. A = 0] = \p[Y= 1 \ |\hat{Y}=0. A = 1]
\end{aligned}
\end{equation}
where $\p[Y = 1 \ |\hat{Y}=1, A = g]$ and 
$\p[Y = 1 \ |\hat{Y}=0. A = g]$ correspond to the positive predictive value (PPV) and negative predictive value (NPV) for group $g$, respectively.
\end{definition}

 Sufficiency, also known as conditional use accuracy (\cite{berk2021fairness}), balances the precision of prediction across groups. It requires that both positive predictive values and negative predictive values be equal across groups. A relaxation of this criterion is predictive parity \citep{chouldechova2017fair}, which requires equality  of  positive predictive values across groups.

\subsection{Explainability with the Shapley value}\label{section-explainability}

The  cooperative game theory literature has paved the way for measuring  disparities in outcomes across groups, with a special focus on income inequality decomposition, and inequality-reducing policies \citet{chantreuil2011,Chantreuil2013,Chantreuil2019,takeng2023, Fourrey2023}. Within  this literature, the Shapley value \citep{shapley1953}, a concept from cooperative game theory, serves as a classic tool to quantify how different factors (\textit{e.g.,} education, gender) contribute to overall  income inequality. 

The computer science literature has also adopted  the Shapley value as a standard tool for XAI. In particular, it is used to explain model predictions by attributing to each feature its contribution  to the outcome. These attribution methods\footnote{ Attribution methods  provide either global or local explanations: global explainability focuses on understanding the model behavior over the entire dataset, whereas local explainability aims to understand individual predictions.} quantify feature  importance, \textit{i.e,} how they drive the predictions  \citep{lundberg2017,ancona2019,linardatos2021}. Within this framework, \citet{pelegrina2024shapley} use the Shapley value to attribute each feature's contribution
to  predictive performance, 
such as the  true positive rate (TPR), the false positive rate (FPR)... In what follows, we use Shapley and ESL values more generally for explainability (ESL values will be introduced later in Section \ref{ULES} to generalize our approach).

In cooperative game theory, the Shapley value is defined as an attribution method (or allocation rule) that allocates to each player a unique share of the total payoff, based on their marginal contributions  across  all possible coalitions. A game is characterized by the pair \( (\mathcal{N}, v) \), where $\mathcal{N}=\{1,\ldots,N\}$ is  the set of $N$ players and 
\( v: 2^{\mathcal{N}} \rightarrow \mathbb{R}\) is the characteristic function  assigning a worth to each coalition \( \s_{\mathcal N} \subseteq \mathcal{N} \). 
In our context, each feature $X_k$ of the the random input vector $\mathbf{X} := [X_1, \ldots, X_N] \in \mathbb{R}^N$  is  a ``player", where the total number of features is \( N = |\mathcal{N}| \). A coalition refers to a subset of features.  For any subset \( \mathcal{S}_{\mathcal{N}} \subseteq \mathcal{N} \), the vector \( \mathbf{X}(\mathcal{S}_{\mathcal{N}}) = (X_i, \dots, X_s) \) contains only  the features  in \( \mathcal{S}_{\mathcal{N}} \), while  features  in \( \mathcal{N} \setminus \mathcal{S}_{\mathcal{N}} \) are dropped. The total payoff $v(\mathcal{N})$ is the metric to be explained using the full set of features. The goal is to allocate  $v(\mathcal{N})$ across features based on their contributions. In  cooperative game theory, a common convention is to set $v(\emptyset)=0$, when the coalition is empty. However, in explainability settings, having an empty feature set does not directly translate to  zero payoffs. Instead  \( v(\emptyset) \) is set to a baseline. For  binary classification, a natural  baseline is the prior class probability $ v(\emptyset) = \mathbb{P}(Y = 1)$ \citep{pelegrina2024shapley}, ensuring that the baseline used in the Shapley value reflects the class distribution.

To evaluate the marginal contribution of  feature \( X_k \) to a subset \( \mathcal{S}_{\mathcal{N}} \), we consider the augmented subset \( \mathcal{S}_{\mathcal{N}} \cup \{k\} \) and  the corresponding feature vector  \( \mathbf{X}(\mathcal{S}_{\mathcal{N}} \cup \{k\}) = (X_i, \dots, X_s, X_k) \).  In order to quantify the marginal impact of feature \( x_k \), the marginal contribution of \( x_k \) to a subset \( \s_{\mathcal N} \) is expressed as $v(X(\s_{\mathcal N} \cup \{k\})) - v(X(\s_{\mathcal N}))$. An attribution method is said to be \textit{marginalist} if it assigns to each feature a payoff that depends on its average marginal contributions referred as feature contribution. All feature contributions are grouped into the payoff vector \( \varphi(\mathbf{X}, v) \): 
$$ 
\varphi(\mathbf{X}, v) = [\varphi_1(\mathbf{X}, v), \dots, \varphi_N(\mathbf{X}, v)] \in \mathbb{R}^N 
$$ 
The Shapley value  for feature $k \in \mathcal{N}$ is  defined by,
\begin{equation}\label{feature-explain}
\varphi^{Sh}_k(\mathbf{X}, v) := \sum_{\s_{\mathcal{N}} \subseteq \mathcal{N} \setminus \{k\}} P(\s_{\mathcal{N}}) \Big(v(X(\s_{\mathcal{N}}\cup \{k\})) - v(X(\s_{\mathcal{N}})) \Big)
\end{equation}
 where $P(\s_{\mathcal{N}}):=(N-S-1)!S! / N!$, with $S := |\s_{\mathcal{N}}|$. Each $\varphi_k^{Sh}$ represents feature  \( X_k \) contribution for the metric being explained. Note that  $\varphi_{k}^{Sh}(X, v)$ can be either positive or negative, depending on the choice of the characteristic function $v$. The Shapley value satisfies four axioms \footnote{The standard Shapley axioms are: \textit{efficiency}, if $\sum_{k \in N} \varphi_{k}^{Sh}(\mathbf{X}, v) = v(X(\mathcal{N}))- v(X(\emptyset))$ ;  \textit{symmetry}, if $\varphi_{\pi (k)}^{Sh}(\mathbf{X},\pi v) = \varphi_{k}^{Sh}(\mathbf{X}, v)$ for every permutation $\pi$ on $\mathcal{N}$ (for all $k\in N$) ; \textit{additivity}, if $\varphi_{k}^{Sh}(\mathbf{X}, v + w) 
    = \varphi_{k}^{Sh}(\mathbf{X},v) + \varphi_{k}^{Sh}(\mathbf{X},w)$ 
    for all games $(\mathcal{N},v)$ (for all $k\in N$) ;  \textit{dummy player property}, if $\varphi_{k}^{Sh}(\mathbf{X},v) = v(X(\emptyset))$, 
    such that the player $i$ is a dummy in the game $( \mathcal{N},v )$, 
    \textit{i.e.},\ $v(X(\s_{\mathcal{N}}\cup \{i\})) - v(X(\s_{\mathcal{N}})) = v(X(\emptyset))$ for all $\s_{\mathcal{N}} \subseteq \mathcal{N} \setminus \{i\}$.} \citep{shapley1953} \textcolor{blue}
and  have been widely invoked to justify the use of Shapley values in explainability \citep{strumbelj2010efficient}.

\section{Shapley for Fairness and Explainability}\label{FER}

In this section, we introduce a Shapley based framework that connects  group fairness and explainability. We first present our main theorem on group-Shapley which establishes a link between fairness criteria and the Shapley value. We then extend this result by introducing a decomposition that both tests whether a given fairness criterion holds and attributes any detected disparities to specific features. This yields a direct equivalence between group fairness criteria and explainability, thereby  unifying two concepts that are often studied in isolation.

\subsection{Group level explainability}\label{section3}

Let $A$ be a sensitive feature (\textit{e.g.}, gender), with two levels, called groups\footnote{The setup can be adapted to a sensitive feature  with more than two levels.}. Denote the set of groups by $\mathcal{A} = \{1,2\}$, where each group $g \in \mathcal{A}$ corresponds to the subset of observations associated with sensitive feature level  (\textit{e.g.}, men, women). Let $ \s_{\mathcal{A}}\in \{ \{\emptyset\}, \{1\},\{2\},\{1,2\}\}$ denote a coalition of groups. With slight language abuse, we call coalition both the sets of levels of the sensitive feature, and the subset of data associated to these levels.
For a feature set $\mathcal{N}$ and a coalition $ \s_{\mathcal{A}}$, the feature vector restricted to $\mathcal{S}_{\mathcal{A}}$, 
is denoted  $ X \big|_{\mathcal{S}_{\mathcal{A}}}(\mathcal{N})$. An empty coalition $\emptyset$ represents the situation in which no group (hence, no data) is included. This is similar to explainability, where the empty coalition corresponds to the absence of any feature.

Let $m$ denote a classification \textbf{m}etric such as selection rate (SR), true positive rate (TPR), false positve rate (FPR), positive predictive value (PPV) and negative predictive value (NPV). Let $v^{(m)}:2^{\mathcal{A}} \rightarrow [0.1]$ be the characteristic function associated with $p$. 
Since the objective is to explain the contribution of each group to $v^{(m)}$ evaluated on the full dataset $X \big|_{\mathcal{A}} (\mathcal{N})$, we define $v^{(m)}(X \big|_{\mathcal{A}} (\mathcal{N}))$ as the  relative deviation of the classification metric from the baseline defined by a random classifier. The the following  indicator function
is introduced:
\[
\mathds{1}_{|\s_{\mathcal{A}}|}  =
\begin{cases}
1 & \text{if } |\s_{\mathcal{A}}| =0  \\
0 & \text{otherwise} .
\end{cases}
\]
This ensures that $v^{(m)}(X \big|_{\{\emptyset\}} (\mathcal{N}))=0$ remains \emph{null} as in standard cooperative games.\footnote{ Since the empty coalition corresponds to the baseline case of no information, setting its value to zero insures that the contribution sums to the total payoff, \textit{i.e.}, the efficiency axiom holds.
}

Consistent with the case of feature explainability, the baseline value of a random binary classifier $\mathbb{P}(\hat{Y} = 1 \mid A \in \emptyset)$ is set as follows:  if  the outcome distribution is uniform, the baseline is set to 0.5.  Otherwise, it is set to the overall proportion of the positive class $\mathbb{P}(Y = 1)$.
Formally, the classifier trained on the combined dataset (for both men and women) is evaluated using the characteristic function  $v^{(SR)}$ defined as the relative selection rate of the classifier
\begin{align*}
 v^{(SR)}(X \big|_{\s_{\mathcal{A}}} (\mathcal{N})) &= 
\frac{\mathbb{P}(\hat{Y} = 1 \mid A \in \s_{\mathcal{A}})}{\mathbb{P}(\hat{Y} = 1 \mid A \in \emptyset)}(1-\mathds{1}_{|\s_{\mathcal{A}}|}) 
\end{align*}
Hence, $v^{(SR)}(X \big|_{\s_{\mathcal{A}}}  (\mathcal{N}))$   measures the allocation of positive outcome  when including $\s_{\mathcal{A}}$ relative to the baseline.  A higher selection rate is viewed as either a favorable (\textit{e.g.}, loan approval) or undesirable outcome (\textit{e.g.}, recidivism) depending on the context. 
Notice that having  $v^{(SR)}(X \big|_{\s_{\mathcal{A}}}  (\mathcal{N}))>1$,  indicates that 
the classifier assigns more positive predictions than the  baseline.  Conversely,  $v^{(SR)}(X \big|_{\s_{\mathcal{A}}}  (\mathcal{N}))<1$  suggests that  it assigns fewer positive predictions.

Similarly, let $v^{(TPR)},v^{(FPR)},v^{(PPV)}$ and $v^{(NPV)}$  be the relative performance rate of the classifier:
\begin{align*}
 v^{(TPR}(X \big|_{\s_{\mathcal{A}}} (\mathcal{N})) &=  
\frac{\mathbb{P}(\hat{Y} = 1 \mid Y=1, A \in \s_{\mathcal{A}})}{\mathbb{P}(\hat{Y} = 1 \mid Y=1, A \in \emptyset)}(1-\mathds{1}_{|\s_{\mathcal{A}}|})  \\
 v^{(FPR)}(X \big|_{\s_{\mathcal{A}}} (\mathcal{N})) &=  
\frac{\mathbb{P}(\hat{Y} = 1 \mid Y=0. A \in \s_{\mathcal{A}})}{\mathbb{P}(\hat{Y} = 1 \mid Y=0. A \in \emptyset)}(1-\mathds{1}_{|\s_{\mathcal{A}}|})  \\
  v^{(PPV)}(X \big|_{\s_{\mathcal{A}}} (\mathcal{N})) 
&= 
\frac{\mathbb{P}(Y = 1 \mid \hat{Y}=1, A \in \s_{\mathcal{A}})}{\mathbb{P}(Y = 1 \mid \hat{Y}=1, A \in \emptyset)}(1-\mathds{1}_{|\s_{\mathcal{A}}|})\\
  v^{(NPV)}(X \big|_{\s_{\mathcal{A}}} (\mathcal{N})) 
&= 
\frac{\mathbb{P}(Y = 1 \mid \hat{Y}=0. A \in \s_{\mathcal{A}})}{\mathbb{P}(Y = 1 \mid \hat{Y}=0. A \in \emptyset)}(1-\mathds{1}_{|\s_{\mathcal{A}}|})
\end{align*}
In this case, $v^{(TPR)}(X \big|_{\s_{\mathcal{A}}}  (\mathcal{N}))$  measures 
the ability of the classifier to detect true positives, while $v^{(FPR)}(X \big|_{\s_{\mathcal{A}}}  (\mathcal{N}))$ captures false positives predictions. When $v^{(TPR)}(X \big|_{\s_{\mathcal{A}}} (\mathcal{N}))>1$, the classifier ability to capture true positives improves when including $\s_{\mathcal{A}}$ relative to the baseline, while $v^{(TPR)}\big|_{\s_{\mathcal{A}}}  (\mathcal{N}))<1$ reflects how  the classifier is missing true positives under the same group. Conversely, $v^{(FPR)}>1$ indicates an  over prediction of false positives when including   $\s_{\mathcal{A}}$ relative to the baseline,  which is undesirable and can be costly (\textit{e.g.}, in the context of judicial or medical decisions), while $v^{(NPV)}(X \big|_{\s_{\mathcal{A}}}  (\mathcal{N}))<1$ is often desirable.  On the other hand, $v^{(PPV)}(X \big|_{\s_{\mathcal{A}}}  (\mathcal{N}))$ and $v^{(NPV)}(X \big|_{\s_{\mathcal{A}}}  (\mathcal{N}))$ captures the change in  PPV  and  NPV, respectively. As such, when $v^{(PPV)}(X \big|_{\s_{\mathcal{A}}}  (\mathcal{N}))>1$, precision is high when including $\s_{\mathcal{A}}$ relative to the baseline, while it is low when $v^{(PPV)}(X \big|_{\s_{\mathcal{A}}}  (\mathcal{N}))<1$. However, having $v^{(NPV)}(X \big|_{\s_{\mathcal{A}}}  (\mathcal{N}))>1$ reflects how the classifier under-performs in predicting true negatives when including $\s_{\mathcal{A}}$ relative to the baseline, while it is doing better in predicting them when $v^{(NPV)}(X \big|_{\s_{\mathcal{A}}}  (\mathcal{N}))<1$. Finally,  having $v^{(m)}(X \big|_{\s_{\mathcal{A}}}(\mathcal{N}))=1$ refers to the situation where the classifier does not differ from a random classifier.

Following the literature on income inequality decomposition \citep{Chantreuil2013,Chantreuil2019},  
the Shapley value is used to decompose the overall ratio $v^{p}(X \big|_{\mathcal{A}} (\mathcal{N}))$ in order to  quantify each group's average marginal contribution to it. According to the chosen definition of  $v^{(m)}$, the Shapley value associated with group  $g\in \mathcal{A}$, \textit{i.e.}, the group-Shapley, is defined as:

\begin{equation}\label{subpopulation-explain}
\varphi^{Sh}_{g}(\mathbf{X}, v^{(m)}):=\sum_{ g\in  \mathcal{A}  \setminus \s_{\mathcal{A}} } P(\s_{\mathcal{A}}) \Big(v^{(m)}(X \big|_{\s_{\mathcal{A}} \cup \{g\}} (\mathcal{N})
) - v^{(m)}(X \big|_{\s_{\mathcal{A}}} (\mathcal{N})) \Big)
\end{equation}
where $P(\s_{\mathcal{A}}):=(a-s-1 )!s! / a!$, with 
 $a:=|\mathcal{A}|=2$ and $s := |\s_{\mathcal{A}}|$.

In what follows, we refer to $\varphi^{Sh}_{g}$ as the group-Shapley. As in the case of feature explainability Eq.\eqref{feature-explain}, $\varphi^{Sh}_{g}$ satisfies the efficiency axiom, \textit{i.e.,} $\sum_{g=1}^{a}\varphi^{Sh}_{g}(\mathbf{X}, v^{(m)})=v^{(m)}(X\big|_{\mathcal{A}}(\mathcal{N}))$. 

In the sequel, Shapley explainability is shown to be equivalent to group
fairness criteria.

\subsection{Equivalences: First-stage Shapley}

We first explain how each group impacts the behavior of the classifier relative to the baseline. In the  binary case, let us take $\mathcal{A} := \{1,2\}$, where $\{1\}$ corresponds to  the subset of data associated with men and $\{2\}$ to the subset associated with women. This leads to a situation with four cases (like in standard 2-player games), where each case is obtained by filtering rows according to the sensitive feature:

$X\big|_{\s_{\mathcal{A}}=\emptyset} (\mathcal{N})$:  All the groups (\textit{i.e.}, all the observations) are excluded. Since there is no pattern to learn from the data, the output of a random classifier is used. 

$ X \big|_{\s_{\mathcal{A}}=\{1\}} (\mathcal{N})$: 
The classifier is evaluated on the men group (\textit{i.e.,} the subset of data associated to men). 

$  X \big|_{\s_{\mathcal{A}}=\{2\}} (\mathcal{N})$: 
The classifier is evaluated on the women group (\textit{i.e.,}  the subset of data associated to women).

$X \big|_{\mathcal{A}} (\mathcal{N})$:
The classifier is evaluated on the union of both groups.\\
Then, the Shapley value is used to understand how each group drives the classification metric with respect to the baseline. In the following theorem, group explainability via the Shapley value is shown to be equivalent to group fairness. 

\begin{theorem}[Group Fairness $\Leftrightarrow$ Shapley]\label{equiv1.3}
Let $f_t$ be a binary  classifier and $\varphi^{Sh}$ the Shapley value. Equivalences:
\newline \emph{(i)}  $f_t$  respects \emph{(\ref{IND})} 
\ $\Leftrightarrow$ \ $\varphi_{g=1}^{Sh}(\mathbf{X}, v^{(SR)}) = \varphi_{g=2}^{Sh}(\mathbf{X}, v^{(SR)})$
\newline \emph{(ii)} $f_t$ respects \emph{(\ref{SEP})} $\Leftrightarrow \left\{\begin{aligned}
\varphi_{g=1}^{Sh}\!\bigl(\mathbf{X},\allowbreak v^{(TPR)}\bigr)
&= \varphi_{g=2}^{Sh}\!\bigl(\mathbf{X},\allowbreak v^{(TPR)}\bigr)
\\[-2pt]
\varphi_{g=1}^{Sh}\!\bigl(\mathbf{X},\allowbreak v^{(FPR)}\bigr)
&= \varphi_{g=2}^{Sh}\!\bigl(\mathbf{X},\allowbreak v^{(FPR)}\bigr)
\end{aligned}\right.$
\newline \emph{(iii)} $f_t$ respects \emph{(\ref{SUF})} $\Leftrightarrow$ 
$
\left\{
\begin{aligned}
\varphi_{g=1}^{Sh}\!\bigl(\mathbf{X},\allowbreak v^{(PPV)}\bigr)
&= \varphi_{g=2}^{Sh}\!\bigl(\mathbf{X},\allowbreak v^{(PPV)}\bigr)
\\[-2pt]
\varphi_{g=1}^{Sh}\!\bigl(\mathbf{X},\allowbreak v^{(NPV)}\bigr)
&= \varphi_{g=2}^{Sh}\!\bigl(\mathbf{X},\allowbreak v^{(NPV)}\bigr).
\end{aligned}\right.
$

\end{theorem}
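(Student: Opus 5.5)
The plan is to compute the group-Shapley values explicitly in the two-player game $(\mathcal{A}, v^{(m)})$ and observe that their difference is a positive multiple of the difference of the group metrics. With $a=2$, the weights are $P(\emptyset)=P(\{h\})=1/2$. Using $v^{(m)}(X|_{\emptyset}(\mathcal{N}))=0$ (ensured by the indicator $\mathds{1}_{|\s_{\mathcal{A}}|}$), formula \eqref{subpopulation-explain} gives, for $g\neq h$,
\[
\varphi^{Sh}_{g}(\mathbf{X},v^{(m)})=\tfrac12\, v^{(m)}(X|_{\{g\}}(\mathcal{N}))+\tfrac12\Big(v^{(m)}(X|_{\mathcal{A}}(\mathcal{N}))-v^{(m)}(X|_{\{h\}}(\mathcal{N}))\Big).
\]
Subtracting the analogous expression for $h$ cancels the grand-coalition term:
\[
\varphi^{Sh}_{1}(\mathbf{X},v^{(m)})-\varphi^{Sh}_{2}(\mathbf{X},v^{(m)})=v^{(m)}(X|_{\{1\}}(\mathcal{N}))-v^{(m)}(X|_{\{2\}}(\mathcal{N})).
\]
This identity is the core of the proof, and it is precisely where the two-player structure and the normalization $v(\emptyset)=0$ are used.

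Next, each singleton value is unpacked. For every metric $m\in\{SR,TPR,FPR,PPV,NPV\}$, $v^{(m)}(X|_{\{g\}}(\mathcal{N}))$ equals the group-$g$ conditional probability defining $m$ (for instance $\mathbb{P}(\hat Y=1\mid A=g)$ for SR, or $\mathbb{P}(\hat Y=1\mid Y=1,A=g)$ for TPR), divided by the common baseline $c_m=\mathbb{P}(\cdot\mid A\in\emptyset)$. This baseline is $0.5$ or $\mathbb{P}(Y=1)$, so it is strictly positive and does not depend on $g$. Hence the difference above equals $c_m^{-1}$ times the difference of the group metrics, and it vanishes if and only if those metrics coincide.

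Finally, the equivalences follow by applying this to the definitions. For (i), \eqref{IND} is exactly the equality of SR across groups. For (ii), \eqref{SEP} is the conjunction of TPR equality and FPR equality, each of which is equivalent to the corresponding Shapley equality. For (iii), \eqref{SUF} is handled the same way with PPV and NPV; the paper's $v^{(NPV)}$ is written with $\mathbb{P}(Y=1\mid\hat Y=0,\cdot)$, which is exactly the second line of \eqref{SUF}.

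The main obstacle is essentially definitional rather than technical. One must check that the baseline denominator is a positive constant independent of the coalition, so that dividing by it preserves equivalence. One must also check that each conditional probability is well defined, i.e. the conditioning events have positive probability for each group. I would state these as standing assumptions, and then the algebra goes through directly.
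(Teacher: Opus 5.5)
Your proposal is correct and follows the paper's proof. Both expand the two-player Shapley value and reduce $\varphi^{Sh}_{1}-\varphi^{Sh}_{2}$ to $v^{(m)}(X|_{\{1\}}(\mathcal{N}))-v^{(m)}(X|_{\{2\}}(\mathcal{N}))$ (the paper does this implicitly), then use the common positive baseline to pass to equality of the group metrics; your explicit identity gives both directions at once, where the paper calls the converse ``easy to show,'' and one minor correction is that the $v^{(m)}(X|_{\emptyset}(\mathcal{N}))$ terms cancel in the difference whatever their value, so the normalization $v(\emptyset)=0$ is not actually needed for that identity.
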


\begin{proof}
    See the Appendix \ref{Appendix-A}. 
\end{proof}

Theorem \ref{equiv1.3} states that group fairness can be achieved by ensuring similar group-Shapley across levels.  In this sense, the Shapley value respects some principles of ``equality".  Consequently, the Shapley value can offer a direct statistical test (see Section \ref{EXAMPLE}) to check group fairness in terms of  \eqref{IND}, \eqref{SEP} or \eqref{SUF}, depending on the application at hand. When the hypothesis of group fairness is rejected,  the difference in group-Shapley across groups,  $\Delta \varphi^{Sh}_{g}(\mathbf{X}, v^{(m)}) := \varphi_{g=1}^{Sh}(\mathbf{X}, v^{(m)})-\varphi_{g=2}^{Sh}(\mathbf{X}, v^{(m)})$, reveals  how $v^{(m)}(X \big|_{\mathcal{A}} (\mathcal{N}))$ is impacted by a specific group.

Analogously to the literature where  Shapley-based decompositions by subgroups are used to identify the drivers of overall inequality (\textit{e.g.}, \citealt{Chantreuil2013}), the group-Shapley indicates that the disparity arises because the men's group receives larger values of $v^{(m)}$ than women's group. In other words, the classifier's outcome favors men's group and thereby reinforces existing disparities. Theorem \ref{equiv1.3} shows that group-Shapley can be used as a tool for testing group fairness, hence, establishing an equivalence between group fairness and group explainability.

\subsection{Equivalences: Two-stage Shapley}\label{subss3}

Theorem \ref{equiv1.3} is restricted to group fairness.  We now build on the explainability literature on feature \emph{contributions}.  In particular, we extend our group-Shapley with a two-stage Shapley decomposition, which allows us to evaluate the contribution of each feature to group unfairness.

The two-stage Shapley value (also called nested Shapley value) was introduced by \citet{Chantreuil1999,Chantreuil2013} to decompose income inequality indices. We adapt this nested decomposition to our setting in order to decompose each group-Shapley into  feature contributions. Specifically, we denote  $\mathcal{C}^k_{g}(v^{(m)})$ the contribution of feature 
$X_k$
 to the group-Shapley  of  $g$ in $\mathcal{A}$.

For all \( \mathcal{S}_{\mathcal{N}} \subseteq \mathcal{N} \), let $X \big|_{\s_{\mathcal{A}}} (\mathcal{S_{\mathcal{N}}})$  be the vector of features in  $\mathcal{S_{\mathcal{N}}}$ restricted to $\s_{\mathcal{A}}$.
Formally, the two-stage Shapley is obtained as follows:
\begin{itemize}
    \item In the  first stage, group-Shapley $\varphi_{g}^{Sh}(\mathbf{X}, v^{(m)})$ is derived (Eq.\eqref{subpopulation-explain}).
    \item In the second stage, the characteristic function is  $\varphi_{g}^{Sh}(\mathbf{X}(\s _{\mathcal{N}}), v^{(m)})$, with  $\varphi_{g}^{Sh}(\emptyset, v^{(m)})=0$.  For each group $g$ in $\mathcal{A}$,  the Shapley value is applied to each group-Shapley $\varphi_{g}^{Sh}(\mathbf{X}, v^{(m)})$ to capture the explainability of each feature $k$  \textit{i.e.,} $\mathcal{C}^k_{g}(v^{(m)})=\varphi_{k}^{Sh} \circ \varphi_{g}^{Sh}(\mathbf{X}, v^{(m)})$  (Eq.\ref{feature-explain}). By efficiency of the Shapley value, these contributions add up to the group-Shapley:
$\varphi_{g}^{Sh}(\mathbf{X}, v^{(m)})=\sum_{k} \mathcal{C}^k_{g}(v^{(m)})$.
\end{itemize}

\break
\begin{proposition}[Two-stage Shapley]\label{equiv-2}
Let $f_t$ be a binary classifier. For all $g\in\mathcal A$ and for all $k\in\mathcal N$,
the two-stage Shapley value $\varphi_{k}^{\mathrm{Sh}}\!\circ\!\varphi_{g}^{\mathrm{Sh}}(\mathbf{X}, v^{(m)})$
yields the following equivalences:
\[
\begin{aligned}
\emph{(i)}\quad & f_t \text{ respects \eqref{IND}}
&&\iff&&
\mathcal{C}^k_{g}\!\bigl(v^{(SR)}\bigr)
= \tfrac{1}{2}\,\varphi_{k}^{Sh}\!\bigl(\mathbf{X}, v^{(SR)}\bigr),
\\[4pt]
\emph{(ii)}\quad & f_t \text{ respects \eqref{SEP}}
&&\iff&&
\left\{
\begin{aligned}
\mathcal{C}^k_{g}\!\bigl(v^{(TPR)}\bigr)
&= \tfrac{1}{2}\,\varphi_{k}^{Sh}\!\bigl(\mathbf{X}, v^{(TPR)}\bigr),\\
\mathcal{C}^k_{g}\!\bigl(v^{(FPR)}\bigr)
&= \tfrac{1}{2}\,\varphi_{k}^{Sh}\!\bigl(\mathbf{X}, v^{(FPR)}\bigr),
\end{aligned}
\right.
\\[4pt]
\emph{(iii)}\quad & f_t \text{ respects \eqref{SUF}}
&&\iff&&
\left\{
\begin{aligned}
\mathcal{C}^k_{g}\!\bigl(v^{(PPV)}\bigr)
&= \tfrac{1}{2}\,\varphi_{k}^{Sh}\!\bigl(\mathbf{X}, v^{(PPV)}\bigr),\\
\mathcal{C}^k_{g}\!\bigl(v^{(NPV)}\bigr)
&= \tfrac{1}{2}\,\varphi_{k}^{Sh}\!\bigl(\mathbf{X}, v^{(NPV)}\bigr).
\end{aligned}
\right.
\end{aligned}
\]
\end{proposition}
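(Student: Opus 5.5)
The plan is to reduce Proposition~\ref{equiv-2} to Theorem~\ref{equiv1.3}, using two structural facts: the second-stage Shapley operator is linear in the characteristic function, and the first-stage group-Shapley values add up to the total. The argument is written for a generic metric $m$. Items (i)--(iii) then follow by taking $m=SR$, or the pairs $(TPR,FPR)$ and $(PPV,NPV)$, and intersecting the equivalences component by component.

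\textbf{Step 1 (aggregation of the second stage).} Fix a coalition of features $\s_{\mathcal N}$. Efficiency of the first-stage group-Shapley, applied to the restricted data $X\big|_{\s_{\mathcal A}}(\s_{\mathcal N})$, gives
\[
\varphi^{Sh}_{1}(\mathbf X(\s_{\mathcal N}),v^{(m)})+\varphi^{Sh}_{2}(\mathbf X(\s_{\mathcal N}),v^{(m)})=v^{(m)}(X\big|_{\mathcal A}(\s_{\mathcal N})),
\]
with both sides equal to zero at $\s_{\mathcal N}=\emptyset$. The two second-stage games $\s_{\mathcal N}\mapsto\varphi^{Sh}_g(\mathbf X(\s_{\mathcal N}),v^{(m)})$ therefore sum to the feature game $v^{(m)}$ evaluated on the pooled data. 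Applying additivity of the Shapley value \eqref{feature-explain} to this sum yields
\[
\mathcal C^k_1(v^{(m)})+\mathcal C^k_2(v^{(m)})=\varphi^{Sh}_k(\mathbf X,v^{(m)})\quad\text{for every }k.
\]
Consequently, the condition $\mathcal C^k_g=\tfrac12\varphi^{Sh}_k$ for all $g$ is equivalent to $\mathcal C^k_1=\mathcal C^k_2$.

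\textbf{Step 2 (explicit form of the group-Shapley).} For $a=2$, formula \eqref{subpopulation-explain} gives
\[
\varphi^{Sh}_g=\tfrac12\bigl[v^{(m)}(\{g\})+v^{(m)}(\mathcal A)-v^{(m)}(\{g'\})\bigr],
\]
so that $\varphi^{Sh}_1-\varphi^{Sh}_2=v^{(m)}(\{1\})-v^{(m)}(\{2\})$. This holds for every feature coalition $\s_{\mathcal N}$. By linearity of the second stage,
\[
\mathcal C^k_1-\mathcal C^k_2=\varphi^{Sh}_k\bigl(\mathbf X,\ \s_{\mathcal N}\mapsto v^{(m)}(X\big|_{\{1\}}(\s_{\mathcal N}))-v^{(m)}(X\big|_{\{2\}}(\s_{\mathcal N}))\bigr).
\]

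\textbf{Step 3 (the two directions).}
\begin{itemize}
\item[($\Rightarrow$)] Suppose $f_t$ respects the criterion. As in the proof of Theorem~\ref{equiv1.3}, the group metrics coincide, so the difference game is zero on every coalition. All its Shapley values then vanish, which gives $\mathcal C^k_1=\mathcal C^k_2$, and hence the halving identity by Step 1.
\item[($\Leftarrow$)] Suppose $\mathcal C^k_1=\mathcal C^k_2$ for all $k$. Summing over $k$ and using second-stage efficiency gives $\varphi^{Sh}_1(\mathbf X,v^{(m)})=\varphi^{Sh}_2(\mathbf X,v^{(m)})$. Theorem~\ref{equiv1.3} then returns the fairness criterion.
\end{itemize}

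\textbf{Main obstacle.} The difficulty lies in the ($\Rightarrow$) direction. The criterion is stated for the classifier $f_t$ built on all features, whereas the difference game must vanish on every sub-coalition $\s_{\mathcal N}$. I would first try to argue, as the paper's construction implicitly does, that the metric on the restricted feature set is the same fairness quantity applied to the same classifier's output, so that equality carries over to every coalition.

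If that argument fails, I would state the needed hypothesis explicitly: the fairness criterion holds for the classifier evaluated on each coalition of features, which is the natural reading of ``respects'' in the two-stage setting. Under this reading, the zero-game argument goes through.
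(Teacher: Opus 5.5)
Your Steps~1 and~2 are sound, modulo a convention noted at the end. They follow essentially the paper's route. The paper substitutes $\varphi^{Sh}_g=\tfrac12 v^{(m)}(X\big|_{\mathcal A}(\cdot))$ into the second stage and invokes additivity. You instead use efficiency together with the difference game $d(\mathcal S):=v^{(m)}(X\big|_{\{1\}}(\mathcal S))-v^{(m)}(X\big|_{\{2\}}(\mathcal S))$, for which $\mathcal C^k_1-\mathcal C^k_2=\varphi^{Sh}_k(d)$. Your packaging is cleaner, and it turns the paper's ``the converse follows directly'' into an actual argument.

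The problem is Step~3, exactly where you suspected, and neither of your two options delivers the stated equivalence. The first option fails outright. The coalition classifiers $\hat Y_{\mathcal S}$ are different functions of the data (cf.\ Appendix~B), and fairness of $f_t$ only gives $d(\mathcal N)=0$, leaving $d$ unconstrained elsewhere. For $m=SR$ and $N=2$, take $d(\{1\})=\delta\neq 0$ and $d(\{2\})=d(\mathcal N)=0$. Then $f_t$ respects \eqref{IND}, yet $\mathcal C^1_1-\mathcal C^1_2=\varphi^{Sh}_1(d)=\tfrac{\delta}{2}\neq 0$. So ($\Rightarrow$) is false as literally stated. The paper's proof has the same hole: it establishes $\varphi^{Sh}_g=\tfrac12 v^{(m)}$ only at the full feature set $\mathcal N$, then silently uses it as a characteristic function on every coalition.

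Your fallback hypothesis (the criterion holds on every coalition, i.e.\ $d\equiv 0$) does repair ($\Rightarrow$). But then your ($\Leftarrow$) no longer returns that hypothesis. Summing over $k$ and applying Theorem~\ref{equiv1.3} yields only $d(\mathcal N)=0$, i.e.\ fairness of $f_t$ itself, and $d\equiv 0$ genuinely does not follow. With $d(\{1\})=d(\{2\})=\delta$ and $d(\mathcal N)=0$, both $\varphi^{Sh}_k(d)$ vanish. So every identity $\mathcal C^k_g=\tfrac12\varphi^{Sh}_k$ holds while each single-feature classifier is unfair. In effect you use one reading of ``respects'' in each direction. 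What your argument (and the paper's) actually proves is the chain
\[
\text{criterion on every coalition}\ \Longrightarrow\ \mathcal C^k_g=\tfrac12\varphi^{Sh}_k(\mathbf X,v^{(m)})\ \ \forall g,k\ \Longrightarrow\ \text{criterion for } f_t ,
\]
and the two examples show that neither arrow reverses once $N\ge 2$. The exact equivalence that does hold at the feature level is this: for each metric involved, $f_t$ respects the criterion if and only if $\sum_k(\mathcal C^k_1-\mathcal C^k_2)=0$. That is just Theorem~\ref{equiv1.3} plus efficiency.

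Finally, state ``both sides vanish at $\mathcal S_{\mathcal N}=\emptyset$'' in Step~1 as an explicit convention. If the feature game keeps a baseline $c:=v^{(m)}(X\big|_{\mathcal A}(\emptyset))\neq 0$, as in Section~2.2, you get $\mathcal C^k_1+\mathcal C^k_2=\varphi^{Sh}_k(\mathbf X,v^{(m)})+c/N$ instead. The paper's proof depends on the same convention.
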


\begin{proof}
 See the Appendix  \ref{Appendix-A}.
\end{proof}

Proposition \ref{equiv-2} shows that group fairness criteria impose a strong structure. Specifically, for a classifier to satisfy  group fairness, each feature’s contribution must be equalized across groups. Conversely, when the group-Shapley differs across groups, breaking down each group-Shapley into feature contributions allows us to identify which features drive the disparities between groups.  Concretely, in the second-stage Shapley, the difference across groups is explained by each feature’s contribution, \textit{i.e.,} each feature explains $\Delta \varphi^{Sh}_{g}(\mathbf{X}, v^{(m)}):=\varphi_{g=1}^{Sh}(\mathbf{X}, v^{(m)})-\varphi_{g=2}^{Sh}(\mathbf{X}, v^{(m)})$.

\begin{remark}

By the additivity of the Shapley value, decomposing  each $\varphi_{g}^{Sh}(\mathbf{X}, v^{(m)})$  by features is the same as decomposing the difference $\Delta \varphi_{g}^{Sh}(\mathbf{X}, v^{(m)})$:
\begin{align*}
 \varphi_{k}^{Sh} \circ \Big(\Delta \varphi^{Sh}_{g}(\mathbf{X}, v^{(m)})\Big)= \varphi_{k}^{Sh} \circ \varphi_{g=1}^{Sh}(\mathbf{X}, v^{(m)})-\varphi_{k}^{Sh} \circ\varphi_{g=2}^{Sh}(\mathbf{X}, v^{(m)})
\end{align*}
Consequently, the discrepancy in group-Shapley can be explained through the difference in feature contributions across groups:
\begin{align*}
 \varphi_{g=1}^{Sh}(\mathbf{X}, v^{(m)})-\varphi_{g=2}^{Sh}(\mathbf{X}, v^{(m)}) =   (\mathcal{C}^1_{1}(v^{(m)})-\mathcal{C}^1_{2}(v^{(m)}))+\cdots+ (\mathcal{C}^N_{1}(v^{(m)})-\mathcal{C}^N_{2}(v^{(m)}))
\end{align*}
\end{remark}

Figure \ref{fig:fig2} illustrates the process of the two-stage Shapley. In the first stage, the chosen performance metric evaluated on the full dataset  is decomposed by group, yielding group-Shapley. In the second stage, each group-Shapley is further decomposed over features, yielding feature contributions within each group. Comparing these feature contributions across groups reveals which features drive the group unfairness.

\begin{figure}[H]
    \centering
    \includegraphics[width=1\linewidth]{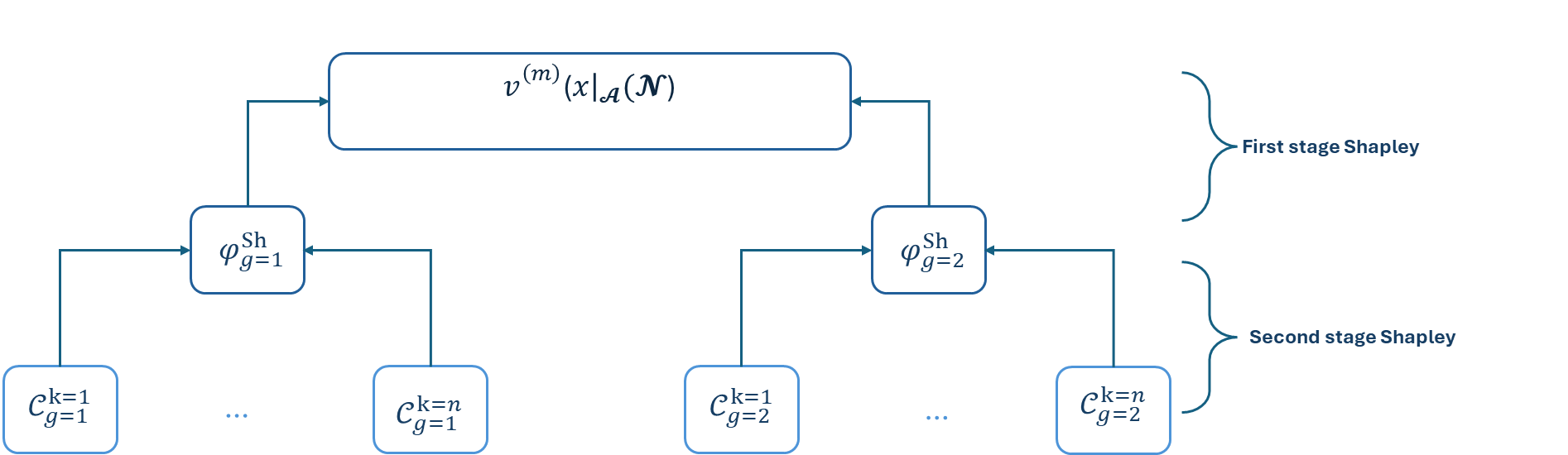}
      \caption{Two-stage Shapley attribution method}
  \label{fig:fig2}
\end{figure}

To check the robustness of the proposed two-stage Shapley, majority voting of group fairness along different ESL values is proposed in the next section.  

\section{Additional results: robustness, inference and implementability}\label{ULES}

We now propose two directions to improve the robustness and practical implementability of our framework. First, we extend our results to a broader family of allocation rules, the ESL values, which also share axiomatic foundations (efficiency, symmetry, and linearity) but differ in their allocation procedures. This extension enables faster computation and more robust conclusions through majority voting across ESL allocation rules. Second, we develop statistical inference procedures for group unfairness. We derive asymptotically normal estimators for both group-ESL and second-stage feature contributions, enabling formal hypothesis testing of fairness criteria and of the drivers of unfairness.

\subsection{Fairness explanation under ESL}

In the literature on cooperative game theory, a value satisfying efficiency, symmetry and linearity axioms  is called Efficient-Symmetric-Linear value (\textit{i.e.,} ESL value). The family of ESL values includes allocation rules such as, the Shapley value \citep{shapley1953}, the Solidarity value \citep{Nowak1994}, the Consensus value \citep{ju2007consensus}, the Equal surplus value \citep{Driessen1991} and the Least Square Prenucleolus value \citep{ruiz1996least}. ESL values have been extensively characterized in the literature \citep{hernandez2008solutions,nembua2012linear,,radzik2013family,nembua2016ordinal} and generalize the Shapley value as shown in Proposition \ref{LEStheo} recalled below.
\begin{proposition}\label{LEStheo}\textbf{\emph{(\cite{radzik2013family}):}} 
A value $\varphi_{g}^{ESL}(\mathbf{X}, v^{(m)})$ is an ESL value if and only if there exists a unique sequence of \( a-1 \) real numbers \(\{b_s\}_{s=1}^{a-1}\) such that for each \( g \in \mathcal{A} \) with \( b_0 = 0 \) and \( b_{a} = 1 \):
\begin{align*}
\varphi_{g}^{ESL}(\mathbf{X}, v^{(m)}) = \ & \sum_{\s_{\mathcal{A}} \subseteq \mathcal{A} \setminus \{g\}} \frac{(a-s-1)!s!}{a!}  \times \left( b_{s+1} v^{(m)}(X \big|_{\s_{\mathcal{A}} \cup \{g\}} (\mathcal{N})) - b_s v^{(m)}(X \big|_{\s_{\mathcal{A}} } (\mathcal{N})) \right) \notag
\end{align*}
\end{proposition}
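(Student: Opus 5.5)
The plan is to use the standard linear-algebra characterization of values on the vector space of all games on $\mathcal{A}$ (with $v(\emptyset)=0$), writing $v(\s_{\mathcal{A}})$ for $v^{(m)}(X \big|_{\s_{\mathcal{A}}} (\mathcal{N}))$. I would prove the two directions separately and treat uniqueness last.

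\emph{Sufficiency.} Take any sequence $\{b_s\}$ with $b_0=0$ and $b_a=1$, and let $\varphi_g$ be defined by the displayed formula.
\begin{itemize}
\item Linearity is immediate, since $\varphi_g$ is a fixed linear combination of the numbers $v(\s_{\mathcal{A}})$.
\item Symmetry holds because the weight on $v(\s_{\mathcal{A}})$ depends only on $s=|\s_{\mathcal{A}}|$ and on whether $g\in\s_{\mathcal{A}}$. Any permutation of $\mathcal{A}$ therefore just relabels the terms.
\item For efficiency, I collect the coefficient of a fixed $v(T)$ with $|T|=t$ in $\sum_g \varphi_g$. Each of the $t$ players $g\in T$ contributes $\frac{(a-t)!(t-1)!}{a!}b_t$, through $\s_{\mathcal{A}}=T\setminus\{g\}$. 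Each of the $a-t$ players $g\notin T$ contributes $-\frac{(a-t-1)!t!}{a!}b_t$, through $\s_{\mathcal{A}}=T$.
\item Both totals equal $\pm\frac{(a-t)!t!}{a!}b_t$, so they cancel whenever $t<a$. For $T=\mathcal{A}$ only the first kind of term exists, giving $b_a v(\mathcal{A})=v(\mathcal{A})$. The term $T=\emptyset$ vanishes because $b_0=0$ (and $v(\emptyset)=0$ in any case).
\end{itemize}

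\emph{Necessity.} Let $\varphi$ be an ESL value.
\begin{itemize}
\item By linearity on the $(2^a-1)$-dimensional space of games, there are constants with $\varphi_g(v)=\sum_{\emptyset\neq T\subseteq\mathcal{A}} c_g(T)\,v(T)$. To see this, evaluate $\varphi$ on the basis of games $\delta_T$ (equal to $1$ on $T$, $0$ elsewhere).
\item Symmetry, applied to permutations $\pi$ with $\pi(g)=g'$ and $\pi(T)=T'$ (with $g\in T\Leftrightarrow g'\in T'$), forces $c_g(T)=\alpha_t$ if $g\in T$ and $c_g(T)=\beta_t$ if $g\notin T$, where $t=|T|$.
\item Efficiency, $\sum_g\varphi_g(v)=v(\mathcal{A})$ for every $v$, tested on each $\delta_T$, gives $t\alpha_t+(a-t)\beta_t=0$ for $1\le t<a$ and $a\alpha_a=1$.
\item I then define $b_t:=\frac{a!}{(t-1)!(a-t)!}\alpha_t$ for $1\le t\le a$, together with $b_0:=0$. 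This yields $b_a=1$ and $\beta_t=-\frac{t}{a-t}\alpha_t=-\frac{(a-t-1)!t!}{a!}b_t$.
\item Regrouping the sum by $\s_{\mathcal{A}}=T\setminus\{g\}$ for the $\alpha$ terms and $\s_{\mathcal{A}}=T$ for the $\beta$ terms reproduces exactly the displayed formula.
\end{itemize}

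Uniqueness of $\{b_s\}_{s=1}^{a-1}$ follows because the coefficients $\alpha_t$ are uniquely determined by $\varphi$: they are its values on the basis games. Two different sequences would thus give different values on some $\delta_T$.

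I expect the main obstacle to be the symmetry step, which reduces $c_g(T)$ to the two numbers $\alpha_t,\beta_t$. It requires the value to be defined on the full space of games, so that the permutation axiom can be applied to every $\delta_T$; the paper's footnoted form of symmetry must be read in that sense. In our setting $a=2$, and everything collapses to the single parameter $b_1$, which makes each step a direct check.
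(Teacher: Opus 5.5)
Your proof is correct. Note that the paper gives no proof of this proposition: it is imported from Radzik and Driessen (2013). So your argument is not an alternative to an in-paper proof; it is a self-contained derivation of the cited result.

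Your route is the most elementary one available. You expand $\varphi_g$ in the basis $\{\delta_T\}_{T\neq\emptyset}$ and use symmetry to collapse the coefficients onto the orbit constants $\alpha_t,\beta_t$. Efficiency then gives $t\alpha_t+(a-t)\beta_t=0$ and $a\alpha_a=1$, and you reparametrize by $b_t=\frac{a!}{(t-1)!(a-t)!}\alpha_t$. This also exposes the form in which the result is usually stated in that literature, $\varphi_g(v)=\varphi^{Sh}_g(Bv)$ with $(Bv)(T)=b_t\,v(T)$. Indeed, your coefficients of $v(T)$ are exactly the Shapley weights applied to the rescaled game $T\mapsto b_t v(T)$.

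Two points that your version makes explicit, and the paper's phrasing leaves implicit, are worth keeping:
\begin{enumerate}
\item The characterization, and in particular the uniqueness of $\{b_s\}$, is a statement about maps on the whole $(2^a-1)$-dimensional space of games, not about the single game $v^{(m)}$. You rightly flag this in your symmetry step.
\item The necessity direction uses genuine real linearity, which is part of the ESL definition, rather than only the additivity listed in the paper's Shapley footnote.
\end{enumerate}

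For $a=2$ your formula reduces to the two-term expressions used in the proof of Theorem \ref{LES1.3}, with $b_1$ as the only free parameter. This confirms the consistency of the paper's later use of the proposition.
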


Each ESL value is described by a specific
collection of real constants \(\{b_s\}_{s=1}^{a-1}\) leading to different interpretations.  The  Solidarity value  $\varphi^{So}$ is defined by taking  $b_s = \frac{1}{s+1}, \ \text{for } s=1,\ldots, a-1$, such that levels within the same  coalition receive similar share of the marginal contribution of that coalition. The Least Squares Prenucleolus  $\varphi^{LSP}$ is defined with  $b_a = 1 \text{ and } b_s = \binom{a-1}{s} \frac{s}{2^{a-2}} \text{ for } s=1,\ldots, a-1$, and it focuses on  minimizing the variance of the value $v^{(m)}(X \big|_{\mathcal{A} } (\mathcal{N}))-\sum_k \varphi_k^{LSP}(\mathbf{X},v^{(m)})$. the Equal Surplus value $\varphi^{ES}$  is defined with $ b_1 = a - 1, \text{ and } b_s = 0 \text{ for } s=2,\ldots,a-1 $. It quantifies the contribution of group $g$ under the assumption that the total cooperation surplus is equally distributed among all groups.
The Shapley value is defined such that $b_s = 1 \text{ for } s=1,\ldots,a-1 $. Finally, The Consensus value  $\varphi^{Co}$ is defined with $b_1 = \frac{a}{2}, \text{ and } b_s = \frac{1}{2} \text{ for } s=2,\ldots, a-1 $, and refers to the arithmetic means of the Shapley value and the Equal surplus value.
In the sequel, Theorem \ref{equiv1.3} is generalized to all ESL values, where $\varphi_{g}^{ESL}$ denotes ESL group contribution.

\begin{theorem}[Group Fairness $\Leftrightarrow$ ESL value]\label{LES1.3} 
Let $f_t$ be a binary  classifier and $\varphi^{ESL}$ any given ESL value. Equivalences:
\newline \emph{(i)}  $f_t$  respects \emph{(\ref{IND})} 
\ $\Leftrightarrow$ \ $\varphi_{g=1}^{ESL}(\mathbf{X}, v^{(SR)}) = \varphi_{g=2}^{ESL}(\mathbf{X}, v^{(SR)})$
\newline \emph{(ii)} $f_t$ respects \emph{(\ref{SEP})} $\Leftrightarrow \left\{\begin{aligned}
\varphi_{g=1}^{ESL}\!\bigl(\mathbf{X},\allowbreak v^{(TPR)}\bigr)
&= \varphi_{g=2}^{ESL}\!\bigl(\mathbf{X},\allowbreak v^{(TPR)}\bigr)
\\[-2pt]
\varphi_{g=1}^{ESL}\!\bigl(\mathbf{X},\allowbreak v^{(FPR)}\bigr)
&= \varphi_{g=2}^{ESL}\!\bigl(\mathbf{X},\allowbreak v^{(FPR)}\bigr)
\end{aligned}\right.$
\newline \emph{(iii)} $f_t$ respects \emph{(\ref{SUF})} $\Leftrightarrow$ 
$
\left\{
\begin{aligned}
\varphi_{g=1}^{ESL}\!\bigl(\mathbf{X},\allowbreak v^{(PPV)}\bigr)
&= \varphi_{g=2}^{ESL}\!\bigl(\mathbf{X},\allowbreak v^{(PPV)}\bigr)
\\[-2pt]
\varphi_{g=1}^{ESL}\!\bigl(\mathbf{X},\allowbreak v^{(NPV)}\bigr)
&= \varphi_{g=2}^{ESL}\!\bigl(\mathbf{X},\allowbreak v^{(NPV)}\bigr).
\end{aligned}\right.
$
\end{theorem}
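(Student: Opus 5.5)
We specialise Proposition \ref{LEStheo} to the two-group game $a=|\mathcal{A}|=2$ and compute both ESL group values in closed form. With $a=2$ the conventions give $b_0=0$, $b_2=1$, and a single free constant $b_1=:b\in\mathbb{R}$. For group $g$ the sum runs over $\s_{\mathcal{A}}\in\{\emptyset,\{g'\}\}$, where $g'$ is the other group, and both weights equal $(a-s-1)!s!/a!=1/2$. Write $v_1:=v^{(m)}(X\big|_{\{1\}}(\mathcal{N}))$, $v_2:=v^{(m)}(X\big|_{\{2\}}(\mathcal{N}))$ and $v_{12}:=v^{(m)}(X\big|_{\mathcal{A}}(\mathcal{N}))$, and recall $v^{(m)}(X\big|_{\emptyset}(\mathcal{N}))=0$ thanks to the factor $(1-\mathds{1}_{|\s_{\mathcal{A}}|})$. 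We then expect
\[
\varphi^{ESL}_{g}(\mathbf{X},v^{(m)})=\tfrac12\bigl(b\,v_g-0\bigr)+\tfrac12\bigl(v_{12}-b\,v_{g'}\bigr)=\tfrac12 v_{12}+\tfrac{b}{2}\bigl(v_g-v_{g'}\bigr),
\]
and hence
\[
\varphi^{ESL}_{1}-\varphi^{ESL}_{2}=b\,(v_1-v_2).
\]
With $b=1$ this reproduces the Shapley identity behind Theorem \ref{equiv1.3}.

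Next we translate $v_1=v_2$ into the fairness criteria. Each $v_g$ is the group metric divided by the same baseline $\mathbb{P}(\cdot\mid A\in\emptyset)$, which is a positive constant not depending on $g$ (either $0.5$ or $\mathbb{P}(Y=1)$). So $v_1=v_2$ holds exactly when the underlying conditional probabilities coincide across groups. For $m=SR$ this is \eqref{IND}. For the pair $m\in\{TPR,FPR\}$ it is the two equalities in \eqref{SEP}, and for $m\in\{PPV,NPV\}$ it is \eqref{SUF}. Applying the displayed identity to each metric in the relevant pair yields (i)--(iii), in the same way as in Theorem \ref{equiv1.3}.

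The main obstacle is the degenerate constant. The direction ``fair $\Rightarrow$ equal ESL values'' holds for every $b$. The converse, however, requires $b_1\neq0$: if $b_1=0$, both groups receive $\tfrac12 v_{12}$ regardless of fairness, and the equivalence fails. We will therefore check that $b_1\neq 0$ for each named member at $a=2$:
\begin{itemize}
\item Shapley: $b_1=1$;
\item Solidarity: $b_1=\tfrac12$;
\item Equal Surplus: $b_1=a-1=1$;
\item Consensus: $b_1=a/2=1$;
\item Least Squares Prenucleolus: $b_1=\binom{1}{1}\cdot 1/2^{0}=1$.
\end{itemize}
We will state the theorem under the explicit nondegeneracy assumption $b_1\neq0$, noting that the case $b_1=0$ is the egalitarian split.

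A secondary point is that the baselines in the denominators must be nonzero and identical across groups, so that the rescaling preserves equality. We will remark on this and otherwise assume it from the definitions of $v^{(m)}$.
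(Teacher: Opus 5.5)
Your proposal is correct and follows the paper's proof. You specialise the ESL representation of Proposition~\ref{LEStheo} to $a=2$ with $v^{(m)}(X\big|_{\emptyset}(\mathcal{N}))=0$, obtain $\varphi^{ESL}_{1}-\varphi^{ESL}_{2}=b_1(v_1-v_2)$, and read $v_1=v_2$ as the fairness criterion through the common baseline. Your caveat that the converse needs $b_1\neq 0$ (so the literal ``any ESL value'' claim fails when $b_1=0$) is the same restriction the paper makes when it invokes $b_1>0$ for the five named values; your condition $b_1\neq 0$ is slightly weaker and still sufficient.
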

\begin{proof}
See the Appendix \ref{Appendix-A}.
\end{proof}

As in the case of the Shapley value, Theorem~\ref{LES1.3} establishes a generalization to all ESL values and shows that group fairness can be assessed by requiring comparable group-ESL $\varphi_{g}^{ESL}$  across  groups. Hence,  a new statistical test for group fairness  assessment is proposed in terms of (\ref{IND}), (\ref{SEP}) or (\ref{SUF}). The following test will be used in the empirical application: 
\begin{equation}\notag
\left\|
\begin{array}{l}
H_0 : \varphi_{g=1}^{ESL}(\mathbf{X}, v^{(m)}) = \varphi_{g=2}^{ESL}(\mathbf{X}, v^{(m)}) 
\\ 
H_1 : \varphi_{g=1}^{ESL}(\mathbf{X}, v^{(m)}) \neq \varphi_{g=2}^{ESL}(\mathbf{X}, v^{(m)}) 
\end{array}
\right.
\end{equation} 

\begin{theorem}\label{stat11}
For a  fixed classification metric $m$ $($\textit{e.g.}, $SR$, $TPR$, $FPR$, $PV$, $NPV$$)$, consider the null hypothesis:
\[
H_0:\ \varphi^{ESL}_{g=1}\!\bigl(\mathbf{X},v^{(m)}\bigr)=\varphi^{ESL}_{g=2}\!\bigl(\mathbf{X},v^{(m)}\bigr)
\]
Under $H_0$, the following test statistic $Z$ is asymptotically
normal:
\begin{align*}
    Z=\frac{ \widehat{\varphi}_{1}^{ESL} - \widehat{\varphi}_{2}^{ESL}}{\sqrt{ 4b_1^2 \hat{p}_{\mathcal{A}}(1-\hat{p}_{\mathcal{A}})\Big(\frac{1}{n^+_1}+\frac{1}{n^+_2}\Big)}} \sim \mathcal{N}(0.1) 
\end{align*}
where  $\hat{p}_{\mathcal{A}}$ denotes the pooled empirical estimate of metric $m$ on the full sample, $n_{g}^{+}$ is the denominator of metric $m$ within group $g\in\{1,2\}$ and  \(b_1\) is  determined by Proposition \ref{LEStheo}.

\end{theorem}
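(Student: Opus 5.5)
We first write the group-ESL values explicitly for the two-group game, using Proposition~\ref{LEStheo} with $a=2$, so that $b_0=0$, $b_2=1$, and the weights $(a-s-1)!s!/a!$ equal $1/2$ for $s=0,1$. Write $p_g$ for the population value of metric $m$ on group $g$, $p_{\mathcal A}$ for its value on the pooled data, and $c$ for the baseline (the denominator $\mathbb{P}(\cdot\mid A\in\emptyset)$, treated as a fixed known constant such as $0.5$ or the prior class probability). Then $v^{(m)}(\{g\})=p_g/c$, $v^{(m)}(\mathcal A)=p_{\mathcal A}/c$ and $v^{(m)}(\emptyset)=0$. Hence
\[
\varphi^{ESL}_g=\tfrac12\bigl(b_1 p_g/c\bigr)+\tfrac12\bigl(p_{\mathcal A}/c-b_1 p_{g'}/c\bigr),
\]
where $g'$ denotes the other group. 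It follows that
\[
\varphi^{ESL}_1-\varphi^{ESL}_2=\frac{b_1}{c}\,(p_1-p_2).
\]
The pooled terms cancel. Consistent with Theorem~\ref{LES1.3}, $H_0$ is therefore equivalent to $p_1=p_2$ whenever $b_1\neq0$. The same algebra holds for the plug-in estimators $\widehat\varphi^{ESL}_g$ built from the empirical rates $\hat p_g$, so the numerator of $Z$ equals $(b_1/c)(\hat p_1-\hat p_2)$.

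Next, each $\hat p_g$ is an empirical proportion of Bernoulli indicators over the $n_g^+$ observations in the denominator of $m$ within group $g$. These are all observations in the group for SR, those with $Y=1$ for TPR, those with $\hat Y=1$ for PPV, and so on. We condition on the $n_g^+$ (or treat them as growing with $n$, with $n_g^+/n$ converging to positive limits) and use independence of observations across groups. The CLT for proportions, combined with independence, then gives
\[
\frac{\hat p_1-\hat p_2-(p_1-p_2)}{\sqrt{p_1(1-p_1)/n_1^+ + p_2(1-p_2)/n_2^+}}\Rightarrow \mathcal N(0,1).
\]
Under $H_0$ we have $p_1=p_2=p$, which is also the pooled value $p_{\mathcal A}$, since the pooled rate is a weighted mixture of the group rates. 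The pooled estimator $\hat p_{\mathcal A}$ is consistent for $p$, so Slutsky's lemma lets us replace the variance by $\hat p_{\mathcal A}(1-\hat p_{\mathcal A})(1/n_1^++1/n_2^+)$. This is the classical two-proportion $z$-test. Multiplying numerator and standard error by the same constant $b_1/c$ leaves the limit unchanged.

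The main obstacle is reconciling the constant with the stated denominator $4b_1^2$. My computation gives the scaling $b_1^2/c^2$ on the variance, and $4b_1^2$ arises exactly when $c=1/2$ (the uniform baseline). So the plan is to fix the baseline at $1/2$ as in the paper's convention, or to note that the statistic should be read with $\hat p$ expressed in baseline-relative units. I would state this convention explicitly. For data-dependent baselines such as $\mathbb{P}(Y=1)$, I would also check via the delta method that estimating $c$ does not alter the limit under $H_0$. The numerator is $(\hat p_1-\hat p_2)\cdot b_1/\hat c$, and $\hat p_1-\hat p_2=O_p(n^{-1/2})$ while $\hat c\to c$, so Slutsky's lemma again suffices. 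Finally, the PPV and NPV metrics have random denominators $n_g^+$. For these I would argue conditionally on $(\hat Y,A)$, where the indicators $Y$ are i.i.d.\ Bernoulli given $\hat Y=1$ and $A=g$, so the conditional CLT applies and then integrates to the unconditional statement.
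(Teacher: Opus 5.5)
Your proposal is correct and follows essentially the paper's route (Proposition~\ref{stat1} in Appendix~B): the pooled term cancels, so $\widehat{\varphi}^{ESL}_1-\widehat{\varphi}^{ESL}_2$ is $b_1$ times the difference of the baseline-normalised group rates, each group rate is asymptotically normal with variance $p_g(1-p_g)/n_g^+$, the groups are taken as independent, and under $H_0$ the common rate is replaced by the pooled estimate $\hat p_{\mathcal A}$. Your reading of the factor $4b_1^2$ as $b_1^2/c^2$ with $c=1/2$ matches the paper's footnoted random-classifier baseline of $1/2$. Your conditional-CLT and Slutsky steps just make explicit what the paper does via the multivariate CLT, the delta method and an informal plug-in of $\hat p_{\mathcal A}$.
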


Theorem~\ref{stat11} provides an asymptotic test statistic. In particular, the denominator $n_g^{p}$ depends on the metric $m$: for TPR it is the number of actual positives in group $g$; for FPR the number of actual negatives; for SR the group size; for PPV the number of predicted positives; and for NPV the number of predicted negatives. In Appendix~\ref{Appendix-B} (see Proposition \ref{stat1}), we derive  the asymptotic variance and the resulting test statistic explicitly for $m=TPR$. The same steps can be reproduced for other classification metrics (\textit{e.g.}, $FPR$, $SR$, $PV$, $NPV$), yielding analogous test statistics. 


It is noteworthy that the literature points out that the Equal Surplus (ES) value is a relevant ESL value for text and image classifications \citep{Condevaux2022}. Contrary to the Shapley value and other ESL values that are characterized by an exponential time complexity, the ES value has a linear time complexity. Accordingly, testing the null hypothesis (\(H_0\)) regarding a specific group fairness criterion can be performed using a single member of the ESL family (see Section~\ref{EXAMPLE}). This computational advantage enhances the implementability of hypothesis testing through the ES.

Let us now propose the two-stage ESL. Proposition \ref{equiv-2} (two-stage Shapley) provides the generalization to all ESL values.

\begin{corollary}[Two-stage ESL] \label{equiv-41}
Let $f_t$ be a binary classifier. The two-stage ESL  $\varphi_{k}^{ESL} \circ \varphi_{g}^{ESL}(\mathbf{X}, v^{(m)})$ for all $g\in \mathcal A$ and for all $k\in {\mathcal{N}}$ yields the following equivalences:
\[
\begin{aligned}
\emph{(i)}\quad & f_t \text{ respects \eqref{IND}}
&&\iff&&
\mathcal{C}^k_{g}\!\bigl(v^{(SR)}\bigr)
= \tfrac{1}{2}\,\varphi_{k}^{\mathrm{ESL}}\!\bigl(\mathbf{X}, v^{(SR)}\bigr),
\\[4pt]
\emph{(ii)}\quad & f_t \text{ respects \eqref{SEP}}
&&\iff&&
\left\{
\begin{aligned}
\mathcal{C}^k_{g}\!\bigl(v^{(TPR)}\bigr)
&= \tfrac{1}{2}\,\varphi_{k}^{\mathrm{ESL}}\!\bigl(\mathbf{X}, v^{(TPR)}\bigr),\\
\mathcal{C}^k_{g}\!\bigl(v^{(FPR)}\bigr)
&= \tfrac{1}{2}\,\varphi_{k}^{\mathrm{ESL}}\!\bigl(\mathbf{X}, v^{(FPR)}\bigr),
\end{aligned}
\right.
\\[4pt]
\emph{(iii)}\quad & f_t \text{ respects \eqref{SUF}}
&&\iff&&
\left\{
\begin{aligned}
\mathcal{C}^k_{g}\!\bigl(v^{(PPV)}\bigr)
&= \tfrac{1}{2}\,\varphi_{k}^{\mathrm{ESL}}\!\bigl(\mathbf{X}, v^{(PPV)}\bigr),\\
\mathcal{C}^k_{g}\!\bigl(v^{(NPV)}\bigr)
&= \tfrac{1}{2}\,\varphi_{k}^{\mathrm{ESL}}\!\bigl(\mathbf{X}, v^{(NPV)}\bigr).
\end{aligned}
\right.
\end{aligned}
\]
\end{corollary}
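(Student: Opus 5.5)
The plan mirrors the proof of Proposition \ref{equiv-2}, replacing each use of the Shapley value by an arbitrary ESL value and relying only on the three ESL axioms (efficiency, symmetry, linearity), as characterised in Proposition \ref{LEStheo}. Fix a metric $m$ and write $\mathcal{C}^k_g(v^{(m)}) = \varphi_k^{ESL}\circ\varphi_g^{ESL}(\mathbf{X},v^{(m)})$. Here the second-stage game for group $g$ is the set function $\s_{\mathcal N}\mapsto \varphi_g^{ESL}(\mathbf{X}(\s_{\mathcal N}),v^{(m)})$, normalised to $0$ on $\emptyset$. Since the criteria in (ii) and (iii) are conjunctions of two single-metric statements, it suffices to prove, for a single metric $m$, that equality of the group metric across groups is equivalent to $\mathcal{C}^k_g(v^{(m)})=\tfrac12\varphi_k^{ESL}(\mathbf{X},v^{(m)})$ for all $g,k$.

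The first step is an explicit formula for the first stage with $a=2$. Proposition \ref{LEStheo} with $b_0=0$, $b_2=1$ gives, for each feature coalition $\s_{\mathcal N}$,
\[
\varphi_g^{ESL}=\tfrac12\bigl(b_1 v^{(m)}(X|_{\{g\}}(\s_{\mathcal N}))\bigr)+\tfrac12\bigl(v^{(m)}(X|_{\mathcal A}(\s_{\mathcal N}))-b_1 v^{(m)}(X|_{\{g'\}}(\s_{\mathcal N}))\bigr),
\]
where $g'$ is the other group. Hence
\[
\varphi_1^{ESL}-\varphi_2^{ESL}=b_1\bigl(v^{(m)}(X|_{\{1\}})-v^{(m)}(X|_{\{2\}})\bigr)
\]
at every $\s_{\mathcal N}$, and $\varphi_1^{ESL}+\varphi_2^{ESL}=v^{(m)}(X|_{\mathcal A}(\s_{\mathcal N}))$ by efficiency. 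By Theorem \ref{LES1.3}, the fairness criterion at the full feature set is equivalent to $\varphi_1^{ESL}=\varphi_2^{ESL}$, and then each equals $\tfrac12 v^{(m)}(X|_{\mathcal A}(\mathcal N))$.

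For the direction ($\Rightarrow$), the second-stage games of the two groups must coincide as set functions, so that both equal $\tfrac12 v^{(m)}(X|_{\mathcal A}(\cdot))$. Linearity of $\varphi_k^{ESL}$ then gives $\mathcal{C}^k_g=\tfrac12\varphi_k^{ESL}(\mathbf{X},v^{(m)})$, where the right-hand side is the feature-ESL of the pooled game. \textbf{This is the main obstacle}: the criterion is stated for the classifier $f_t$ on the full feature vector, while the second-stage games evaluate the metric on sub-coalitions $\s_{\mathcal N}$. I would read the criterion, exactly as in the proof of Proposition \ref{equiv-2}, as a property of the classifier at each feature restriction $X(\s_{\mathcal N})$ used in the second stage, so that the group metrics agree for every $\s_{\mathcal N}$. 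I would state this interpretation explicitly, since without it only the grand-coalition equality is available.

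For the direction ($\Leftarrow$), sum the hypothesis over $k$ and use efficiency of $\varphi_k^{ESL}$, which gives $\varphi_g^{ESL}(\mathbf{X},v^{(m)})=\tfrac12 v^{(m)}(X|_{\mathcal A}(\mathcal N))$ for both $g$. Hence $\varphi_1^{ESL}=\varphi_2^{ESL}$, and Theorem \ref{LES1.3} returns the criterion. Only efficiency is needed here, so this direction carries over verbatim from the Shapley case. This requires $b_1\neq0$; that condition is already implicit in Theorem \ref{LES1.3}, which I would flag. Applying the argument to SR, to TPR and FPR, and to PPV and NPV yields (i)–(iii).
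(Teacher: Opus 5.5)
Your proposal is correct and follows the paper's own route. The paper proves the corollary by rerunning the argument of Proposition~\ref{equiv-2} with Theorem~\ref{LES1.3} and linearity of the ESL value in place of Theorem~\ref{equiv1.3} and Shapley additivity, so that each second-stage game becomes $\tfrac12 v^{(m)}(X|_{\mathcal A}(\cdot))$; the converse is dismissed as immediate, and your efficiency argument is the natural way to fill it in. The obstacle you flag for ($\Rightarrow$) is real and is exactly a tacit assumption of the paper: its proof of Proposition~\ref{equiv-2} silently treats the fairness criterion as holding at every feature sub-coalition. Likewise, $b_1\neq 0$ is only checked there for the five named ESL values, so stating both conditions explicitly makes your version more careful than the original.
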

\begin{proof}

Follows directly from Proposition \ref{equiv-2} 
\end{proof}
From Corollary \ref{equiv-41} another statistical test can be employed to assess the significance of the group unfairness,
over each feature $k\in \mathcal N$. 
A majority voting scheme can be introduced to extend the robustness of explainability. Formally, for each ESL value, the following null hypothesis $H_0$ is tested to look for the contribution of feature $k$ to group fairness. If at least 3 out of 5 ESL values reject the null hypothesis $H_0$, majority voting ensures that $H_0$ is rejected at a given risk level. For a given $k \in \mathcal N$, the test is the following:
\begin{equation}\notag
\left\|
\begin{array}{l}
H_0 : \mathcal{C}^k_{g=1}(v^{(m)}) = \mathcal{C}^k_{g=2}(v^{(m)}) \  
\\ 
H_1 : \mathcal{C}^k_{g=1}(v^{(m)}) \neq \mathcal{C}^k_{g=2}(v^{(m)}) \ 
\end{array}
\right.
\end{equation} 

\begin{theorem}\label{stat22}
For a  fixed classification metric $m$ $($\textit{e.g.}, $SR$, $TPR$, $FPR$, $PV$, $NPV$$)$, consider the null hypothesis:
\[
H_0 : \mathcal{C}^k_{g=1}(v^{(m)}) = \mathcal{C}^k_{g=2}(v^{(m)})
\]
Under $H_0$, the following test statistic $Z_k$ is asymptotically
normal:

\begin{align*}
    Z_k=\frac{  \hat{\mathcal{C}}^k_{1}(v^{(m)})-\hat{\mathcal{C}}^k_{2}(v^{(m)})}{\sqrt{Var(\hat{\mathcal{C}}^k_{1}(v^{(m)}))+Var(\hat{\mathcal{C}}^k_{2}(v^{(m)}))-2\text{Cov}(\hat{\mathcal{C}}^k_{1}(v^{(m)}),\hat{\mathcal{C}}^k_{2}(v^{(m)}))}}  \sim \mathcal{N}(0.1)
\end{align*}

\end{theorem}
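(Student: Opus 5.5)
The plan is to use the delta method. The second-stage contributions $\hat{\mathcal{C}}^k_{g}(v^{(m)})$ are finite linear combinations of plug-in estimates of the group metric on feature-restricted classifiers, and we push a joint CLT for those estimates through this linear map.

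\emph{Step 1: linear representation.} By Proposition~\ref{LEStheo} applied twice, first over groups with $a=2$ and then over features with weights $P(\s_{\mathcal N})$ (or the ESL analogue), we can write
\[
\hat{\mathcal{C}}^k_{g}(v^{(m)})=\sum_{\s_{\mathcal N}\subseteq\mathcal N}\;\sum_{\s_{\mathcal A}\subseteq\mathcal A}\omega^{k,g}(\s_{\mathcal N},\s_{\mathcal A})\,\hat v^{(m)}\bigl(X\big|_{\s_{\mathcal A}}(\s_{\mathcal N})\bigr),
\]
where the weights $\omega$ are deterministic. Each $\hat v^{(m)}$ is a ratio of a group-restricted empirical proportion $\hat p_{\s_{\mathcal A}}(\s_{\mathcal N})$ to a fixed baseline. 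For a given classification threshold and feature subset, this proportion is an average of Bernoulli indicators over the $n^+_{\s_{\mathcal A}}$ units in the metric's denominator. The pooled proportion on $\{1,2\}$ is itself a weighted average of the two group proportions, with weights $n^+_g/(n^+_1+n^+_2)$. Hence every term reduces to a smooth function of the $2\cdot 2^{N}$ group-level proportions $\hat p_g(\s_{\mathcal N})$, $g\in\{1,2\}$.

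\emph{Step 2: joint CLT.} We treat the fitted classifiers on each $\s_{\mathcal N}$ as fixed, either through sample splitting or by assuming the estimated classifier converges so that its estimation error is negligible. Conditional on the denominators, the vector $\bigl(\hat p_g(\s_{\mathcal N})\bigr)_{g,\s_{\mathcal N}}$ is a sample mean of i.i.d.\ bounded vectors within each group, and the two groups are independent. The multivariate CLT then gives $\sqrt{n}$-asymptotic normality with a block-diagonal covariance across groups. Within a group, the covariances between different $\s_{\mathcal N}$ are $p_g(\s,\s')-p_g(\s)p_g(\s')$, where $p_g(\s,\s')$ is the joint positive rate of the two feature-restricted classifiers.

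\emph{Step 3: delta method and studentization.} Because $\hat{\mathcal C}^k_1-\hat{\mathcal C}^k_2$ is a differentiable (essentially linear) function of this vector, it is asymptotically normal with mean $\mathcal C^k_1-\mathcal C^k_2$, which equals $0$ under $H_0$. Its variance is $Var(\hat{\mathcal C}^k_1)+Var(\hat{\mathcal C}^k_2)-2\,\mathrm{Cov}(\hat{\mathcal C}^k_1,\hat{\mathcal C}^k_2)$. The covariance term is nonzero because both contributions load on the pooled terms $\hat v^{(m)}(X|_{\mathcal A}(\s_{\mathcal N}))$. We then replace these quantities by consistent plug-in estimators and apply Slutsky's lemma, which yields $Z_k\to\mathcal N(0,1)$. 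In the paper's two-player setting the structure simplifies: for Shapley, $\varphi_1-\varphi_2=v(\{1\})-v(\{2\})$ at every feature coalition, so the pooled terms cancel in the difference of point estimates. The variances are nonetheless computed separately, in the form stated in the theorem.

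\emph{Main obstacle.} The delicate step is Step 2. First, the classifier depends on the same data, so the estimation error of $f_t$ on each $\s_{\mathcal N}$ must be handled, which requires either sample splitting or a stability/Donsker-type condition. Second, the threshold indicator is discontinuous, so we need the mass of $\mathbb P[Y=1\mid X]$ at $t$ to be zero. Third, the denominators $n_g^+$ are random. For the last point the first thing to try is to condition on them, as in Theorem~\ref{stat11}, and check that $n_g^+/n$ converges to a positive limit, so that the ratio form only adds a deterministic scaling in the limit.
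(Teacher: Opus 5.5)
Your proposal is correct and follows essentially the paper's own route (Proposition~\ref{stat2} in Appendix~\ref{Appendix-B}). Both arguments write $\hat{\mathcal{C}}^k_g$ as a linear combination of coalition-level plug-in rate estimates, obtain their asymptotic normality with within-group covariances $\mathbb{P}(I_{g,\mathcal{S}\cup\{k\}}=1,I_{g,\mathcal{S}}=1)-p_g(\mathcal{S}\cup\{k\})\,p_g(\mathcal{S})$, and studentize by $Var(\hat{\mathcal{C}}^k_1)+Var(\hat{\mathcal{C}}^k_2)-2\,\mathrm{Cov}(\hat{\mathcal{C}}^k_1,\hat{\mathcal{C}}^k_2)$; your joint multivariate CLT and Slutsky step make rigorous two points the paper only asserts (that a linear combination of marginally normal estimators is normal, and that estimated variances can be plugged in). One caveat: conditioning on the denominators gives an i.i.d.\ sample-mean structure only for SR, TPR and FPR, whose denominator sets do not depend on the feature coalition. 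For PPV/NPV the predicted-positive sets change with $\mathcal{S}_{\mathcal{N}}$, so there you should apply the delta method jointly to unconditional ratios of means, as the paper does for a single TPR in its first-stage argument.
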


For each feature $X_k$, Theorem~\ref{stat22} provides a test of whether the feature contribution differs significantly across groups. Rejecting $H_0$ indicates that feature contributes differently to the metric $m$ across groups and therefore identifies a potential driver of group unfairness. As before, in Appendix \ref{Appendix-B} (see Proposition \ref{stat2}), we derive the asymptotic variance and the resulting test statistic for $m=TPR$.

These tests can be extended to multiple sensitive features. In some situations, applying group fairness independently to each sensitive feature 
can result in fairness at the marginal level, whereas taking groups defined by intersectional sensitive feature (\textit{e.g.}, gender, ethnicity) may detect unfairness. This is called  ``fairness gerrymandering'' \citep{kearns2018preventing}. In this regard, we propose a multi-stage ESL procedure that accounts for multiple sensitive features with binary levels. The result is presented in Appendix \ref{Appendix-A} (Theorem \ref{theromMULTI}).

\section{Application on Census Data}\label{EXAMPLE}
We illustrate the practical implications of our theorems on the Census Income dataset from the UCI Machine Learning Repository \citep{census_income_20}.\footnote{All code is accessible at the following \href{https://github.com/anonymous03032026/fairles_shapley.git}{github} repository.} This dataset contains 48,842 instances with 15 demographic features, including two binary sensitive features: ethnicity and gender.  The task is to predict whether an individual’s yearly income exceeds \$50.000 USD. To achieve this, we adopt a soft voting classifier\footnote{The soft voting model is an ensemble learning method where different base models are used to predict the probability of each class. It then takes the average of the probabilities of each class. The class with the highest weighted probability is declared as the winning prediction. 
} that combines five classifiers, namely, Decision Trees, XGBoost, Linear Support Vector Machine, Logistic Regression and Random Forest\footnote{Our choice of classifiers is primarily motivated by their ability to include penalty terms  that address class imbalance in the data.}. The dataset is characterized by gender imbalance, with a men to women ratio of $\frac{32,650}{16,192}=2.016$. Additionally, only 25\% of the  individuals earn over \$50.000 USD. 
A 70/30 train-test split is used, where a few features are used to train the classifier for illustration purposes, namely  age,   educational-num (years of education), worked hours-per-week (weekly hours worked), marital-status (married, never married or other), while gender (men or women) is treated as the sensitive attribute.

\subsection{Application of the main theorems}\label{A51}

We examine group fairness in terms of \textit{equal opportunity} by focusing on   differences in  ESL group contributions. Recall that equal opportunity is a relaxation of \eqref{SEP} and is equivalent to the following:
\begin{equation}\tag{SHAP-EOD}\label{SHAP-EOD}
\varphi_{g=1}^{ESL}(\mathbf{X}, v^{(TPR)}) = \varphi_{g=2}^{ESL}(\mathbf{X}, v^{(TPR)}) 
\end{equation}

According to  Theorem \ref{LES1.3}, the criterion \eqref{SHAP-EOD} can be assessed using ESL values. To address class imbalance, sampling weights are  adjusted for each classification model so that both classes contribute equally during training, with higher weights assigned to the underrepresented class. Consequently, it is appropriate to take a TPR of 0.5 as the outcome of a random classifier. To evaluate group fairness with respect to  \eqref{SHAP-EOD}, the group-ESL $\textbf{$\varphi_{g}^{ESL}(\mathbf{X}, v^{(TPR)})$}$ for each group are computed and reported  in Table \ref{tab:summary}.

\begin{table}[H]
\centering
\small
\renewcommand{\arraystretch}{1.2}
\setlength{\tabcolsep}{8pt}
\begin{tabular}{@{}lccc@{}}
\toprule
\textbf{ESL value} 
& \multirow{2}{*}{$v^{(TPR)}(X \big|_{\mathcal{A}} (\mathcal{N}))$} 
& \multicolumn{2}{c}{\textbf{$\varphi_{g}^{ESL}(\mathbf{X}, v^{(TPR)})$}} \\
& & (men) & (women) \\
\midrule
ES, Shapley, Consensus, LSP 
& \multirow{2}{*}{1.61}
& 1,047 (64.95\%) & 0.565 (35.05\%) \\
Solidarity 
& 
& 0.926 (57,48\%) & 0.685 (42,52\%) \\
\bottomrule
\end{tabular}
\caption{Summary of ESL values for men and women group}
\label{tab:summary}
\end{table}

From Table \ref{tab:summary}, the total relative deviation of TPR from the baseline, $v^{(TPR)}(X \big|_{\mathcal{A}} (\mathcal{N})) =1.61>1$, indicates that including the set $X \big|_{\mathcal{A}} (\mathcal{N})$ increases the TPR by 61\%. Yet, what matters for assessing group unfairness is the difference in group-ESL. According to ES, Shapley, Consensus and LSP, 64,95\% of the improvement is attributed to men, while women account  for only 35.05\%. According to the Solidarity value, the corresponding percentages are 57.48\% and 42.52\%, respectively. This difference between groups indicates that the improvement in TPR is unequally supported across groups, with men receiving higher contributions than women. This imbalance highlights group unfairness, as the classifier’s predictive gains are not shared equally.

To assess the statistical significance of group unfairness between men and women, the difference in group-ESL is tested using: 

\[
\left\{
\begin{aligned}
H_0 : \varphi_{g=1}^{ESL}(\mathbf{X}, v^{(TPR)})-\varphi_{g=2}^{ESL}(\mathbf{X}, v^{(TPR)})=0 &  \\
H_1 :\varphi_{g=1}^{ESL}(\mathbf{X}, v^{(TPR)})-\varphi_{g=2}^{ESL}(\mathbf{X}, v^{(TPR)})\neq 0  & 
\end{aligned}
\right.
\]

When dealing with binary groups, all ESL values share the same coefficients ($b_1=1$ and $b_2=1$), except for Solidary value (which uses $b_1=0.5$ and $b_2=1$). Since the $Z$-statistic (see Theorem \ref{stat1} in Appendix \ref{Appendix-A}) depends on $b_1$, it is  sufficient to test one ESL value out of five. The Equal Surplus was selected  because it has linear time complexity. The hypothesis test rejects the null hypothesis of fairness at the 5\% significance level,  indicating a significant difference in ESL group contribution of 0.482 (with 95\% CI:  $ [0.407,0.557]$).

To illustrate Proposition \ref{equiv-2} and Corollary \ref{LES1.3}, we apply the two-stage ESL to decompose the differences in  ESL group. Table \ref{tab:summ234} reports the contribution of each feature (to the contribution of each group-ESL). In other words, we aim to assess group fairness in terms of equal opportunity, according to the following definition:
\begin{equation}\tag{F-SHAP-EOD}\label{F-SHAP-EOD}
\mathcal{C}^k_{g=1}(v^{(TPR)}) = \mathcal{C}^k_{g=2}(v^{(TPR)}), 
\end{equation}
and test for differences in ESL feature contributions across groups,
\[
\left\{
\begin{aligned}
H_0 : \mathcal{C}^k_{g=1}(v^{(TPR)}) - \mathcal{C}^k_{g=2}(v^{(TPR)})&=0    \\
H_1 : \mathcal{C}^k_{g=1}(v^{(TPR)})- \mathcal{C}^k_{g=2}(v^{(TPR)}) &\neq0
\end{aligned}
\right.
\]
Table \ref{tab:summ234} reveals systematic differences in feature contributions across groups. For instance, using the ES value, ``Age'' contributes 0.681 to the men's group contribution and -0.137 to the women's group contribution, with a significant difference across groups  ($95\%$ CI: $[0.707, 0.928]]$). Conversely, ``Marital status'' contributes 0.231 to the men's group contribution and 0.633 to the women's group contribution, also showing a significant difference $(95\% \ \text{CI}: [ -0.598, -0.205]$). This pattern is consistent across other ESL values as well. Applying the majority voting rule, since at least 3 out of 5 ESL values reject $H_0$, for each of these features, the null hypothesis of equal feature contributions is rejected. These results indicate that the differences in feature contributions are not unique to any one ESL value but are observed consistently across multiple values. Hence (\ref{F-SHAP-EOD}) is violated, since ``Age'', ``Hours/Week'', and ``Marital status'' exhibit marked differences in their contributions across groups. These features therefore explain a large part of this gap. While \citet{pelegrina2024explaining}, identify  proxy features for gender, such as ``Marital status'', as the  main source of group unfairness, our results further indicate that ``Age'' and  ``Hours/Week'' also contribute significantly. This finding illustrates the necessity of the two-stage ESL  for a more granular group fairness assessment and motivates the development of asymptotically normal estimators for the two-stage ESL to statistically test for the significant contributions at both stages.

\begin{table}[H]
\centering
\small
\begin{tabular}{@{}llcccc@{}}
\toprule
\textbf{Method} & \textbf{Feature} & $\mathcal{C}^k_{g=1}(v^{(TPR)})$ & \textbf{CI ($g=1$)} & $\mathcal{C}^k_{g=2}(v^{(TPR)})$ & \textbf{CI ($g=2$)} \\
\midrule
\multirow{4}{*}{ES} 
& Age & $ 0.681^{***}$ & $[0.615, 0.747]$ & $-0.137^{***}$ & $[-0.201, -0.072]$ \\
& Educ Num & 0.009 & $[-0.09, 0.109]$ & $0.161^{**}$ & $[0.061, 0.261]$ \\
& Hours/Week & $0.126^{***}$ & $[0.056, 0.196]$ & $-0.092^{**}$ & $[-0.161, -0.023]$ \\
& Marital status & $0.231^{***}$ & $[0.128, 0.334]$ & $0.633^{***}$ & $[0.527, 0.738]$ \\
\midrule
\multirow{4}{*}{Shapley} 
& Age & $ 0.424^{***}$ & $[0.380.0.468]$ & $-0.006$ & $[-0.036, 0.024]$ \\
& Educ Num & $0.181^{***}$ & $[0.123,0.238]$ & $ 0.114^{***}$ & $[0.058, 0.169]$ \\
& Hours/Week & $0.257^{***}$ & $[0.219,0.296]$ & $ -0.023$ & $[-0.056, 0.009]$ \\
& Marital status & $0.185^{***}$ & $[0.110,0.260]$ & $0.481^{***}$ & $[0.398, 0.564]$ \\
\midrule
\multirow{4}{*}{Solidarity} 
& Age & $0.275^{***}$ & $[0.257, 0.292]$ & $0.143^{***}$ & $[0.128, 0.158]$ \\
& Educ Num & $0.198^{***}$ & $[0.178 ,0.218]$ & $ 0.156^{***}$ & $[0.137, 0.175]$ \\
& Hours/Week & $0.207^{***}$ & $[0.190,0.224]$ & $0.120^{***}$ & $[0.105, 0.135]$ \\
& Marital status & $0.247^{***}$ & $[0.225,0.269]$ & $0.266^{***}$ & $[0.241, 0.291]$ \\
\midrule
\multirow{4}{*}{Consensus} 
& Age & $0.552^{***}$ & $[0.505,0.599]$ &$-0.071^{***}$& $[-0.113, -0.030]$ \\
& Educ Num & $0.095^{*}$ & $[0.021,0.168]$ & $0.137^{***}$ & $[0.064, 0.210]$ \\
& Hours/Week & $0.192^{***}$ & $[0.144,0.240]$ & $-0.058^{*}$ & $[-0.103, -0.013]$ \\
& Marital status & $ 0.208^{***}$ & $[0.125,  0.291]$ & $0.557^{***} $ & $[0.468, 0.646]$ \\
\midrule
\multirow{4}{*}{LSP} 
& Age & $0.410^{***}$ & $[0.368,0.453]$ & $0.001$ & $[-0.032, 0.034]$ \\
& Educ Num & $0.188^{***}$ & $[0.131,0.244]$ & $  0.111 ^{***}$ & $[0.057, 0.166]$ \\
& Hours/Week & $ 0.269^{***}$ & $[0.229, 0.310]$ & $ -0.019$ & $[-0.054, 0.016]$ \\
& Marital status & $0.179^{***}$ & $[0.105,  0.253]$ & $0.472 ^{***}$ & $[0.390, 0.553]$ \\
\midrule
\multicolumn{6}{l}{\small{*$p < 0.05$, **$p < 0.01$, ***$p < 0.001$}} \\
\bottomrule
\end{tabular}
\caption{Feature contributions to group-ESL }
\label{tab:summ234}
\end{table}

\subsection{Computation times}

The implementability of our test is assessed by evaluating its computational cost. All computation times were obtained using the Toubkal supercomputer \citep{kissami2025toubkal}, using a single node with 56 CPU cores and no GPU resources. Computation is parallelized using multi-core CPU processing to accelerate the computation of LES values and their associated variance estimates.  The runtime of two approaches is compared: (i) our proposed two-stage ESL test based on a single plug-in variance estimator, and (ii) a nonparametric stratified bootstrap test for the two-stage ESL on the test set. Specifically, observations are resampled with replacement within each stratum defined by group and outcome label. For the bootstrap baseline, $B=1,000$ independent bootstrap replications are performed and parallelized across CPU cores. The first-stage group-ESL  and the second-stage feature contributions are recomputed, and  confidence intervals are constructed from the empirical distribution of the bootstrap replicates. 
In contrast to the plug-in approach (\textit{i.e.,} single variance estimate across ESL values), the bootstrap procedure requires separate recomputation for each ESL value, which increases the total runtime.

Table \ref{tab:contributions3333} reports feature contribution estimates with both asymptotic confidence intervals derived from the plug-in variance estimator and bootstrap confidence intervals. Across all ESL values and features, the two types of intervals are closely aligned with the same conclusions regarding the statistical significance of feature contributions.\footnote{
The discrepancy between the confidence intervals for "Educ Num" under the Equal Surplus and Consensus settings is mainly due to class imbalance: there are 501 positive for women compared with 2,862 for men. As a result, the variance of the estimated TPR for women is 5.7 time higher than the variance for men.} This supports the validity of the asymptotic normal approximation for two-stage ESL.

Table~\ref{tableeee}  reports the resulting computation times. Our two-stage ESL test runs in approximately 8 minutes,  since the variance estimation is calculated only once across all ESL. In contrast, the bootstrap approach requires approximately  more than 1h20 minutes for the second stage, as bootstrap resampling must be performed separately for each ESL value. These results highlight the  time efficiency of the two-stage ESL test for fairness evaluation compared to the computationally intensive bootstrap alternative.

\begin{table}[H]
\small
\centering
\begin{tabular}{@{}llcccc@{}}
\toprule
\textbf{Method} & \textbf{Feature} & $\Delta \mathcal{C}^k_{g}$ & \textbf{95\%CI} &$\mu_{bootstrap}(\Delta \mathcal{C}^k_{g})$& \textbf{95\% Bootstrap CI} \\
\midrule

\multirow{4}{*}{ES}
& Age           & $0.817^{***}$      & $[0.707,\,0.928]$ &  0.818    & $[0.784,   0.851]$ \\
& Educ Num      & $-0.152$           & $[-0.337,\,0.033]$ & -0.152    & $[-0.218,  -0.086]$ \\
& Hours/Week    & $0.218^{***}$      & $[0.100,\,0.337]$   & 0.218      & $[0.181,   0.255]$ \\
& Marital status  & $-0.402^{***}$     & $[-0.598,\,-0.205]$ & -0.400   & $[-0.476, -0.314]$ \\

\midrule
\multirow{4}{*}{Shapley}
& Age           & $0.430^{***}$      & $[0.366, 0.493]$   & 0.430  & $[0.399 ,   0.459]$ \\
& Educ Num      & $0.067$            & $[-0.035, 0.169]$ &  0.067   & $[ 0.012,    0.125 ]$ \\
& Hours/Week    & $0.281^{***}$      & $[0.217, 0.344]$ & 0.281    & $[0.251,    0.312]$ \\
& Marital status  & $-0.296^{***}$      & $[-0.439, -0.152]$ &  -0.294    & $[-0.377 ,  -0.210]$\\

\midrule
\multirow{4}{*}{Solidarity}
& Age           & $0.131^{***}$      & $[0.105,\,0.157]$ & 0.131      & $[0.121,   0.141]$ \\
& Educ Num      & $0.042^{**}$            &$[0.010,\,0.074]$ &   0.042    & $[0.028,   0.056]$ \\
& Hours/Week    & $0.087^{***}$       & $[0.061,\,0.113]$ &  0.087     & $[0.077  , 0.097]$ \\
& Marital status  & $-0.019$           & $[-0.059,\,0.021]$  & -0.019    & $[-0.039  , 0.003]$ \\

\midrule
\multirow{4}{*}{Consensus}
& Age           & $0.624^{***}$      & $[0.549,\,0.698]$  & 0.624  & $[0.596,   0.652]$ \\
& Educ Num      & $-0.043$           &$[-0.176,\,0.091]$  &   -0.043   & $[-0.103  , 0.016]$ \\
& Hours/Week    & $0.250^{***}$      & $[0.170,\,0.329]$ & 0.250    & $[0.220, 0.281]$ \\
& Marital status  & $-0.349^{***}$     & $[-0.508,\,-0.190]$ &-0.347    & $[-0.429 , -0.262]$ \\

\midrule
\multirow{4}{*}{LSP}
& Age           & $0.409^{***}$      &$[0.340,\,0.479]$  & 0.410    & $[0.379 ,  0.439]$ \\
& Educ Num      & $0.076$            & $[-0.025,\,0.178]$ &  0.076   & $[0.020, 0.133]$ \\
& Hours/Week    & $0.288^{***}$      & $[0.219,\,0.357]$&   0.288  & $[0.259,   0.319]$ \\
& Marital status  & $-0.292^{***}$     & $[-0.434,\,-0.151]$  & -0.291  & $[-0.373,  -0.206]$ \\

\bottomrule
\end{tabular}
\caption{\small Asymptotic confidence intervals and bootstrap confidence intervals}
\label{tab:contributions3333}
\end{table}


\begin{table}[H]
\centering
\small
\renewcommand{\arraystretch}{1.2}
\setlength{\tabcolsep}{8pt}
\begin{tabular}{@{}lcc@{}}
\toprule
Time complexity 
& Two-stage ESL 
& Bootstrap (two stage) \\
\midrule

ES
& \multirow{5}{*}{8 min }
& 1 h 37 min \\

Shapley
& 

& 1 h 34 min \\

Solidarity
&

& 1 h 18 min \\

Consensus
&

& 1 h 36 min \\

LSP
&

& 1 h 39 min \\

\bottomrule
\end{tabular}
\caption{Summary of time complexity for different methods}
\label{tableeee}
\end{table}

\subsection{Fairness mitigation}

The application to the Census Income dataset shows that our approach both assesses and explains unfairness present in the data. As a robustness check, we also verify that the method recognizes the absence of unfairness once a mitigation procedure is applied. Unfairness is corrected using Equalized Odds postprocessing algorithm \footnote{The algorithm directly implements the approach described in \citet{Hardt2016}.} from the AI Fairness 360 (AIF360) toolkit. The analyses presented in Section \ref{\A51} are then replicated  after fairness correction. Table \ref{tab:summary123} reports the ESL  group contributions following mitigation.

\begin{table}[H]
\centering
\small
\renewcommand{\arraystretch}{1.2}
\setlength{\tabcolsep}{8pt}
\begin{tabular}{@{}lccc@{}}
\toprule
\textbf{ESL value} 
& \multirow{2}{*}{$v^{(TPR)}(X \big|_{\mathcal{A}} (\mathcal{N}))$} 
& \multicolumn{2}{c}{\textbf{$\varphi_{g}^{ESL}(\mathbf{X}, v^{(TPR)})$}} \\
& & (men) & (women) \\
\midrule
ES, Shapley, Consensus, LSP     
& \multirow{2}{*}{1.365} 
& 0.706 (51.72\%) 
& 0.659 (48.28\%) \\
Solidarity  
&  
& 0.694 (50.84\%) 
& 0.671 (49.16\%) \\
\bottomrule
\end{tabular}
\caption{ESL group contributions (after fairness mitigation).}
\label{tab:summary123}
\end{table}

The difference in  ESL group contributions between women and men is close to zero across all ESL values. For Shapley, Consensus, and LSP, the difference is 0.047 (95\% CI: $[-0.028,0.122]$), indicating no statistically significant gender-based disparity. Then, a second-stage  decomposition is applied  to understand the consequence of fairness mitigation. Table \ref{tab:contributions_with_ci} summarizes the contributions of each feature to the ESL group contributions. For all ESL values, the difference in feature contributions are reduced. This decrease is reflected in more balanced contributions across groups, where  the remaining differences in contributions are not statistically significant at \%5 level.

\begin{table}[H]
\small
\centering
\renewcommand{\arraystretch}{1.15}
\setlength{\tabcolsep}{6pt}
\begin{tabular}{@{}llcccc@{}}
\toprule
\textbf{Method} 
& \textbf{Feature} 
& $\mathcal{C}^k_{g=1}(v^{(TPR)})$ 
& $\mathcal{C}^k_{g=2}(v^{(TPR)})$ 
& $\Delta \mathcal{C}^k$ 
& \textbf{95\% CI} \\
\midrule
\multirow{4}{*}{ES}
& Age            & -0.099 & -0.097 & -0.002 & $[-0.100, 0.096]$ \\
& Educ Num       & 0.192  &  0.168 &  0.024 & $[-0.164, 0.212]$ \\
& Hours/Week     & -0.099 & -0.098 & -0.001 & $[-0.096, 0.094]$ \\
& Marital status &  0.712 &  0.685 &  0.027 & $[-0.150, 0.204]$ \\
\midrule
\multirow{4}{*}{Shapley}
& Age            & 0.027  & 0.009 &  0.018 & $[-0.062, 0.098]$ \\ 
& Educ Num       & 0.132  & 0.131  &  0.001 & $[-0.103, 0.105]$ \\
& Hours/Week     & 0.008  & -0.000 &  0.008 & $[-0.068, 0.084]$ \\
& Marital status & 0.539  & 0.520  &  0.019 & $[-0.099, 0.137]$ \\
\midrule
\multirow{4}{*}{Solidarity}
& Age            & 0.109& 0.103 &  0.006 & $[-0.023, 0.035]$ \\
& Educ Num       & 0.163 & 0.158 & 0.005 & $[-0.028, 0.038]$ \\
& Hours/Week     & 0.105 & 0.100 & 0.005  & $[-0.024, 0.034]$ \\
& Marital status & 0.317 & 0.309 & 0.008 & $[-0.027, 0.043]$ \\
\midrule
\multirow{4}{*}{Consensus}
& Age            & -0.036 & -0.044 & 0.008 & $[-0.061, 0.077]$ \\
& Educ Num       &  0.162 &  0.149 & 0.013 & $[-0.121, 0.147]$ \\
& Hours/Week     & -0.045 & -0.049 & 0.004 & $[-0.062, 0.070]$ \\
& Marital status &  0.625 &  0.602 & 0.023 & $[-0.110, 0.156]$ \\
\midrule
\multirow{4}{*}{LSP}
& Age            & 0.031 & 0.014 & 0.017 & $[-0.064, 0.098]$ \\
& Educ Num       & 0.132 & 0.131 & 0.001 & $[-0.100, 0.102]$ \\
& Hours/Week     & 0.015 & 0.004 & 0.011 & $[-0.070, 0.092]$ \\
& Marital status & 0.528 & 0.509 & 0.019 & $[-0.094, 0.132]$ \\
\bottomrule
\end{tabular}
\caption{Contributions of features to group-ESL after fairness correction.}
\label{tab:contributions_with_ci}
\end{table}

\section{Conclusion}\label{conclusion}
 
We have proposed  a unified attribution based framework for assessing and testing group fairness  in binary classification models. By formulating standard group fairness criteria, namely independence, separation, and sufficiency in terms of  ESL values computed over classification metrics, we have provided a  bridge between axiomatic attribution theory and formal statistical notions of group fairness. Building on this, we have introduced a two-stage ESL decomposition: first between groups, then in a second stage we have decomposed each group’s contribution across features. This decomposition enables  a more granular and interpretable diagnosis of the sources of unfairness while remaining consistent with the  axiomatic foundations of explainability, namely efficiency, symmetry, and linearity embodied by Shapley value in particular and ESL values more generally.

From a practical perspective, the proposed approach offers computational tractability and robustness, achieved through the Equal Surplus value as a linear time alternative to classical Shapley computations. This makes large scale fairness analysis feasible for complex and realistic problems. The application to the Census Income dataset shows that our methodology  (i) detect  violations of group fairness through first-stage ESL value and allows for formal statistical testing; (ii) employs the two-stage ESL to identify  the specific features that drive discrepancies in group contributions, thereby not only recovering known sources of gender based disparity such as ``Marital status", but also highlighting additional drivers such as  ``Age", and ``hours-per-week" Hence, avoiding the conclusion that unfairness  arises only  from purely proxy features for gender and finally  provides a statistical test that is more time efficient than bootstrap test.

We have extended the two-stage ESL framework beyond the case of a single binary sensitive feature. Because ESL values are defined independently of the number of groups,  this approach can be extended to account for multiple sensitive features and their intersections, thereby supporting a   systematic fairness gerrymandering analysis within the  same axiomatic and computational framework.

\section*{Code and Data} Code and data are available on our anonymous github: \href{https://github.com/anonymous03032026/fairles_shapley.git}{link}

\printbibliography

\appendix
\section{Appendix A: ESL decompositions}\label{Appendix-A}
In this appendix, we provide the proofs of the theoretical results stated in Sections \ref{FER} and \ref{ULES}.

\subsection{Theorems on first stage and two-stage decompositions}

\subsubsection*{Proof of Theorem \ref{equiv1.3}}
The Shapley value applied to the characteristic function $v^{(m)}(X \big|_{\s_{\mathcal{A}}} (\mathcal{N})) $ such that $\s_{\mathcal{A}}\subseteq \mathcal{A}$ is defined as:
\begin{equation}\notag
\varphi^{Sh}_{g}(\mathbf{X}, v^{(m)}):=\sum_{\s_{\mathcal{A}} \subseteq \mathcal{A} \setminus \{g\}}  P(\s_{\mathcal{A}}) \Big(v^{(m)}(X \big|_{\s_{\mathcal{A}} \cup \{g\}} (\mathcal{N})
) - v^{(m)}(X \big|_{\s_{\mathcal{A}}} (\mathcal{N})) \Big)
\end{equation}
The Shapley value yields, for the  sensitive feature level associated to men:
\begin{align*}
\varphi^{Sh}_{g=1}(\mathbf{X}, v^{(m)})=  \frac{1}{2}\Big(v^{(m)}(X \big|_{\emptyset \cup \{1\}} (\mathcal{N})
) - v^{(m)}(X \big|_{\emptyset} (\mathcal{N})) \Big)+ \frac{1}{2}\Big(v^{(m)}(X \big|_{\{2\} \cup \{1\}} (\mathcal{N})
) - v^{(m)}(X \big|_{\{2\}} (\mathcal{N})) \Big)
\end{align*}
Similarly, the contribution of the women group is:
\begin{align*}
\varphi^{Sh}_{g=2}(\mathbf{X}, v^{(m)})=  \frac{1}{2}\Big(v^{(m)}(X \big|_{\emptyset \cup \{2\}} (\mathcal{N})
) - v^{(m)}(X \big|_{\emptyset} (\mathcal{N})) \Big)+ \frac{1}{2}\Big(v^{(m)}(X \big|_{\{1\} \cup \{2\}} (\mathcal{N})
) - v^{(m)}(X \big|_{\{1\}} (\mathcal{N})) \Big)
\end{align*}
\begin{enumerate}
\item Let us assume that  $f_t$ respects \eqref{IND}. We have by definition:

\begin{align*}
v^{(SR)}(X \big|_{\{1\}} (\mathcal{N})) &= 
\frac{\mathbb{P}(\hat{Y} = 1 \mid A =0)}{\mathbb{P}(\hat{Y} = 1 \mid A \in \emptyset)}(1-\mathds{1}_{|\{1\}|})=\frac{\mathbb{P}(\hat{Y} = 1 \mid A =0 )}{\mathbb{P}(\hat{Y} = 1 \mid A \in \emptyset)}  \\
v^{(SR)}(X \big|_{\{2\}} (\mathcal{N})) &= 
\frac{\mathbb{P}(\hat{Y} = 1 \mid A =1)}{\mathbb{P}(\hat{Y} = 1 \mid A \in \emptyset)}(1-\mathds{1}_{|\{2\}|})=\frac{\mathbb{P}(\hat{Y} = 1 \mid A =1 )}{\mathbb{P}(\hat{Y} = 1 \mid A \in \emptyset)} 
\end{align*}

Therefore, by \eqref{IND} it comes,
\begin{align*}
 v^{(SR)}(X \big|_{\{1\}} (\mathcal{N})) - v^{(SR)}(X \big|_{\{2\}} (\mathcal{N})) =\frac{\mathbb{P}(\hat{Y} = 1 \mid A =0)}{\mathbb{P}(\hat{Y} = 1 \mid A \in \emptyset)}  - \frac{\mathbb{P}(\hat{Y} = 1 \mid A =1)}{\mathbb{P}(\hat{Y} = 1 \mid A \in \emptyset)}  =0
\end{align*}
Then,
\begin{align*}
   \varphi^{Sh}_{g=1}(\mathbf{X}, v^{(SR)})-\varphi^{Sh}_{g=2}(\mathbf{X}, v^{(SR)})&=0
\end{align*}

Conversely, if $\varphi_{g=1}^{Sh}(\mathbf{X}, v^{(SR)})=\varphi_{g=2}^{Sh}(\mathbf{X}, v^{(SR)})$, it is easy to show that \eqref{IND} holds directly since  $v^{(SR)}(X \big|_{\{1\}}(\mathcal{N}))-v^{(SR)}(X \big|_{\{2\}}(\mathcal{N}))=0$.

\item Let us assume that   $f_t$ respects \eqref{SEP}. We have by definition, 
\begin{align*}
v^{(TPR )}(X \big|_{\{1\}} (\mathcal{N})) &= 
\frac{\mathbb{P}(\hat{Y} = 1 \mid Y=1,A =0)}{\mathbb{P}(\hat{Y} = 1 \mid Y=1, A \in \emptyset)}(1-\mathds{1}_{|\{1\}|})=\frac{\mathbb{P}(\hat{Y} = 1 \mid Y=1,A =0 )}{\mathbb{P}(\hat{Y} = 1 \mid Y=1,A \in \emptyset)}  \quad \\
v^{(TPR )}(X \big|_{\{2\}} (\mathcal{N})) &= 
\frac{\mathbb{P}(\hat{Y} = 1 \mid Y=1,A =1)}{\mathbb{P}(\hat{Y} = 1 \mid Y=1, A \in \emptyset)}(1-\mathds{1}_{|\{2\}|})=\frac{\mathbb{P}(\hat{Y} = 1 \mid Y=1,A =1 )}{\mathbb{P}(\hat{Y} = 1 \mid Y=1,A \in \emptyset)}  \quad 
\end{align*}

Then, by \eqref{SEP} it comes,
\begin{align*}
 v^{(TPR )}(X \big|_{\{1\}} (\mathcal{N})) - v^{(TPR )}(X \big|_{\{2\}} (\mathcal{N})) =\frac{\mathbb{P}(\hat{Y} = 1 \mid Y=1,A =0)}{\mathbb{P}(\hat{Y} = 1 \mid Y=1,A \in \emptyset)}  -\frac{\mathbb{P}(\hat{Y} = 1 \mid Y=1,A =1 )}{\mathbb{P}(\hat{Y} = 1 \mid Y=1,A \in \emptyset)} =0
\end{align*}
Then,
\begin{align*}
   \varphi^{Sh}_{g=1}(\mathbf{X}, v^{(TPR)})-\varphi^{Sh}_{g=2}(\mathbf{X}, v^{(TPR)})&=0
\end{align*}
The same logic holds when using $v^{(FPR )}$. Conversely, if $\varphi_{g=1}^{Sh}(\mathbf{X}, v^{(TPR)})=\varphi_{g=2}^{Sh}(\mathbf{X}, v^{(TPR )})$, and $\varphi_{g=1}^{Sh}(\mathbf{X}, v^{(FPR)})=\varphi_{g=2}^{Sh}(\mathbf{X}, v^{(FPR )})$ it is easy to show that \eqref{SEP} holds.

\item  Let assume that  $f_t$ respects (\ref{SUF}).  We have by definition, 
\begin{align*}
v^{(PPV)}(X \big|_{\{1\}} (\mathcal{N})) &= 
\frac{\mathbb{P}(Y = 1 \mid \hat{Y}=1, A =0 )}{\mathbb{P}(Y = 1 \mid \hat{Y}=1,A \in \emptyset)}(1-\mathds{1}_{|\{1\}|})=\frac{\mathbb{P}(Y = 1 \mid \hat{Y}=1,A =0)}{\mathbb{P}(Y = 1 \mid \hat{Y}=1,A \in \emptyset)}  \\
v^{(PPV)}(X \big|_{\{2\}} (\mathcal{N})) &= 
\frac{\mathbb{P}(Y = 1 \mid \hat{Y}=1, A =1 )}{\mathbb{P}(Y = 1 \mid \hat{Y}=1,A \in \emptyset)}(1-\mathds{1}_{|\{2\}|})=\frac{\mathbb{P}(Y = 1 \mid \hat{Y}=1,A =1)}{\mathbb{P}(Y = 1 \mid \hat{Y}=1,A \in \emptyset)} 
\end{align*}

Then, by \eqref{SUF} we have,
\begin{align*}
 v^{(PPV)}(X \big|_{\{1\}} (\mathcal{N})) - v^{(PPV)}(X \big|_{\{2\}} (\mathcal{N})) =\frac{\mathbb{P}(Y = 1 \mid \hat{Y}=1,A =0)}{\mathbb{P}(Y = 1 \mid \hat{Y}=1,A \in \emptyset)}   -\frac{\mathbb{P}(Y = 1 \mid \hat{Y}=1,A =1)}{\mathbb{P}(Y = 1 \mid \hat{Y}=1,A \in \emptyset)}   =0
\end{align*}

Therefore,
\begin{align*}
   \varphi^{Sh}_{g=1}(\mathbf{X}, v^{(PPV)})-\varphi^{Sh}_{g=2}(\mathbf{X}, v^{(PPV )})&=0
\end{align*}

The same logic holds when using $v^{(NPV )}$. Conversely, if $\varphi_{g=1}^{Sh}(\mathbf{X}, v^{(PPV)})=\varphi_{g=2}^{Sh}(\mathbf{X}, v^{(PPV )})$, and $\varphi_{g=1}^{Sh}(\mathbf{X}, v^{(NPV)})=\varphi_{g=2}^{Sh}(\mathbf{X}, v^{(NPV )})$, it is easy to show that  \eqref{SUF} holds.

\end{enumerate}

\subsubsection*{Proof of Proposition \ref{equiv-2}}

From Theorem \ref{equiv1.3}, the contribution of level $g$ is given by
\[
\varphi_{g}^{Sh}(\mathbf{X}, v^{(m)})
=
\frac{1}{2}\, v^{(m)}(X \big|_{\{\mathcal{A}\}}(\mathcal{N})),
\qquad \text{for all } g \in \mathcal{A}.
\]
Applying the Shapley value to the characteristic function $\frac{1}{2}\, v^{(m)}(X \big|_{\{\mathcal{A}\}}(\mathcal{N}))$, which is independent of $g$, yields, by additivity of the Shapley value,$
\frac{1}{2}\, \varphi_{k}^{Sh}(\mathbf{X}, v^{(m)}).
$
The converse implication $(\Leftarrow)$ follows directly, which completes the proof. 

\subsubsection*{Proof of Theorem \ref{LES1.3}}
We work with the case of binary sensitive feature  with $v^{(.)}(X \big|_{\{\emptyset\}}(\mathcal{N}))=0$, then
\begin{align*}
\varphi_{g=1}^{ESL}(\mathbf{X}, v^{(m)}) &= \frac{1}{2}\left( b_{2}v^{(m)}(X \big|_{\mathcal{A}}(\mathcal{N})) - b_{1}v^{(m)}(X \big|_{\{2\}}(\mathcal{N})) \right) +  \frac{1}{2} b_{1}v^{(m)}(X \big|_{\{1\}}(\mathcal{N})) \\
\varphi_{g=2}^{ESL}(\mathbf{X}, v^{(m)}) &= \frac{1}{2}\left( b_{2}v^{(m)}(X \big|_{\mathcal{A}}(\mathcal{N})) - b_{1}v^{(m)}(X \big|_{\{1\}}(\mathcal{N})) \right) +  \frac{1}{2} b_{1}v^{(m)}(X \big|_{\{2\}}(\mathcal{N}))
\end{align*}
Thus, 
\begin{align*}
\varphi_{g=1}^{ESL}(\mathbf{X}, v^{(m)})-\varphi_{g=2}^{ESL}(\mathbf{X}, v^{(m)})= \frac{1}{2}&(b_{1}(v^{(m)}(X \big|_{\{1\}}(\mathcal{N}))-v^{(m)}(X \big|_{\{2\}}(\mathcal{N})))\\
&+ b_{1}(v^{(m)}(X \big|_{\{1\}}(\mathcal{N}))-v^{(m)}(X \big|_{\{2\}}(\mathcal{N}))))
\end{align*}
For the five ESL values, we have $b_2=1$ and $b_1>0$. Therefore,
\begin{equation} \label{LESdiff}\notag
\varphi_{g=1}^{ESL}(\mathbf{X}, v^{(m)})-\varphi_{g=2}^{ESL}(\mathbf{X}, v^{(m)})=b_1(v^{(m)}(X \big|_{\{1\}}(\mathcal{N}))-v^{(m)}(X \big|_{\{2\}}(\mathcal{N})))
\end{equation}
Assume that either \eqref{IND}, \eqref{SEP}, or \eqref{SUF} holds. Then,
\begin{align*}
v^{(m)}(X \big|_{\{1\}}(\mathcal{N}))= v^{(m)}(X \big|_{\{2\}}(\mathcal{N}))= v^{(m)}(X \big|_{\mathcal{A}}(\mathcal{N})).
\end{align*}
Therefore,
\[
\varphi_{g=1}^{ESL}(X, v^{(m)})
=
\varphi_{g=2}^{ESL}(X, v^{(m)}),
\]
and, in particular,
\[
\varphi_{g=1}^{ESL}(X, v^{(m)})
=
\frac{b_2}{2}\, v^{(m)}(X \big|_{\mathcal{A}}(\mathcal{N})).
\]

Conversely, assume that
\[
\varphi_{g=1}^{ESL}(X, v^{(m)})
=
\varphi_{g=2}^{ESL}(X, v^{(m)}).
\]
Then, we have either
\[
v^{(m)}(X \big|_{\{1\}}(\mathcal{N}))
-
v^{(m)}(X \big|_{\{2\}}(\mathcal{N}))
= 0
\quad \text{or} \quad
b_1 = 0.
\]
Since $b_1 > 0$, it follows that
\[
v^{(m)}(X \big|_{\{1\}}(\mathcal{N}))
=
v^{(m)}(X \big|_{\{2\}}(\mathcal{N}).
\]
Therefore, \eqref{IND}, \eqref{SEP}, or \eqref{SUF} holds.

\subsection{Generalization: Proof of the multi-stage ESL decomposition}

\begin{theorem}\label{theromMULTI} \textbf{\emph{(Multi-stage ESL)}}  Let $f_t$ be a binary  classifier and $\mathcal{A}_j$ a binary set of sensitive feature levels. The multi-stage ESL value $\varphi_{k}^{Sh} \circ ... \circ \varphi_{a^{i_1}_{1},...,a^{i_s}_{s}}^{ESL}(\mathbf{X}, v^{(m)})$ for all $k\in {1,\cdots,N}$ and for all $a^{i_1}_{1}\in \mathcal{A}_1,...,a^{i_s}_{s}\in \mathcal{A}_s$ yields the following equivalences:
\[
\begin{aligned}
\emph{(i)}\quad & f_t \text{ respects \eqref{IND}}
&&\iff&&
\mathcal{C}^k_{a^{i_1}_{1},...,a^{i_s}_{s}}\!\bigl(v^{(SR)}\bigr)
= (\frac{b_2}{2})^s\varphi_{k}^{ESL}(\mathbf{X}, v^{(SR)}) \: \forall k \in N
\\[4pt]
\emph{(ii)}\quad & f_t \text{ respects \eqref{SEP}}
&&\iff&&
\left\{
\begin{aligned}
\mathcal{C}^k_{a^{i_1}_{1},...,a^{i_s}_{s}}\!\bigl(v^{(TPR)}\bigr)
&= (\frac{b_2}{2})^s\varphi_{k}^{ESL}(\mathbf{X}, v^{(TPR)})\: \forall k \in N,\\
\mathcal{C}^k_{a^{i_1}_{1},...,a^{i_s}_{s}}\!\bigl(v^{(FPR)}\bigr)
&=  (\frac{b_2}{2})^s\varphi_{k}^{ESL}(\mathbf{X}, v^{(FPR)})\: \forall k \in N,
\end{aligned}
\right.
\\[4pt]
\emph{(iii)}\quad & f_t \text{ respects \eqref{SUF}}
&&\iff&&
\left\{
\begin{aligned}
\mathcal{C}^k_{a^{i_1}_{1},...,a^{i_s}_{s}}\!\bigl(v^{(PPV)}\bigr)
&=  (\frac{b_2}{2})^s\varphi_{k}^{ESL}(\mathbf{x}, v^{(PPV)})\: \forall k \in N,\\
\mathcal{C}^k_{a^{i_1}_{1},...,a^{i_s}_{s}}\!\bigl(v^{(NPV)}\bigr)
&= (\frac{b_2}{2})^s\varphi_{k}^{ESL}(\mathbf{x}, v^{(NPV)})\: \forall k \in N.
\end{aligned}
\right.
\end{aligned}
\]

\end{theorem}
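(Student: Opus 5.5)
The plan is an induction on the number $s$ of binary sensitive features. Each step reuses the first-stage computation of Theorem \ref{LES1.3}, and the last step closes with the linearity argument behind Proposition \ref{equiv-2} and Corollary \ref{equiv-41}.

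The first task is to fix the nested construction precisely. Level $1$ decomposes $v^{(m)}(X|_{\mathcal{A}_1\times\cdots\times\mathcal{A}_s}(\mathcal{N}))$ over the two levels of $\mathcal{A}_1$ with an ESL value, giving $\varphi^{ESL}_{a_1^{i_1}}$. Level $j$ then treats the previous-stage contribution as a characteristic function on the coalitions of $\mathcal{A}_j$, with the empty coalition given value $0$, and applies the ESL value again. The last stage applies $\varphi^{ESL}_k$ over features to obtain $\mathcal{C}^k_{a_1^{i_1},\dots,a_s^{i_s}}$.

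The core lemma is the $a=2$ identity from the proof of Theorem \ref{LES1.3}. For any two-player game $w$ with $w(\emptyset)=0$,
\[
\varphi^{ESL}_{g}(w)=\tfrac{b_2}{2}\,w(\{1,2\})+\tfrac{b_1}{2}\bigl(w(\{g\})-w(\{g'\})\bigr).
\]
Hence $\varphi^{ESL}_g(w)=\tfrac{b_2}{2}w(\{1,2\})$ for both $g$ if and only if $w(\{1\})=w(\{2\})$, using $b_1>0$.

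For the direction ($\Rightarrow$), assume the criterion holds for the classifier in the intersectional sense. Then the metric is the same on every cell and on every union of cells, so $v^{(m)}$ restricted to any coalition equals its value on the full set. Applying the lemma stage by stage multiplies the value by $b_2/2$ at each of the $s$ stages, giving $(b_2/2)^s\,v^{(m)}(\cdot)$ on every leaf, and this holds for every feature subset $\mathcal{S}_{\mathcal{N}}$ because the restriction commutes with feature selection. The leaf characteristic function is therefore $(b_2/2)^s$ times the original feature game. Linearity of $\varphi^{ESL}_k$ gives $\mathcal{C}^k=(b_2/2)^s\varphi^{ESL}_k(\mathbf{X},v^{(m)})$, exactly as in the proof of Proposition \ref{equiv-2}. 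Cases (ii) and (iii) repeat this argument for each of the two metrics.

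For the direction ($\Leftarrow$), sum the identity $\mathcal{C}^k=(b_2/2)^s\varphi^{ESL}_k$ over $k$. Efficiency of the feature-stage ESL value then shows that all leaf contributions coincide with $(b_2/2)^s v^{(m)}(X|_{\mathcal{A}}(\mathcal{N}))$. Peel the stages back from the innermost outward. At stage $j$, sibling contributions are equal, so the lemma together with $b_1>0$ forces the two restricted metric values of the previous stage to be equal. Inductively, the metric is equal across the two levels of each $\mathcal{A}_j$, which is (IND), (SEP) or (SUF) for each sensitive feature.

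The main obstacle is the converse. Equality of leaves only shows that the metric coincides on the nested sub-coalitions actually visited by the recursion. Whether that amounts to marginal fairness or intersectional fairness depends on how the restricted games are defined at intermediate stages. I would resolve this by stating the criteria relative to the cells $a_1^{i_1},\dots,a_s^{i_s}$ and checking that the inductive peeling recovers equality on each cell. If that fails, the fallback is to weaken the claim to the marginal criterion for each $\mathcal{A}_j$.
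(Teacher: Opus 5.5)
Your ($\Rightarrow$) argument follows the paper's proof: the two-player identity $\varphi^{ESL}_g(w)=\tfrac{b_2}{2}w(\{1,2\})\pm\tfrac{b_1}{2}\bigl(w(\{1\})-w(\{2\})\bigr)$ from the proof of Theorem~\ref{LES1.3}, a factor $b_2/2$ per stage under cell-wise fairness, recurrence on $s$, and additivity at the feature stage. However, the step ``this holds for every feature subset because the restriction commutes with feature selection'' does not follow. The feature-stage game sends $\mathcal{S}$ to the leaf contribution of the classifier built on $\mathbf{X}(\mathcal{S})$. You therefore need the criterion to hold for every sub-model $\hat{Y}_{\mathcal{S}}$, and fairness of $f_t$ does not give that. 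Already for $s=1$ and two features, let $\Delta(\mathcal{S})=v^{(m)}(X|_{\{1\}}(\mathcal{S}))-v^{(m)}(X|_{\{2\}}(\mathcal{S}))$ vanish for $\mathcal{S}=\{1,2\}$ and $\{2\}$ but not for $\{1\}$. Linearity then gives $\mathcal{C}^1_g-\tfrac{b_2}{2}\varphi^{ESL}_1(\mathbf{X},v^{(m)})=\pm\tfrac{b_1^2}{4}\Delta(\{1\})\neq 0$. The paper makes the same tacit assumption when it calls the leaf game ``independent of ethnicity and gender levels''. You should state it as a hypothesis.

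The more serious gap is in ($\Leftarrow$), which the paper only asserts ``holds directly''. Your peeling step is wrong at inner stages. Equality of siblings gives $w(\{1\})=w(\{2\})$, where $w(\{r\})$ is the previous-stage \emph{contribution} computed on the data with $A_j=r$; that is a signed combination of metrics, not a metric. Take $s=2$, with cell metrics $v_{gr}$ and pooled metrics $v_{g\cdot}$, $v_{\cdot r}$. Equality of the four leaves is then equivalent to three conditions:
$v_{1\cdot}=v_{2\cdot}$ (compare the sums of the two leaves under each gender);
$v_{\cdot 1}=v_{\cdot 2}$ (add the two sibling equations);
$v_{11}-v_{21}=v_{12}-v_{22}=:d$ (subtract them, using $b_1\neq 0$).
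That is marginal fairness plus no interaction. It is weaker than the cell-wise fairness your ($\Rightarrow$) assumes, so the equivalence does not close.

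The missing ingredient is that pooled metrics are weighted averages of cell metrics. Write $v_{1\cdot}=\lambda_1v_{11}+(1-\lambda_1)v_{12}$, $v_{2\cdot}=\lambda_2v_{21}+(1-\lambda_2)v_{22}$, $v_{\cdot1}=\mu_1v_{11}+(1-\mu_1)v_{21}$ and $v_{\cdot2}=\mu_2v_{12}+(1-\mu_2)v_{22}$. The two marginal conditions then give $v_{21}-v_{22}=(\mu_2-\mu_1)d$ and $d=(\lambda_2-\lambda_1)(\mu_2-\mu_1)d$. If every cell has positive mass of the conditioning event, all weights lie in $(0,1)$, so $d=0$ and all four cells are equal. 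Thus no fallback to marginal fairness is needed for $s=2$, but your argument does not reach this conclusion. For general $s$, equal leaves amount to the vanishing of all $2^s-1$ pooled interaction contrasts. The real content of the converse is proving that these conditions force a constant cell metric.
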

\begin{proof}

Let assume that we have two sensitive features with binary levels, gender (men, women) and ethnicity (majority, minority). For the ease of exposition, let $\mathcal{G}= \{1,2\}$ be the set of gender levels where $g\in \mathcal{G}$. Similarly, let $\mathcal{R}\in \{1,2\}$  be the set of ethnicity levels where $r\in\mathcal{R}$. In the first stage,  ESL value is applied  to determine the contribution of gender levels. Following Theorem \ref{LES1.3} we have: 
\begin{align*}
\varphi_{g=1}^{ESL}(\X, v^{(m)}) &= \frac{1}{2}\left( b_{2}v^{(m)}(X \big|_{\mathcal{G}}(\mathcal{N})) - b_{1}v^{(m)}(X \big|_{\{2\}}(\mathcal{N})) \right) +  \frac{1}{2} b_{1}v^{(m)}(X \big|_{\{1\}}(\mathcal{N})) \\
\varphi_{g=2}^{ESL}(\X, v^{(m)}) &= \frac{1}{2}\left( b_{2}v^{(m)}(X \big|_{\mathcal{G}}(\mathcal{N})) - b_{1}v^{(m)}(X \big|_{\{1\}}(\mathcal{N})) \right) +  \frac{1}{2} b_{1}v^{(m)}(X \big|_{\{2\}}(\mathcal{N}))
\end{align*}
Then in a second stage, ESL value is applied on ethnicity levels  to determine their contribution to $\varphi_{g=1}^{ESL}(\X, v^{(m)})$. Let $\mathcal{S}\subseteq \mathcal{N}$, define the subset of features without ethnicity. For simplicity, let take $\z_g \equiv X \big|_{\mathcal{G}}$, $\z_1\equiv X \big|_{\{1\}}$ and $\z_2\equiv X \big|_{\{2\}}$ as the features defined on each test data defined with respect to  gender levels $g\in \mathcal{G}$. Thus,
\begin{align*}
\varphi_{g=1,r=1}^{ESL}(\z, v^{(m)}) &= \frac{1}{4}b_{2}\left( b_2v^{(m)}(\z_g \big|_{\mathcal{R}}(\mathcal{S})) -b_1v^{(m)}(\z_g  \big|_{\{2\}}(\mathcal{S})) +b_1v^{(m)}(\z_g  \big|_{\{1\}}(\mathcal{S}))  \right)\\
&-\frac{1}{4}b_{1}\left( b_2v^{(m)}(\z_2 \big|_{\mathcal{R}}(\mathcal{S})) -b_1v^{(m)}(\z_2  \big|_{\{2\}}(\mathcal{S})) +b_1v^{(m)}(\z_2  \big|_{\{1\}}(\mathcal{S}))  \right)\\
&+\frac{1}{4}b_{1}\left( b_2v^{(m)}(\z_1 \big|_{\mathcal{R}}(\mathcal{S})) -b_1v^{(m)}(\z_1  \big|_{\{2\}}(\mathcal{S})) +b_1v^{(m)}(\z_1  \big|_{\{1\}}(\mathcal{S}))  \right)
\end{align*}
Similarity, we find:
\begin{align*}
\varphi_{g=1,r=2}^{ESL}(\z, v^{(m)})
&= \frac{1}{4}b_{2}\Bigl(
    b_2v^{(m)}(\z_g \big|_{\mathcal{R}}(\mathcal{S}))
    - b_1v^{(m)}(\z_g \big|_{\{1\}}(\mathcal{S}))
    + b_1v^{(m)}(\z_g \big|_{\{2\}}(\mathcal{S}))
\Bigr) \\
&\quad - \frac{1}{4}b_{1}\Bigl(
    b_2v^{(m)}(\z_2 \big|_{\mathcal{R}}(\mathcal{S}))
    - b_1v^{(m)}(\z_2 \big|_{\{1\}}(\mathcal{S}))
    + b_1v^{(m)}(\z_2 \big|_{\{2\}}(\mathcal{S}))
\Bigr) \\
&\quad + \frac{1}{4}b_{1}\Bigl(
    b_2v^{(m)}(\z_1 \big|_{\mathcal{R}}(\mathcal{S}))
    - b_1v^{(m)}(\z_1 \big|_{\{1\}}(\mathcal{S}))
    + b_1v^{(m)}(\z_1 \big|_{\{2\}}(\mathcal{S}))
\Bigr)
\end{align*}

This yields the following difference:
\begin{align*}
\varphi_{g=1,r=1}^{ESL}(\z, v^{(m)})
&-\varphi_{g=1,r=2}^{ESL}(\z, v^{(m)}) \\
&= \frac{b_1b_2}{2}
\left(
    v^{(m)}\left(\z_g\big|_{\{1\}}\right)
    - v^{(m)}\left(\z_g\big|_{\{2\}}\right)
\right) \\
&\quad + \frac{b_1^2}{2}\Bigl[
    \left(
        v^{(m)}\left(\z_1\big|_{\{1\}}\right)
        - v^{(m)}\left(\z_1\big|_{\{2\}}\right)
    \right) \\
&\qquad\qquad
    - \left(
        v^{(m)}\left(\z_2\big|_{\{1\}}\right)
        - v^{(m)}\left(\z_2\big|_{\{2\}}\right)
    \right)
\Bigr].
\end{align*}

Let assume that either \eqref{IND}, \eqref{SEP}, or \eqref{SUF} holds at the ethnicity level. Hence,
\begin{align*}
v^{(m)}(\z_{(.)} \big|_{\{1\}}(\mathcal{S})= v^{(m)}(\z_{(.)} \big|_{\{2\}}(\mathcal{S}))
&= v^{(m)}(\z_{(.)} \big|_{\mathcal{R}}).
\end{align*}
Therefore,
\[
\varphi_{g=1,r=1}^{ESL}(\z, v^{(m)})
=
\varphi_{g=1,r=2}^{ESL}(\z, v^{(m)}),
\]
such that
$$
\varphi_{g=1,r=1}^{ESL}(\z, v^{(m)})=\frac{1}{4}\Big(b_2^2v^{(m)}(\z_g\big|_{\mathcal{R}}(\mathcal{S})-b_1b_2v^{(m)}(\z_2\big|_{\mathcal{R}}(\mathcal{S})+b_1b_2v^{(m)}(\z_1\big|_{\mathcal{R}}(\mathcal{S})\Big)
$$  
If we assume that  group fairness under (\ref{IND}), (\ref{SEP}) or (\ref{SUF}) holds at gender level, the contribution is reduced to:
$$
\varphi_{g=1,r=1}^{ESL}(\z, v^{(m)})=\frac{1}{4}b_2^2v^{(m)}(\z_g\big|_{\mathcal{R}}(\mathcal{S}))
$$ 
The same procedure is applied to determine $\varphi_{g=2,r=1}^{ESL}(\z, v^{(m)})$.
Note that, under group fairness, $\varphi_{g=1}^{ESL}(\X, v^{(m)})=\varphi_{g=2}^{ESL}(\X, v^{(m)})$. By the axiom of efficiency, $$\varphi_{g=1,r=1}^{ESL}(\z, v^{(m)})+\varphi_{g=1,r=2}^{ESL}(\z, v^{(m)})=\varphi_{g=2,r=1}^{ESL}(\z, v^{(m)})+\varphi_{g=2,r=2}^{ESL}(\z, v^{(.)})$$ 
Since $\varphi_{g=1,r=1}^{ESL}(\z, v^{(m)})=\varphi_{g=1,r=2}^{ESL}(\z, v^{(m)})$, then
$$\varphi_{g=1,r=1}^{ESL}(\z, v^{(m)})=\varphi_{g=2,r=1}^{ESL}(\z, v^{(m)})$$ and, $$\varphi_{g=1,r=2}^{ESL}(\z, v^{(m)})=\varphi_{g=2,r=2}^{ESL}(\z, v^{(m)})$$ 
Then, group fairness is ensured for all nested groups. 

Similarly to Theorem \ref{equiv1.3}, since the contribution of the ethnicity level $j\in \mathcal{R}$ is $\varphi_{g=1,r=1}^{ESL}(\z, v^{(m)})=\varphi_{g=1,r=2}^{ESL}(\z, v^{(m)})=\frac{1}{4}b_2^2v^{(m)}(\z_g\big|_{\mathcal{R}}(\mathcal{S}))= \frac{1}{4}b_2^2v^{(m)}(X \big|_{{
\mathcal{G}}\cup{\mathcal{R}}}(\mathcal{S})))$. Then, applying the ESL value to the characteristic function $\varphi_{g=1,r}^{ESL}(\z, v^{(.)})$, which is independent of ethnicity and gender levels, yields by the additivity of the ESL, $\mathcal{C}^k_{g=1,r}(v^{(.)})=\frac{1}{4}b_2^2\varphi_{k}^{ESL}(\mathbf{X}, v^{(m)})$. Similarly we find  $\mathcal{C}^k_{g=2,r}(v^{(.)})=\frac{1}{4}b_2^2\varphi_{k}^{ESL}(\mathbf{X}, v^{(m)})$.

When having multiple sensitive features with binary levels, by recurrence, the contribution of the $s^{th}$ sensitive feature with level $j$ is expressed as, 
$$\varphi_{a^{i_1}_{1},...,a^{i_s}_{s}}^{ESL}(\mathbf{X}, v^{(.)})=\Big(\frac{b_2}{2}\Big)^sv^{(m)}(X \big|_{{\mathcal{A}_1}\cup...\cup{\mathcal{A}_s}}(\mathcal{S})))
$$ 
Therefore, $\mathcal{C}^k_{a^{i_1}_{1},...,a^{i_s}_{s}}(v^{(.)})=(\frac{b_2}{2})^s\varphi_{k}^{ESL}(\mathbf{X}, v^{(m)})$. The other direction $(\Leftarrow)$ holds directly, and this ends the proof. 
\end{proof}

\section{Appendix B: Testing equality of opportunity}\label{Appendix-B}

We provide a statistical fairness test with respect to equality of opportunity  using the ESL values  proposed in  Theorem \ref{LES1.3} and Corollary \ref{equiv-41}. Recall that equality of opportunity \eqref{EOD} is a relaxation of \eqref{SEP}:
\begin{equation}\tag{EOD}\label{EOD}
 \mathbb P(\hat{Y}=1|Y=1,A=1)= \mathbb P(\hat{Y}=1|Y=1,A=2)
\end{equation}
Importantly, the framework can be adapted to test other group fairness criteria, namely \eqref{IND}, \eqref{SEP} and \eqref{SUF}  by modifying the characteristic function $v^{(m)}$.

\subsection{First stage ESL statistical test} 

To assess whether group fairness holds with respect to equality of opportunity \eqref{EOD}, the following null
hypothesis derived from Theorem \ref{LES1.3} is introduced:
\begin{equation}\notag
\left\|
\begin{array}{l}
H_0 : \varphi_{g=1}^{ESL}(\mathbf{X}, TPR) = \varphi_{g=2}^{ESL}(\mathbf{X}, TPR) \ \Longleftrightarrow \  \eqref{EOD} 
\\ 
H_1 : \varphi_{g=1}^{ESL}(\mathbf{X}, TPR) \neq \varphi_{g=2}^{ESL}(\mathbf{X}, TPR) \ \Longleftrightarrow \  \lnot\eqref{EOD} 
\end{array} 
\right.
\end{equation} 
In other words, testing $\varphi_{g=1}^{ESL}(\mathbf{X}, TPR) = \varphi_{g=2}^{ESL}(\mathbf{X}, TPR)$ assesses whether \eqref{EOD} holds. The rejection of $H_0$ implies the violation of $\eqref{EOD}$. For simplicity, let $\widehat{\varphi}_{g}^{ESL}:=\widehat{\varphi}_{g}^{ESL}(\mathbf{X}, TPR)$. We establish the asymptotic distribution of the difference $\widehat{\varphi}_{1}^{ESL}-\widehat{\varphi}_{2}^{ESL}$.

\begin{proposition}\label{stat1}
Under the null hypothesis that \eqref{EOD} holds, the following test statistic $Z$ is asymptotically
normal:
\begin{align*}
    Z=\frac{ \widehat{\varphi}_{1}^{ESL} - \widehat{\varphi}_{2}^{ESL}}{\sqrt{4 b_1^2\hat{p}_{\mathcal{A}}(1-\hat{p}_{\mathcal{A}})\Big(\frac{1}{n^+_1}+\frac{1}{n^+_2}\Big)}} \sim \mathcal{N}(0.1),  
\end{align*}
where  \(n^+_1\) and \(n^+_2\) denote the number of actual positive in data associated with   men and women, respectively, \(\hat{p}_{\mathcal{A}}\) is the  estimator of the true positive rate for the combined population and  \(b_1\) is  determined by Proposition \ref{LEStheo}.
\end{proposition}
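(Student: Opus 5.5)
The plan is to reduce the statistic to the classical two-sample test for equality of proportions. The reduction uses the closed form of the group-ESL difference obtained in the proof of Theorem \ref{LES1.3}. That proof shows, for any of the five ESL values (with $b_2=1$, $b_1>0$, and $v^{(m)}$ null on the empty coalition), that
\[
\widehat{\varphi}_{1}^{ESL}-\widehat{\varphi}_{2}^{ESL}=b_1\Big(\hat v^{(TPR)}(X \big|_{\{1\}}(\mathcal{N}))-\hat v^{(TPR)}(X \big|_{\{2\}}(\mathcal{N}))\Big).
\]
This identity is purely algebraic, so it holds for the plug-in estimators as well as for the population quantities. By definition of $v^{(TPR)}$, each term equals $\hat p_g / c$, where $\hat p_g$ is the empirical TPR in group $g$ and $c=\mathbb{P}(\hat Y=1\mid Y=1, A\in\emptyset)$ is the random-classifier baseline. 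As in the application of Section \ref{EXAMPLE}, I will take $c=1/2$ and treat it as a known constant, not an estimated quantity. Hence $\widehat{\varphi}_{1}^{ESL}-\widehat{\varphi}_{2}^{ESL}=2b_1(\hat p_1-\hat p_2)$, which is exactly where the factor $4b_1^2$ in the denominator will come from.

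Next I will establish the sampling model. The classifier $f_t$ is trained on the training split and is then fixed. The test observations are i.i.d., so conditional on the numbers $n^+_1,n^+_2$ of actual positives in each group, the indicators $\mathds{1}\{\hat Y_i=1\}$ among the positives of group $g$ are i.i.d.\ Bernoulli$(p_g)$, where $p_g=\mathbb{P}(\hat Y=1\mid Y=1,A=g)$. These indicators are independent across groups. The central limit theorem for each group, combined with independence, gives
\[
\frac{(\hat p_1-\hat p_2)-(p_1-p_2)}{\sqrt{p_1(1-p_1)/n^+_1+p_2(1-p_2)/n^+_2}}\to\mathcal N(0,1)
\]
as $n^+_1,n^+_2\to\infty$, which requires $\mathbb{P}(Y=1,A=g)>0$ for both groups. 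Under $H_0$, Theorem \ref{LES1.3}(ii), restricted to the TPR component, gives \eqref{EOD}, i.e.\ $p_1=p_2=:p$. The asymptotic variance of $2b_1(\hat p_1-\hat p_2)$ is then $4b_1^2p(1-p)(1/n^+_1+1/n^+_2)$.

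Finally, I will replace $p$ by the pooled estimator $\hat p_{\mathcal A}$, the TPR computed on the combined positives. Under $H_0$ it is a weighted average of $\hat p_1$ and $\hat p_2$ and is therefore consistent for $p$. The ratio of the estimated to the true standard deviation therefore tends to $1$ in probability (assuming $0<p<1$), and Slutsky's lemma yields $Z\to\mathcal N(0,1)$. An unconditional version follows because the conditional limit does not depend on $(n^+_1,n^+_2)$ and $n^+_g/n\to \mathbb{P}(Y=1,A=g)$ almost surely.

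I expect the main obstacle to be the status of the baseline $c$. If $c$ were instead estimated, for instance as $\mathbb{P}(Y=1)$ or as a pooled rate, then $\hat v$ would be a ratio of random quantities and a delta-method term would enter. I will argue that this does not happen here. The baseline is common to both groups and cancels in the null, it is a fixed constant ($c=1/2$) in the balanced-weights setting, and any consistent estimator of $c$ contributes only a factor converging to a constant, which Slutsky absorbs. For a general known $c$, the constant $4$ should be read as $1/c^2$.
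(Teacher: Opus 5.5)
Your proposal is correct, and it reaches the same limit law as the paper by a more direct route.

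The paper works unconditionally:
\begin{itemize}
\item it writes each $\widehat{TPR}_g$ as a ratio of two sample means and applies the bivariate CLT and the delta method to get asymptotic variance $p_g(1-p_g)/\gamma_g$, then substitutes $n_g^+\approx n_g\gamma_g$;
\item it treats $\widehat{TPR}_{\mathcal{A}}$ the same way and derives the marginal law of each $\widehat{\varphi}_g^{ESL}$, including the covariances $c_j$ with the pooled rate;
\item only then does it pass to the difference $D$, where the pooled term drops out;
\item independence across groups is simply assumed, and $\hat p_{\mathcal{A}}$ is substituted for $p$ without an explicit Slutsky step.
\end{itemize}

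You instead cancel $\widehat{TPR}_{\mathcal{A}}$ algebraically at the outset, using the identity from the proof of Theorem \ref{LES1.3}. You also condition on the positive counts, so that each $\widehat{TPR}_g$ is a plain binomial proportion. This buys several things:
\begin{itemize}
\item it dispenses with the delta method;
\item it derives cross-group independence rather than assuming it;
\item it uses the realized $n_g^+$ that actually appears in $Z$;
\item it properly justifies the pooled plug-in.
\end{itemize}
It also exposes that $Z$ is exactly the classical pooled two-proportion statistic. Since $b_1>0$ cancels, the test is identical for all ESL values. Likewise, the constant $4$ is $1/c^2$ for the baseline $c=1/2$, which the paper fixes only in a footnote.

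What the paper's longer route buys is the separate probability limits and asymptotic variances of each $\widehat{\varphi}_g^{ESL}$. It reuses these as the building blocks $\hat w_g(\mathcal{S})$ of the second-stage test in Proposition \ref{stat2}.
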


\begin{proof}
Let \( X \big|_{g}(\mathcal{N}) = (X_1, \dots, X_N)\big|_{g}\) and \( X \big|_{\mathcal{A}}(\mathcal{N}) = (X_1, \dots, X_N)\big|_{\mathcal{A}}\) denote the  random feature vector associated  to level $g\in\{1,2\}$, and combined level $\mathcal{A}=\{1,2\}$, respectively. These features are used to evaluate a binary classifier $f_t$ on a specific dataset (\textit{e.g.}, combined data, level data). Let us denote the indicator of true positive when both the ground truth label $Y_{i}$ and the  predicted label $\hat{Y}_{i}$  are  positive as:
\begin{align*}
 I_{i,\mathcal{N}}= \mathds{1}_{\{\hat{Y}_{i}=1,Y_{i}=1\}} 
\end{align*}
Similarly, the indicator of an actual positive is defined as:
\begin{align*}
 J_{i}= \mathds{1}_{\{Y_{i}=1\}} 
\end{align*}
Let   $n_g$ and $n_{\mathcal{A}}$  denote the total number of observations in data belonging to group $g$ and all groups, respectively, where $0<n_g<\infty$ and $0<n_{\mathcal{A}}<\infty$.

Assume that  $I_{i, \mathcal{N}}$ and $J_{i}$
are i.i.d Bernoulli random variables. For the ease of exposition, we define $\alpha_{g,\mathcal{N}}:=\p(\hat{Y}=1,Y=1|A=g)$, $\alpha_{\mathcal{N}}:=\p(\hat{Y}=1,Y=1)$, $\gamma_g:=\p(Y=1|A=g)$ and $\gamma:=\p(Y=1)$. By  the Law of Large Numbers (LLN), we have the following
\begin{align*}
\frac{\sum_{i=1}^{n_g}I_{i, \mathcal{N}}}{n_g}\; &\overset{p}{\longrightarrow} \; \mathbb{E}(I_{i, \mathcal{N}}|A_i=g)=\alpha_{g,\mathcal{N}} \quad \text{and}\quad \frac{\sum_{i=1}^{n_{\mathcal{A}}}I_{i, \mathcal{N}}}{n_{\mathcal{A}}}\; \overset{p}{\longrightarrow} \; \mathbb{E}(I_{i, \mathcal{N}})=\alpha_{\mathcal{N}} \\
\frac{\sum_{i=1}^{n_g}J_{i}}{n_g}\; &\overset{p}{\longrightarrow} \;\mathbb{E}(J_{i}|A_i=g)=\gamma_{g} \quad \text{and}\quad \frac{\sum_{i=1}^{n_{\mathcal{A}}}J_{i}}{n_{\mathcal{A}}}\; \overset{p}{\longrightarrow} \; \mathbb{E}(J_{i})=\gamma 
\end{align*}
Denote $p_g:=\p(\hat{Y}=1|Y=1,A=g)$ and let $\hat{p}_g$ be the plug in estimator of the  true positive rate of group $g$. In particular,
\begin{align*}
\hat{p}_g:=\widehat{TPR}(X \big|_{\{g\}}(\mathcal{N})) =\frac{\frac{\sum_{i=1}^{n_g}I_{i,\mathcal{N}}}{n_g}}{{\frac{\sum_{i=1}^{n_g}J_{i}}{n_g}}} \equiv f(U_{n_g}),
\end{align*}
where $U_{n_g}:=\Big( \frac{\sum_{i=1}^{n_g}J_{i}}{n_g},\frac{\sum_{i=1}^{n_g}I_{i, \mathcal{N}}}{n_g} \Big)    $ is the two‐dimensional sample mean with population mean $\theta_g:=(\gamma_g,\alpha_{g,\mathcal{N}})$. By the Continuous Mapping Theorem and assuming that $\gamma_g>0$, we have: 
\begin{align*}
\widehat{TPR}(X \big|_{\{g\}}(\mathcal{N}))=f(U_{n_g}) &\overset{p}{\longrightarrow} \;\;\;f(\theta_g)=\frac{\alpha_{g,\mathcal{N}}}{\gamma_g}=p_g  
\end{align*}
Hence, $\widehat{TPR}(X \big|_{\{g\}}(\mathcal{N})) $ is a consistent estimator of $\p(\hat{Y}=1|Y=1,A=g)$. Moreover, by the Multivariate Central Limit Theorem, 
\begin{align*}
  \sqrt{n_g}\Big(U_{n_g}-\theta_g\Big)\overset{d}{\longrightarrow} \; \mathcal{N}(0.\Sigma_g)   
\end{align*}
where,
\begin{align*}
    \Sigma_g= \begin{pmatrix}
\gamma_g (1 - \gamma_g) & \alpha_{g,\mathcal{N}}- \gamma_g \alpha_{g,\mathcal{N}} \\
\alpha_{g,\mathcal{N}} - \gamma_g \alpha_{g,\mathcal{N}} & \alpha_{g,\mathcal{N}} (1 - \alpha_{g,\mathcal{N}})
\end{pmatrix}
\end{align*}
Therefore,  since $f$ is differentiable at $\theta_g$, then by Rao's Delta method we have,
\[
\sqrt{n_g} \left[ f(U_{n_g}) - f(\theta_g) \right]  \overset{d}{\longrightarrow}\ \mathcal{N} \left( 0. \nabla f(\theta_g)^\top \Sigma_g \nabla f(\theta_g) \right)
\]
where $\nabla f(\gamma_g,\alpha_{l,\mathcal{N}} )
=
\begin{pmatrix}
  -\tfrac{\alpha_{l,\mathcal{N}} }{\gamma_g^2} \\[6pt]
  \tfrac{1}{\gamma_g}
\end{pmatrix}$ and $\nabla f(\theta_g)^\top \Sigma_g \nabla f(\theta_g)= \frac{p_g\,\bigl(1 - p_g\bigr)}{\gamma_g}.$\\
Hence, 
\begin{align*}
   \sqrt{n_g}\Big(\widehat{TPR}(X\big|_{\{g\}}(\mathcal{N}))-p_g\Big)\overset{d}{\longrightarrow} \; \mathcal{N}\Big(0.\frac{p_g(1-p_g)}{\gamma_g}\Big)   
\end{align*}
That is, for sufficiently large $n_g$,
\begin{align*}
 \widehat{TPR}(X\big|_{\{g\}}(\mathcal{N}))\sim \mathcal{N}\Big(p_g,\frac{p_g(1-p_g)}{n^+_g}\Big)   
\end{align*}
where  $n_g^+\approx n_g\gamma_g$ is the expected number of positives in level $g$ such that $0<n_g^+<\infty$.

Following the same steps, we can find that $\widehat{TPR}(X\big|_{\mathcal{A}}(\mathcal{N}))$ is a consistent estimator of $p_{\mathcal{A}}:=\p(\hat{Y}=1|Y=1)$ and by assuming that $\gamma>0$ and $n^+_{\mathcal{A}}\approx n\gamma$, with $0<n_{\mathcal{A}}^+<\infty$,
\begin{align*}
 \widehat{TPR}(X\big|_{ \mathcal{A}}(\mathcal{N}))\sim \mathcal{N}\Big(p_{\mathcal{A}},\frac{p_{\mathcal{A}}(1-p_{\mathcal{A}})}{n^+_{\mathcal{A}}}\Big)   
\end{align*}
Recall that  $\varphi_{g}^{ESL}( \mathbf{X}, TPR)$ assigns to each data belonging to level $g\in\{1,2\}$ the following group contribution\footnote{The baseline of a random classifier $\frac{1}{2}$ was used in the definition of $v^{m}$. Hence, the use of 2 in the equations for $\varphi_{g=1}^{ESL}(\mathbf{X}, TPR)$ and  $\varphi_{g=2}^{ESL}(\mathbf{X}, TPR)$.}:
\begin{align*}
\varphi_{g=1}^{ESL}\Big(\mathbf{X}, \frac{TPR}{2}\Big) &= \frac{1}{2}\Big[\left( 2b_{2}TPR(X\big|_{\mathcal{A}}(\mathcal{N})) - 2b_{1}TPR(X\big|_{\{2\}}(\mathcal{N})) \right) +   2b_{1}TPR(X\big|_{\{1\}}(\mathcal{N}))\Big] \\
\varphi_{g=2}^{ESL}\Big(\mathbf{X}, \frac{TPR}{2}\Big) &=\frac{1}{2}\Big[ \left( 2b_{2}TPR(X\big|_{\mathcal{A}}(\mathcal{N})) - 2b_{1}TPR(X\big|_{\{1\}}(\mathcal{N})) \right) +  2b_{1}TPR(X\big|_{\{2\}}(\mathcal{N}))\Big]
\end{align*}
Since $\varphi_{g}^{ESL}$ is a continuous function of $TPR_g$ and  $TPR_{\mathcal{A}}$, the  plug‐in estimator of  $\varphi_{g}^{ESL}(\mathbf{X},\frac{TPR}{2})$ is  $\widehat{\varphi_{g}}^{ESL}:=\varphi_{g}^{ESL}(\mathbf{X}, \widehat{TPR})$. By the Continuous Mapping Theorem, we have
\begin{align*}
\widehat{\varphi_{1}}^{ESL} \;\;&\overset{p}{\longrightarrow}    \;\;b_2p_{\mathcal{A}}+b_1(p_1-p_2)\\
\widehat{\varphi_{2}}^{ESL}\;\; &\overset{p}{\longrightarrow}    \;\;b_2p_{\mathcal{A}}-b_1(p_1-p_2)
\end{align*}
For ease of exposition, we define $\widehat{TPR}_g(\mathcal{N}):=\widehat{TPR}(X\big|_{\{g\}}(\mathcal{N}))$. Since $\widehat{TPR}_g(\mathcal{N})$ is computed on group observations and the group are disjoint in test sample, and since we treat the trained classifier as fixed during evaluation, we assume that $\widehat{TPR}_g(\mathcal{N})$ as independent across groups.

$\widehat{TPR}_g(\mathcal{N})$ and $\widehat{TPR}_{\mathcal{A}}(\mathcal{N})$ are asymptotically normal, it follows that $\widehat{\varphi_{g}}^{ESL} $ (a linear combination of those later) is also  asymptotically normally distributed: 
\begin{align*}
   \widehat{\varphi}_{g=1}^{ESL} \sim \mathcal{N}\Big(b_2p_{\mathcal{A}}+b_1(p_1-p_2),\Big[b_2^2\frac{p_{\mathcal{A}}(1-p_{\mathcal{A}})}{n^+_{\mathcal{A}}}+b_1^2\Big(\frac{p_1(1-p_1)}{n^+_1}+\frac{p_2(1-p_2)}{n^+_2}\Big)  \Big]+2b_1b_2(c_1-c_2)\Big),
\end{align*}
where  $c_j :=\text{Cov}\big(\widehat{TPR}_{\mathcal{A}}(\mathcal{N}),\widehat{TPR}_{j}(\mathcal{N})\big)$ and 
$\text{Cov}\big(\widehat{TPR}_{1}(\mathcal{N}),\widehat{TPR}_{2}(\mathcal{N})\big)=0$. Similarly,  $\widehat{\varphi}_{j=2}^{ESL}$ is normally distributed as follows:
\begin{align*}
  \widehat{\varphi}_{g=2}^{ESL} \sim \mathcal{N}\Big(b_2p_{\mathcal{A}}-b_1(p_1-p_2),\Big[b_2^2\frac{p_{\mathcal{A}}(1-p_{\mathcal{A}})}{n^+_{\mathcal{A}}}+b_1^2\Big(\frac{p_1(1-p_1)}{n^+_1}+\frac{p_2(1-p_2)}{n^+_2}\Big)  \Big]-2b_1b_2(c_1-c_2)\Big) 
\end{align*}
Further, let us define $D:=\widehat{\varphi}_{j=1}^{ESL}-\widehat{\varphi}_{j=2}^{ESL}$. Therefore, we have the following:
\begin{align*}
    D\sim \mathcal{N}\Big(2b_1(p_1-p_2),4b_1^2\Big(\frac{p_1(1-p_1)}{n^+_1}+\frac{p_2(1-p_2)}{n^+_2}\Big)\Big)
\end{align*}
Note that  when \eqref{EOD} is satisfied,   we have $p_1=p_2=p_{\mathcal{A}}$. Therefore, under  the null hypothesis,  the asymptotic distribution of the standardized test statistic is: 
\begin{align*}
Z:=\frac{D}{\sqrt{4b_1^2 \hat{p}_{\mathcal{A}}(1-\hat{p}_{\mathcal{A}})\Big(\frac{1}{n^+_1}+\frac{1}{n^+_2}\Big)}}\overset{H_0}{\sim} \mathcal{N}(0.1)
\end{align*}
Consequently, the null hypothesis is rejected at level $\alpha$ when $|Z|>z_{\alpha/2}$, where $z_{\frac{\alpha}{2}}$ is the $(1-\frac{\alpha}{2})$ quantile of the standardized normal distribution $\mathcal{N}(0.1)$. 
\end{proof}

\subsection{Second stage ESL statistical test} 

Contrary to the first stage, the second stage focuses on testing the equality in feature contributions across levels. By Corollary \ref{equiv-41},  equality of opportunity \eqref{EOD} can be tested for any given $k\in \mathcal{N}$:
\begin{equation}\notag
\left\|
\begin{array}{l}
H_0 : \mathcal{C}^k_{1}(v^{(TPR)}) = \mathcal{C}^k_{2}(v^{(TPR)}) 
\\ 
H_1 : \mathcal{C}^k_{1}(v^{(TPR)}) \neq \mathcal{C}^k_{2}(v^{(TPR)}) 
\end{array}
\right.
\end{equation}  
\begin{proposition}\label{stat2}
Under the null hypothesis that \eqref{EOD} holds, the test statistic $Z_k$ is asymptotically
normal:
\begin{align*}
    Z_k=\frac{  \hat{\mathcal{C}}^k_{1}(v^{(TPR)})-\hat{\mathcal{C}}^k_{2}(v^{(TPR)})}{\sqrt{Var(\hat{\mathcal{C}}^k_{1}(v^{(TPR)}))+Var(\hat{\mathcal{C}}^k_{2}(v^{(TPR)}))-2\emph{Cov}(\hat{\mathcal{C}}^k_{1}(v^{(TPR)}),\hat{\mathcal{C}}^k_{2}(v^{(TPR)}))}}  \sim N(0.1)
\end{align*}
\end{proposition}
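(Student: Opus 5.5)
The plan is to write each second-stage contribution $\hat{\mathcal{C}}^k_g(v^{(TPR)})$ as a fixed linear combination of plug-in true positive rates, and then reuse the delta-method argument from the proof of Proposition \ref{stat1}.

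\emph{Linear representation.} By Proposition \ref{LEStheo}, the second-stage value $\varphi^{ESL}_k$ applied to the characteristic function $\mathcal{S}_{\mathcal{N}}\mapsto \varphi^{ESL}_g(\mathbf{X}(\mathcal{S}_{\mathcal{N}}),v^{(TPR)})$ is linear in that function. By the proof of Theorem \ref{LES1.3}, each $\varphi^{ESL}_g(\mathbf{X}(\mathcal{S}_{\mathcal{N}}),v^{(TPR)})$ is itself linear in $TPR_{\mathcal{A}}(\mathcal{S}_{\mathcal{N}})$, $TPR_1(\mathcal{S}_{\mathcal{N}})$ and $TPR_2(\mathcal{S}_{\mathcal{N}})$, with coefficients $b_2$ and $\pm b_1$, up to the factor $2$ coming from the baseline. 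Here $TPR_{\cdot}(\mathcal{S}_{\mathcal{N}})$ denotes the rate of the classifier restricted to features in $\mathcal{S}_{\mathcal{N}}$, with the convention $\varphi_g^{ESL}(\emptyset,\cdot)=0$. Hence
\[
\hat{\mathcal{C}}^k_g=\sum_{\mathcal{S}_{\mathcal{N}}\subseteq\mathcal{N}}\omega_k(\mathcal{S}_{\mathcal{N}})\Big(b_2\widehat{TPR}_{\mathcal{A}}(\mathcal{S}_{\mathcal{N}})\pm b_1\big(\widehat{TPR}_1(\mathcal{S}_{\mathcal{N}})-\widehat{TPR}_2(\mathcal{S}_{\mathcal{N}})\big)\Big),
\]
where the weights $\omega_k$ are deterministic and built from $P(\mathcal{S}_{\mathcal{N}})$ and the second-stage $b_s$.

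\emph{Joint asymptotic normality.} I would stack, over all coalitions $\mathcal{S}_{\mathcal{N}}$ and levels $g\in\{1,2\}$, the sample means of the indicators $I_{i,\mathcal{S}_{\mathcal{N}}}=\mathds{1}_{\{\hat Y_i(\mathcal{S}_{\mathcal{N}})=1,Y_i=1\}}$ and of $J_i$. Treating the trained classifiers as fixed on the test sample, these are i.i.d.\ within each group, so the multivariate CLT gives joint normality, with covariances $\alpha_{g,\mathcal{S}\cap\mathcal{S}'}-\alpha_{g,\mathcal{S}}\alpha_{g,\mathcal{S}'}$, where the first term is the probability that both indicators equal one. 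Applying the delta method to the ratios, exactly as for $f(U_{n_g})$ in Proposition \ref{stat1}, gives joint normality of the vector of all $\widehat{TPR}_g(\mathcal{S}_{\mathcal{N}})$. Groups are disjoint, so the two group blocks are independent. The pooled rates $\widehat{TPR}_{\mathcal{A}}(\mathcal{S}_{\mathcal{N}})$ are asymptotically a weighted mean $(n_1^+\widehat{TPR}_1+n_2^+\widehat{TPR}_2)/n^+_{\mathcal{A}}$, so they lie in the same Gaussian vector.

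\emph{Conclusion.} Since $\hat{\mathcal{C}}^k_1-\hat{\mathcal{C}}^k_2$ is a linear image of this vector, it is asymptotically normal. Its variance is exactly $Var(\hat{\mathcal{C}}^k_1)+Var(\hat{\mathcal{C}}^k_2)-2\text{Cov}(\hat{\mathcal{C}}^k_1,\hat{\mathcal{C}}^k_2)$, and the two contributions are correlated because they share the pooled term. Under $H_0$, Corollary \ref{equiv-41} gives $\mathcal{C}^k_1=\mathcal{C}^k_2$, so the mean is zero. Replacing the unknown variance terms by consistent plug-in estimates (the LLN and continuous mapping for the $\alpha$'s and $\gamma$'s) and applying Slutsky's lemma yields $Z_k\to\mathcal{N}(0,1)$.

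\emph{Main obstacle.} The hard part is the cross-coalition covariance structure: the indicators for different feature subsets are evaluated on the same individuals, so they are strongly dependent. The variance does not simplify the way it does in Proposition \ref{stat1}. I would leave the variance in the generic form of the statement and estimate it from the empirical joint frequencies $\hat\alpha_{g,\mathcal{S}\cap\mathcal{S}'}$. I would also need a nondegeneracy assumption, namely a positive limiting variance and $\gamma_g>0$, to justify dividing by it.
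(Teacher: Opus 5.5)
Your asymptotic argument follows the paper's proof. Both write $\hat{\mathcal{C}}^k_g$ as a fixed linear combination of coalition-level plug-in rates $\widehat{TPR}_{\mathcal{A}}(\mathcal{S})$ and $\widehat{TPR}_g(\mathcal{S})$. Both get normality from the CLT and the delta method as in Proposition~\ref{stat1}. Both leave the variance of the difference in generic form, with cross-coalition terms driven by the joint probabilities $\mathbb{P}(I_{g,\mathcal{S}}=1,I_{g,\mathcal{S}'}=1)$. Your $\alpha_{g,\mathcal{S}\cap\mathcal{S}'}$ is this joint probability, not the rate of the classifier on $\mathcal{S}\cap\mathcal{S}'$, so rename it. Stacking all coalitions into one vector is what actually justifies ``a linear combination of normals is normal'', which the paper asserts from marginal CLTs only. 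Your Slutsky step for the estimated denominator is likewise a needed addition.

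The step that does not go through is the zero mean. Since $\mu_1(\mathcal{S})-\mu_2(\mathcal{S})=2b_1\,d(\mathcal{S})$, with $d(\mathcal{S}):=p_1(\mathcal{S})-p_2(\mathcal{S})$ and $d(\emptyset):=0$ (the pooled rates cancel), linearity gives
\[
\mathcal{C}^k_1-\mathcal{C}^k_2=2b_1\sum_{\mathcal{S}\subseteq\mathcal{N}\setminus\{k\}}P(\mathcal{S})\bigl(b_{s+1}\,d(\mathcal{S}\cup\{k\})-b_s\,d(\mathcal{S})\bigr).
\]
Here the leading $b_1$ is the first-stage coefficient and the $b_s$ in the sum are second-stage ones.

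Now \eqref{EOD} constrains only the full-feature classifier, i.e.\ $d(\mathcal{N})=0$. By efficiency this gives $\sum_{k}(\mathcal{C}^k_1-\mathcal{C}^k_2)=0$, not termwise equality. For example, with $\mathcal{N}=\{1,2\}$ and a Shapley second stage, $\mathcal{C}^1_1-\mathcal{C}^1_2=b_1\bigl(d(\{1\})-d(\{2\})\bigr)=-(\mathcal{C}^2_1-\mathcal{C}^2_2)$. This is generically nonzero, and then $|Z_k|\to\infty$ rather than converging to a standard normal.

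The forward direction of Corollary~\ref{equiv-41} (through the proof of Proposition~\ref{equiv-2}) tacitly assumes equal group contributions for every coalition classifier $\hat Y_{\mathcal{S}}$. So it cannot deliver $\mathcal{C}^k_1=\mathcal{C}^k_2$ from \eqref{EOD} alone. The paper's proof avoids this by taking $H_0:\mathcal{C}^k_1=\mathcal{C}^k_2$ itself as the null in its final step. You should do the same, or explicitly assume \eqref{EOD} for every $\hat Y_{\mathcal{S}}$.

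The display holds verbatim for the plug-in estimates, which also shows that the pooled rates never enter $Z_k$. The squared denominator equals $4b_1^2\bigl(\mathrm{Var}(L_1)+\mathrm{Var}(L_2)\bigr)$, where $L_g:=\sum_{\mathcal{S}\subseteq\mathcal{N}\setminus\{k\}}P(\mathcal{S})\bigl(b_{s+1}\widehat{TPR}_g(\mathcal{S}\cup\{k\})-b_s\widehat{TPR}_g(\mathcal{S})\bigr)$, and the two group blocks are independent. So the pooled-versus-group covariances you planned to carry are unnecessary.
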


\begin{proof}
Let $\hat{Y}_{i,\s}$ be the predicted label using features \( X\big|_{g}(\mathcal{S}) = (X_s, \dots, X_S)\big|_{g}\). Following  the first stage statistical test, the estimated true positive rate on test data belonging to $g$ based on feature vector  $X\big|_{\{g\}}(\mathcal{S})$  is given by:
\begin{align*}
\widehat{TPR}(X\big|_{\{g\}}(\s)) =\frac{\frac{\sum_{i=1}^{n_g}I_{i,\s}}{n_g}}{\frac{\sum_{i=1}^{n_g}J_{i}}{n_g}}
\end{align*}
where $
I_{i,\s}= \mathds{1}_{\{\hat{Y}_{i,\s}=1,Y_{i}=1\}}$ and $ J_{i}= \mathds{1}_{\{Y_{i}=1\}}$. Again,   substituting the feature vector \( X\big|_{g}(\mathcal{N}) = (x_1, \dots, x_N)\big|_{g}\) by \( X\big|_{g}(\mathcal{S}) = (X_s, \dots, X_S)\big|_{g}\), by the Central Limit Theorem, the asymptotic distribution of $ \widehat{TPR}_g(\s)$ for $n_g$ sufficiently large is:                
\begin{align*} 
 \widehat{TPR}_g(\s)\sim \mathcal{N}\Big(p_g(\s),\frac{p_g(\s)(1-p_g(\s))}{n^+_g}\Big)
\end{align*}
where  $p_g(\s):= \p(\hat{Y}_\s=1|Y=1,A=g)\in(0.1)$, $\gamma_g>0$ and $n^+_g\approx n_g \gamma_g$ with $0<n_{l}^+<\infty$. Similarly, when $n_{\mathcal{A}}$ is large, 
\begin{align*} 
\widehat{TPR}_{\mathcal{A}}(\s) \sim \mathcal{N} \left( p_{\mathcal{A}}(\s), \frac{p_{\mathcal{A}}(\s) \left( 1 - p_{\mathcal{A}}(\s) \right)}{n^+_{\mathcal{A}}} \right)
\end{align*}
where  $p_{\mathcal{A}}(\s):= \p(\hat{Y}_\s=1|Y=1)\in(0.1)$, $\gamma_{\mathcal{A}}>0$ and $n^+_{\mathcal{A}}\approx n_{\mathcal{A}} \gamma_{\mathcal{A}}$ with $0<n_{\mathcal{A}}^+<\infty$.

In the second stage, the aim is to allocate $\hat{\varphi}_{g}^{ESL}(\X, TPR)$ across all features. For simplicity, we define $\hat{w}_g(\s):=\hat{\varphi}_{g}^{ESL}(\X(\s), TPR)$.
For every coalition $\s$, since the mapping from $\widehat{TPR}_g(\s)$  to $\hat{w}_g(\s )$ is continuous, then 
\begin{align*}
\hat{w}_1(\s ) \;\;&\overset{p}{\longrightarrow}    \;\;b_2p_{\mathcal{A}}(\s )+b_1(p_1(\s)-p_2(\s ))\\
\hat{w}_2(\s )\;\; &\overset{p}{\longrightarrow}    \;\;b_2p_{\mathcal{A}}(\s )-b_1(p_1(\s )-p_2(\s))
\end{align*}
For large $n_g$ and $n_{\mathcal{A}}$, since $\hat{w}_g(\s)$ is a linear combination of normally distributed random variables, $\hat{w}_g(\s)$ follows a normal distribution:
\begin{align*}
   \hat{w}_g(\s) \sim \mathcal{N}\Big(\mu_g(\s),\sigma_g^2(\s)\Big) 
\end{align*}
where, for $g\in \{1,2\}$,
\begin{align*}
\mu_1(\s)&=b_2p_{\mathcal{A}}(\s)+b_1(p_1(\s)-p_2(\s))\\
\mu_2(\s)&=b_2p_{\mathcal{A}}(\s)-b_1(p_1(\s)-p_2(\s))\\
\sigma_1^2(\s)&=b_2^2\frac{p_{\mathcal{A}}(\s)(1-p_{\mathcal{A}}(\s))}{n^+_{\mathcal{A}}}+b_1^2(\frac{p_1(\s)(1-p_1(\s))}{n^+_1}+\frac{p_2(\s)(1-p_2(\s))}{n^+_2})+2b_1b_2(c_1(\s)-c_2(\s))\Big)\\
\sigma_2^2(\s)&=b_2^2\frac{p_{\mathcal{A}}(\s)(1-p_{\mathcal{A}}(\s))}{n^+_{\mathcal{A}}}+b_1^2(\frac{p_1(\s)(1-p_1(\s))}{n^+_1}+\frac{p_2(\s)(1-p_2(\s))}{n^+_2})-2b_1b_2(c_1(\s)-c_2(\s))\Big)
\end{align*}
with $c_g=\text{Cov}\big(\widehat{TPR}_{\mathcal{A}}(\s),\widehat{TPR}_{g}(\s)\big)$.

The estimator of $\mathcal{C}^k_g(v^{(TPR)})$ is given by:
\begin{align*}
\hat{\mathcal{C}}^k_g(v^{(TPR)}) = \sum_{\s \subseteq \mathcal{N} \setminus \{k\}} P(\s) \Big( \hat{W}_g(\s)  \Big) := \sum_{\s \subseteq \mathcal{N} \setminus \{k\}} P(\s) \Big(b_{s+1}\hat{w}_g(\s\cup\{k\}) - b_s \hat{w}_g(\s) \Big)
\end{align*}
Since the  mapping from $\hat{w}_g(\s)$ to $\hat{\mathcal{C}}^k_g$ is also continuous,  applying the Continuous Mapping Theorem a second time leads to, 
\begin{align*}
\hat{\mathcal{C}}^k_g(v^{(TPR)}) \;\;&\overset{p}{\longrightarrow}  \;\;\mathcal{C}^k_g(v^{(TPR)})=\sum_{\s \subseteq \mathcal{N} \setminus \{k\}} P(\s) \Big(b_{s+1}\mu_g(\s\cup\{k\}) - b_s \mu_g(\s) \Big)
\end{align*}
$\hat{w}_g(\s)$ is asymptotically normally
distributed, by linearity,  $\hat{\mathcal{C}}^k_g(v^{(TPR)})$ is also asymptotically normal. Thus, 
\begin{align*}
    \hat{\mathcal{C}}^k_g(v^{(TPR)})-\mathcal{C}^k_g(v^{(TPR)}) \;\;&\overset{d}{\longrightarrow}\mathcal{N}\Big(0.\sigma^2_{g,k} \Big)  
\end{align*}
Next, we establish the asymptotic variance:
\begin{align*}
 Var\big(\hat{\mathcal{C}}^k_g(v^{(TPR)})\big)&= Var \Big(\sum_{\s \subseteq \mathcal{N} \setminus \{k\}} P(\s) \Big(\hat{W}_g(\s) \Big)\Big) \\
&=\sum_{\s\subseteq\mathcal{N}\setminus \{k\}}\sum_{T\subseteq\mathcal{N}\setminus \{k\}}P(\mathcal{S})P(T) \text{Cov}\left(\hat{W}_g(\s)),\hat{W}_g(T)\right)\\
&=\sum_{S\subseteq\mathcal{N}\setminus \{k\}}P(\mathcal{S})^2Var\left(\hat{W}_g(\mathcal{S})\right)+2\sum_{S,T\subseteq \mathcal{P}}P(\mathcal{S})P(T) \text{Cov}\left(\hat{W}_g(\mathcal{S}),\hat{W}_g(T)\right)
\end{align*}
where,
\begin{align*}
Var\left(\hat{W}_1(\mathcal{S})\right)&=b^2_{s+1}\sigma^2_{1}(\s \cup \{k\})+b^2_{s}\sigma^2_{1}(\s)-2 b_{s+1}b_{s}\text{Cov}\Big(\hat{w}_1(\s \cup \{k\}),\hat{w}_1(\s) \Big) 
\end{align*}
Let us formally derive $\text{Cov}\big(\hat{w}_1(\s \cup \{k\}),\hat{w}_1(\s) \big) $, we have: 
\begin{align*}
\text{Cov}\bigl(\hat{w}_1(\s\cup\{k\}),\,\hat{w}_1(\s)\bigr)
&= b_2^2\,\text{Cov}\Bigl(\widehat{TPR}_{\mathcal{A}}(\s\cup\{k\}),\,\widehat{TPR}_{\mathcal{A}}(\s)\Bigr)\\[1mm]
&\quad + b_2b_1\,\Biggl\{
    \text{Cov}\Bigl(\widehat{TPR}_{\mathcal{A}}(\s\cup\{k\}),\,\widehat{TPR}_{1}(\s)-\widehat{TPR}_{2}(\s)\Bigr)\\[1mm]
&\quad\quad\quad + \text{Cov}\Bigl(\widehat{TPR}_{1}(\s\cup\{k\})-\widehat{TPR}_{2}(\s\cup\{k\}),\,\widehat{TPR}_{\mathcal{A}}(\s)\Bigr)
\Biggr\}\\[1mm]
&\quad + b_1^2\,\Big[
    \text{Cov}\Bigl(\widehat{TPR}_{1}(\s\cup\{k\}),\,\widehat{TPR}_{1}(\s)\Bigr)\\[1mm]
&\quad\quad + \text{Cov}\Bigl(\widehat{TPR}_{2}(\s\cup\{k\}),\,\widehat{TPR}_{2}(\s)\Bigr)
\Big]
\end{align*}
Similarly, we have
\begin{align*}
\text{Cov}\Bigl(\hat{w}_2(\s\cup\{k\}),\,\hat{w}_2(\s)\Bigr)
&= b_2^2\,\text{Cov}\Bigl(\widehat{TPR}_{\mathcal{A}}(\s\cup\{k\}),\,\widehat{TPR}_{\mathcal{A}}(\s)\Bigr)\\[1mm]
&\quad - b_2b_1\,\Biggl\{\text{Cov}\Bigl(\widehat{TPR}_{\mathcal{A}}(\s\cup\{k\}),\,\widehat{TPR}_{1}(\s)-\widehat{TPR}_{2}(\s)\Bigr)\\[1mm]
&\quad\quad\quad + \text{Cov}\Bigl(\widehat{TPR}_{1}(\s\cup\{k\})-\widehat{TPR}_{2}(\s\cup\{k\}),\,\widehat{TPR}_{\mathcal{A}}(\s)\Bigr)
\Biggr\}\\[1mm]
&\quad + b_1^2\,\Big[
    \text{Cov}\Bigl(\widehat{TPR}_{1}(\s\cup\{k\}),\,\widehat{TPR}_{1}(\s)\Bigr)\\[1mm]
&\quad\quad + \text{Cov}\Bigl(\widehat{TPR}_{2}(\s\cup\{k\}),\,\widehat{TPR}_{2}(\s)\Bigr)
\Big]
\end{align*}
Note that since $\widehat{TPR_g}=\frac{TP_g}{n_g^+}:=\frac{\sum_{i=1}^{n_g}I_{i,\s}}{\sum_{i=1}^{n_g}J_{i}}$, then
$$\text{Cov}\Bigl(\widehat{TPR}_{g}(\s\cup\{k\}),\,\widehat{TPR}_{g}(\s)\Bigr)=\frac{1}{({n_g^{+}})^2} \text{Cov}\Bigl(TP_{g}(\s\cup\{k\}),\,TP_{g}(\s)\Bigr)
$$
where $\text{Cov}\bigl(TP_{g}(\s\cup\{k\}),\,TP_{g}(\s)\bigr)$ is given by the following:
\begin{align*}
\text{Cov}\Bigl(TP_{g}(\s\cup\{k\}),\,TP_{g}(\s)\Bigl)
&=\mathbb{E}(TP_{g}(\s\cup\{k\})\cdot TP_{g}(\s))-\mathbb{E}(TP_{g}(\s\cup\{k\}))\mathbb{E}(TP_{g}(\s))\\
&=\mathbb{E}\Big(\sum_{i=1}^{n_g^+}I_{i,g,\s\cup\{k\}}\sum_{i=1}^{n_g^{+}}I_{i,g,\s}\Big)-\mathbb{E}\Big(\sum_{i=1}^{n_g^{+}}I_{i,g,\s\cup\{k\}}\Big)\mathbb{E}\Big(\sum_{i=1}^{n_g^{+}}I_{i,g,\s}\Big)\\
&=\sum_{i=1}^{n_g^{+}}\mathbb{E}\Big(I_{i,g,\s\cup\{k\}}I_{i,g,\s}\Big)+\sum_{\substack{i,j = 1 \\ i \neq j}}^{n_g^{+}}\mathbb{E}\Big(I_{i,g,\s\cup\{k\}}I_{j,g,\s}\Big)\\ &
-\mathbb{E}\Big(\sum_{i=1}^{n_g^{+}}I_{i,g\s\cup\{k\}}\Big)\mathbb{E}\Big(\sum_{i=1}^{n_g^{+}}I_{i,g,\s}\Big)\\
&=\sum_{i=1}^{n_g^{+}}\p(I_{i,g,\s\cup\{k\}}=1,I_{i,g,\s}=1)+\sum_{i=1}^{n_g^{+}}\sum_{j=1}^{n_g^{+}-1}\mathbb{E}\Big(I_{i,l,\s\cup\{k\}}\Big)\mathbb{E}\Big(I_{j,g,\s}\Big)\\
&-\sum_{i=1}^{n_g^{+}}\mathbb{E}\Big(I_{i,g,\s\cup\{k\}}\Big)\sum_{i=1}^{n_g^{+}}\mathbb{E}(I_{i,g,\s})\\
&=n_g^{+}\p(I_{g,\s\cup\{k\}}=1,I_{g,\s}=1)-n_g^{+}\Big(\p(I_{g,\s\cup\{k\}}=1)\p(I_{g,\s}=1)\Big)\\
&=n_g^{+}\p(I_{g,\s\cup\{k\}}=1,I_{g,\s}=1)-n_g^{+}\Big(p_g(\s\cup\{k\})p_g(\s))\Big)
\end{align*}

Let us set $TP_{\mathcal{A}} := TP_{1} + TP_{2}$, where $TP_{g}$ represents the true positives associated with $g$. It follows that
\begin{align*}
\text{Cov}\Big(
    \frac{TP_{\mathcal{A}}(\mathcal{S})}{n_{\mathcal{A}}^+},
    \,
    \frac{TP_{1}(\mathcal{S})}{n_1^+}
    - \frac{TP_{2}(\mathcal{S})}{n_2^+}
\Big)
&= \frac{1}{n_{\mathcal{A}}^+ n_1^+}\,
    \text{Cov}\big(
        TP_{\mathcal{A}}(\mathcal{S}),
        TP_{1}(\mathcal{S})
    \big) \\
&\quad - \frac{1}{n_{\mathcal{A}}^+ n_2^+}\,
    \text{Cov}\big(
        TP_{\mathcal{A}}(\mathcal{S}),
        TP_{2}(\mathcal{S})
    \big).
\end{align*}

Moreover, we have:
\begin{align*}
\text{Cov}\Big(
    TP_{\mathcal{A}}(\mathcal{S}\cup\{k\}),
    \, TP_{1}(\mathcal{S})
\Big)
&= n_1^{+}\,
    \mathbb{P}\big(
        I_{1,\mathcal{S}\cup\{k\}} = 1,\,
        I_{1,\mathcal{S}} = 1
    \big) \\
&\quad - n_1^{+}\,
    \mathbb{P}\big(
        I_{1,\mathcal{S}\cup\{k\}} = 1
    \big)\,
    \mathbb{P}\big(
        I_{1,\mathcal{S}} = 1
    \big) \\
\\
\text{Cov}\Big(
    TP_{\mathcal{A}}(\mathcal{S}\cup\{k\}),
    \, TP_{2}(\mathcal{S})
\Big)
&= n_2^{+}\,
    \mathbb{P}\big(
        I_{2,\mathcal{S}\cup\{k\}} = 1,\,
        I_{2,\mathcal{S}} = 1
    \big) \\
&\quad - n_2^{+}\,
    \mathbb{P}\big(
        I_{2,\mathcal{S}\cup\{k\}} = 1
    \big)\,
    \mathbb{P}\big(
        I_{2,\mathcal{S}} = 1
    \big).
\end{align*}

Similarly,
\begin{align*}
\text{Cov}\big(\hat{W}_g(\mathcal{S}), \hat{W}_g(T)\big)
&= \text{Cov}\Big(
    b_{s+1}\hat{w}_g(\mathcal{S}\cup\{k\}) - b_s \hat{w}_g(\mathcal{S}),
    \;
    b_{t+1}\hat{w}_g(T\cup\{k\}) - b_t \hat{w}_g(T)
\Big) \\
&= b_{s+1}b_{t+1}
    \text{Cov}\Big(
        \hat{w}_g(\mathcal{S}\cup\{k\}),
        \hat{w}_g(T\cup\{k\})
    \Big) \\
&\quad - b_{s+1}b_t
    \text{Cov}\Big(
        \hat{w}_g(\mathcal{S}\cup\{k\}),
        \hat{w}_g(T)
    \Big) \\
&\quad - b_{t+1}b_s
    \text{Cov}\Big(
        \hat{w}_g(T\cup\{k\}),
        \hat{w}_g(\mathcal{S})
    \Big) \\
&\quad + b_s b_t
    \text{Cov}\Big(
        \hat{w}_g(\mathcal{S}),
        \hat{w}_g(T)
    \Big).
\end{align*}

Under the null hypothesis that $\mathcal{C}^k_1(v^{(TPR)})=\mathcal{C}^k_2(v^{(TPR)})$, the asymptotic distribution of the standardized
test statistic is:
\begin{align*}
Z_k=\frac{\hat{\mathcal{C}}^k_1-\hat{\mathcal{C}}^k_2}{\sqrt{Var(\hat{\mathcal{C}}^k_1)+Var(\hat{\mathcal{C}}^k_2)-2\text{Cov}(\hat{\mathcal{C}}^k_1(v^{(TPR)}),\hat{\mathcal{C}}^k_2(v^{(TPR)}))}}\sim\mathcal{N}(0.1) 
\end{align*}

Now, it remains to compute $\text{Cov}(\hat{\mathcal{C}}^k_1(v^{(TPR)}),\hat{\mathcal{C}}^k_2(v^{(TPR)}))$. We have:
\begin{align*}
&\text{Cov}(\hat{\mathcal{C}}^k_1(v^{(TPR)}),\hat{\mathcal{C}}^k_2(v^{(TPR)}))
\notag \\ &=\sum_{\s}P(\mathcal{S})^2\text{Cov}\left(\hat{W}_1(\mathcal{S}),\hat{W}_2(\s)\right)+\sum_{\s}\sum_{T}P(\mathcal{S})P(T)\text{Cov}\left(\hat{W}_1(\mathcal{S}),\hat{W}_2(T)\right)\   
\end{align*}
where
\begin{align*}
&\sum_{\s}P(\mathcal{S})^2\text{Cov}\left(\hat{W}_1(\mathcal{S}),\hat{W}_2(\s)\right)\\ 
&=\sum_{\s}P(\mathcal{S})^2\Big[\text{Cov}\big(b_{s+1}\hat{w}_1(\s\cup\{k\}) - b_s \hat{w}_1(\s) ,b_{s+1}\hat{w}_2(\s\cup\{k\}) - b_s \hat{w}_2(\s) \big)\Big]\\
&=\sum_{\s}P(\mathcal{S})^2\Big[b_{s+1}b_{s+1}\text{Cov}\big( \hat{w}_1(\s\cup\{k\}),\hat{w}_2(\s\cup\{k\}) \big)-b_{s+1}b_s\text{Cov}\big( \hat{w}_1(\s\cup\{k\}),\hat{w}_2(\s) \big)\\
&-b_{s+1}b_s\text{Cov}\big( \hat{w}_2(\s\cup\{k\}),\hat{w}_1(\s)\big)+b_sb_s\text{Cov}\big( \hat{w}_2(S),\hat{w}_1(\s)\big) \Big]\\
&=\sum_{\s}P(\mathcal{S})^2\ b_{s+1}^2\Big[b_2^2 Var\big(\widehat{TPR}_{\mathcal{A}}(\s\cup\{k\})\big)-b_1^2(Var(\widehat{TPR}_{1}(\s\cup\{k\}))\\
&-b_1^2 Var(\widehat{TPR}_{2}(\s\cup\{k\}))\Big]-b_{s+1}b_s\Big[b_2^2 \text{Cov}\big(\widehat{TPR}_{\mathcal{A}}(\s\cup\{k\}),\,\widehat{TPR}_{\mathcal{A}}(\s)\big)\\
&-b_1^2 \text{Cov}\big(\widehat{TPR}_{1}(\s\cup\{k\}),\,\widehat{TPR}_{1}(\s)\big)-b_1^2\text{Cov}\big(\widehat{TPR}_{2}(\s\cup\{k\}),\,\widehat{TPR}_{2}(\s)\big)\Big]\\
&-b_1b_2\text{Cov}\big(\widehat{TPR}_{\mathcal{A}}(\s\cup\{k\}),\,\widehat{TPR}_{1}(\s)\big)+b_1b_2\text{Cov}\big(\widehat{TPR}_{\mathcal{A}}(\s\cup\{k\}),\,\widehat{TPR}_{2}(\s)\big)\\
&+b_1b_2\text{Cov}\big(\widehat{TPR}_{1}(\s\cup\{k\}),\,\widehat{TPR}_{\mathcal{A}}(\s)\big)-b_1b_2\text{Cov}\big(\widehat{TPR}_{2}(\s\cup\{k\}),\,\widehat{TPR}_{\mathcal{A}}(\s)\big)\Big]\\
&-b_{s+1}b_s \Big[b_2^2 \text{Cov}\big(\widehat{TPR}_{\mathcal{A}}(\s\cup\{k\}),\,\widehat{TPR}_{\mathcal{A}}(\s)\big)\\
&-b_1^2 \text{Cov}\big(\widehat{TPR}_{1}(\s\cup\{k\}),\,\widehat{TPR}_{1}(\s)\big)-b_1^2\text{Cov}\big(\widehat{TPR}_{2}(\s\cup\{k\}),\,\widehat{TPR}_{2}(\s)\big)\Big]\\
&-b_1b_2\text{Cov}\big(\widehat{TPR}_{\mathcal{A}}(\s),\,\widehat{TPR}_{1}(\s\cup\{k\})\big)+b_1b_2\text{Cov}\big(\widehat{TPR}_{\mathcal{A}}(\s),\,\widehat{TPR}_{2}(\s\cup\{k\})\big)\\
&+b_1b_2\text{Cov}\big(\widehat{TPR}_{1}(\s),\,\widehat{TPR}_{\mathcal{A}}(\s\cup\{k\})\big)-b_1b_2\text{Cov}\big(\widehat{TPR}_{2}(\s),\,\widehat{TPR}_{\mathcal{A}}(\s\cup\{k\})\big)\Big]\\ 
&+b_{s}^2\Big[b_2^2 Var\big(\widehat{TPR}_{\mathcal{A}}(\s)\big)-b_1^2 Var(\widehat{TPR}_{1}(\s))-b_1^2 Var(\widehat{TPR}_{2}(\s))\Big]\\
&=\sum_{\s}P(\mathcal{S})^2 \ b_{s+1}^2\Big[b_2^2 Var\big(\widehat{TPR}_{\mathcal{A}}(\s\cup\{k\})\big)-b_1^2 Var(\widehat{TPR}_{1}(\s\cup\{k\}))\\
&-b_1^2 Var(\widehat{TPR}_{2}(\s\cup\{k\})\Big]-b_{s+1}b_s\Big[2b_2^2 \text{Cov}\big(\widehat{TPR}_{\mathcal{A}}(\s\cup\{k\}),\,\widehat{TPR}_{\mathcal{A}}(\s)\big)\\
&-2b_1^2 \text{Cov}\big(\widehat{TPR}_{1}(\s\cup\{k\}),\,\widehat{TPR}_{1}(\s)\big)-2b_1^2\text{Cov}\big(\widehat{TPR}_{2}(\s\cup\{k\}),\,\widehat{TPR}_{2}(\s)\big)\Big]\\
&+b_{s}^2\Big[b_2^2 Var(\widehat{TPR}_{\mathcal{A}}(\s))-b_1^2 Var(\widehat{TPR}_{1}(\s))-b_1^2 Var(\widehat{TPR}_{2}(\s))\Big]
\end{align*}
The same kind of computation is used for $\sum_{\s}\sum_{T}P(\mathcal{S})P(T)\text{Cov}\big(\hat{W}_1(\mathcal{S}),\hat{W}_2(T)\big)$. Consequently, the null hypothesis is rejected at level $\alpha$ when $|Z_k|>z_{\alpha/2}$.

\end{proof}
\end{document}